%% file: main.tex
\documentclass{article}
\usepackage{CJKutf8}

\usepackage{PRIMEarxiv}

\usepackage[utf8]{inputenc}
\usepackage[T1]{fontenc}
\usepackage{hyperref}
\usepackage{url}
\usepackage{booktabs}
\usepackage{amsfonts}
\usepackage{amssymb}
\usepackage{nicefrac}
\usepackage{amsbsy}
\usepackage{microtype}
\usepackage{lipsum}
\usepackage{fancyhdr}
\usepackage{graphicx}
\graphicspath{{media/}{figures/}}
\usepackage{makecell}
\usepackage{subcaption}
\usepackage{comment}
\usepackage{float}
\usepackage{multirow}
\usepackage{listings}
\usepackage{color}
\usepackage{colortbl}

\usepackage{graphicx}
\usepackage{booktabs}
\usepackage[table]{xcolor}

\newcommand\YAMLcolonstyle{\color{red}\mdseries}
\newcommand\YAMLkeystyle{\color{black}\bfseries}
\newcommand\YAMLvaluestyle{\color{blue}\mdseries}

\makeatletter

\newcommand\language@yaml{yaml}

\newcommand{\Edc}{\frac{1}{(1{-}\gamma)\Gamma}\sum_{s\in\mathcal S[\mathcal X]} \frac{\Gamma(s)}{T_s}\sum_{a\in\mathcal X_s}}

\expandafter\expandafter\expandafter\lstdefinelanguage
\expandafter{\language@yaml}
{
  keywords={true,false,null,y,n},
  keywordstyle=\color{darkgray}\bfseries,
  basicstyle=\YAMLkeystyle,
  sensitive=false,
  comment=[l]{\#},
  morecomment=[s]{/*}{*/},
  commentstyle=\color{purple}\ttfamily,
  stringstyle=\YAMLvaluestyle\ttfamily,
  moredelim=[l][\color{orange}]{\&},
  moredelim=[l][\color{magenta}]{*},
  moredelim=**[il][\YAMLcolonstyle{:}\YAMLvaluestyle]{:},
  morestring=[b]',
  morestring=[b]",
  literate =    {---}{{\ProcessThreeDashes}}3
                {>}{{\textcolor{red}\textgreater}}1     
                {|}{{\textcolor{red}\textbar}}1 
                {\ -\ }{{\mdseries\ -\ }}3,
}

\usepackage{enumitem}
\usepackage{amsmath}
\usepackage{amsthm}
\usepackage{mathrsfs}
\newtheorem{theorem}{Theorem}[section]
\newtheorem{lemma}[theorem]{Lemma}
\newtheorem{proposition}[theorem]{Proposition}

\theoremstyle{definition}
\newtheorem{definition}[theorem]{Definition}
\theoremstyle{remark}
\newtheorem{remark}[theorem]{Remark}
\newtheorem{claim}[theorem]{Claim}
\newtheorem{property}[theorem]{Property}

\title{Fibration Policy Optimization
}

\author{
Chang Li$^{1}$ \quad Tshihao Tsu$^{2}$ \quad Yaren Zhang$^{2}$ \quad Chao Xue$^{1}$ \quad Xiaodong He$^{1}$ \\
$^{1}$JD Explore Academy \quad $^{2}$Carleton University \\
\texttt{\{lichang93, xuechao19, xiaodong.he\}@jd.com} \\
\texttt{\{zhihaoxu, yarenzhang\}@cmail.carleton.ca}
}

\begin{document}
\maketitle

\begin{abstract}
  Large language models are increasingly trained as heterogeneous
  systems spanning multiple domains, expert partitions, and
  agentic pipelines, yet prevalent proximal objectives operate at
  a single scale and lack a principled mechanism for coupling
  token-level, trajectory-level, and higher-level hierarchical
  stability control. To bridge this gap, we derive the
  \emph{Aggregational Policy Censoring Objective} (APC-Obj), the
  first exact unconstrained reformulation of sample-based
  TV-TRPO, establishing that clipping-based surrogate design and
  trust-region optimization are dual formulations of the same
  problem. Building on this foundation, we develop \emph{Fiber
    Bundle Gating} (FBG), an algebraic framework that organizes
  sampled RL data as a fiber bundle and decomposes ratio gating
  into a base-level gate on trajectory aggregates and a
  fiber-level gate on per-token residuals, with provable
  first-order agreement with the true RL objective near
  on-policy. From APC-Obj and FBG we derive
  Fibration Policy Optimization (or simply, FiberPO), a concrete
  objective whose Jacobian is block-diagonal over trajectories,
  reduces to identity at on-policy, and provides better update
  direction thus improving token efficiency. The compositional
  nature of the framework extends beyond the trajectory-token
  case: fibrations compose algebraically into a \emph{Fibration
    Gating Hierarchy} (FGH) that scales the same gating mechanism
  to arbitrary hierarchical depth without new primitives, as
  demonstrated by \emph{FiberPO-Domain}, a four-level
  instantiation with independent trust-region budgets at the
  domain, prompt group, trajectory, and token levels. Together,
  these results connect the trust-region theory, a compositional
  algebraic structure, and practical multi-scale stability
  control into a unified framework for LLM policy optimization.
\end{abstract}

\section{Introduction}
\label{sec:intro}

\input{rlvr_intro}
\input{rlvr_trpo}
\input{rlvr_acpo}
\input{rlvr_fiberpo}
\input{rlvr_fiberpo_sec3}

\input{rlvr_fiberpo_sec4}
\input{rlvr_conclusion}

\appendix
\newpage
\input{rlvr_appendix_notations}

\input{rlvr_appendix_trpo_acpo}
\input{rlvr_appendix_acpo_equiv}
\input{rlvr_appendix_fiberpo}

\bibliographystyle{unsrt}
\bibliography{references}

\end{document}

%% file: rlvr_intro.tex
Large language models are no longer single, monolithic policies:
they are increasingly deployed and trained as heterogeneous
systems—agentic pipelines spanning domains and tools,
mixture-of-experts (MoE) architectures with conditional routing,
and distributed/asynchronous training stacks where optimization
noise and data nonstationarity are structural rather than
incidental. In this regime, alignment via
RLHF~\cite{ouyang2022training} must simultaneously handle
multi-scale instability: token-level stochasticity,
trajectory-level drift, and system-level heterogeneity
(domains/experts/agents) interacting in the same update. Existing
PPO-style ``proximal''
objectives~\cite{schulman2017proximal,shao2024deepseekmath,yu2025dapo}
provide only coarse local controls (mostly per-token clipping)
and limited diagnostics when failures arise from global structure
(e.g., a drifting subset of trajectories, an expert partition, or
a domain slice). This motivates importing more expressive
mathematical structure, beyond new loss heuristics, to build
controllers that can allocate stability budgets across the
relevant global contexts, composably across arbitrarily many
hierarchical levels (tokens, trajectories, prompt groups,
domains etc). In this work, we develop Fiber Bundle Gating
(FBG), an algebraic framework grounded in fiber bundle theory, and
derive FiberPO-Trajectory (or simply, FiberPO) from it, a
concrete policy optimization objective that decomposes
trust-region maintenance into compositional global and local
components, providing multi-scale stability control with
first-order fidelity to the true RL objective near on-policy and
a restorative gradient structure rarely explored in existing
methods (see also ~\cite{wang2020truly}).

The development proceeds through four stages. We begin from
TRPO~\cite{schulman2015trust}, whose trust-region radius depends
on the discount factor $\gamma$. LLM RL effectively requires
$\gamma = 1$ (rewards are sparse and determined only at
completion), and we prove that both TV- and KL-based TRPO trust
regions then collapse to the reference policy, permitting only
trivial updates (Theorem~\ref{thm:trpo_vanishing}). This does not
preclude trust-region-style stabilization in the $\gamma = 1$
setting, but it shows that the classical radius from TRPO cannot
be used as-is, and a useful trust-region design must decouple the
trust-region maintaining mechanism from the particular radius prescribed by
the monotonic improvement guarantee.

To make this decoupling precise, we observe that a more
fundamental gap must first be addressed. Clipping-based
surrogates such as PPO~\cite{schulman2017proximal} were
originally motivated as tractable approximations to TRPO's
constraint, yet the precise relationship between ratio clipping
and trust-region maintenance remains insufficiently understood,
leaving open whether clipping merely imitates a trust region or
can exactly reproduce one. We attempt to close this gap by
deriving Aggregational Policy Censoring Objective (APC-Obj), an
unconstrained clipping-based surrogate whose per-action clip
bound $B_{s,a}$ decomposes the TV trust-region constraint into an
explicit, cross-action-coupled form. We prove a formal
equivalence (Theorem~\ref{thm:apc_obj_trpo_equiv}): under standard
function approximation assumptions, maximizing APC-Obj yields the
same policy update as sample-based TV-TRPO, establishing that
clipping-based surrogate design and trust-region policy
optimization are dual formulations of the same optimization
problem. Although APC-Obj itself yields trivial updates at
$\gamma{=}1$ (the same vanishing as TRPO), its contribution is
structural rather than algorithmic: it separates the trust-region
maintenance mechanism (cross-action-coupled clipping) from the
specific radius prescribed by the classical bound, so that the
mechanism remains well-defined at any $\delta > 0$. This
separation provides a reliable analytical anchor from which
PPO~\cite{schulman2017proximal},
GRPO~\cite{shao2024deepseekmath}, and GSPO~\cite{zheng2025group}
can each be formally derived via identified relaxation steps,
using a shared Ratio Gating Formalism (RGF), making each method's
departure from the trust-region optimum explicit and traceable.

The APC-Obj-based taxonomy reveals a structural gap shared by all
existing methods: token-wise objectives (PPO, GRPO) gate each
ratio independently without directly bounding trajectory-level
drift, while sequence-wise objectives (GSPO) collapse each
trajectory to a single aggregate, suppressing within-trajectory
variation. The relevant stability criterion, trajectory-level TV
divergence, is inherently global, yet gradient information is
inherently local. Few existing methods couple the two scales. To
close this gap, we propose Fiber Bundle Gating (FBG), an algebraic
framework that organizes sampled RLHF data as a fiber bundle,
with tokens as the total space and global contexts (trajectories,
partitioned into positive and negative drift channels) as the
base space. FBG decomposes ratio gating into several operations:
pushing token-level information to the base to form global
aggregates, applying a base-level gate to maintain a trust-region
budget at the context level, reflecting the gated signal back to
tokens via a Markov kernel, and gating fiber-level residuals to
preserve token-level gradients. We prove that this decomposition
preserves first-order agreement with the true RL objective near
on-policy whenever the atomic gates reduce to identity at the
reference point (Theorem~\ref{thm:fbg_first_order}). We note that
the fibration structure here is not an external import but is
inherent in the data: methods that aggregate per-token quantities
into context-level statistics implicitly invokes a pushforward
$\pi_{E*}$, and methods that distribute a context-level signal
back to individual tokens implicitly constructs a Markov
kernel~$K$. Making these operations explicit exposes the
reflecting condition $\pi_{E*} \circ K =
\mathrm{id}_{\mathbf{B}}$, a structural constraint ensuring that
global and local gates operate on orthogonal components without
double-counting (Appendix~\ref{app:fbg_transmit}). Any gating map
that avoids this double-counting while preserving first-order
agreement necessarily satisfies the reflecting condition, and
therefore implicitly re-derives the fibration decomposition.

Building on APC-Obj and FBG, we derive FiberPO by starting from a
$\delta$-relaxed APC-Obj formulation and decomposing its coupled
clipping constraint into two FBG components: a base-level
aggregate gate $g^{\rm agg}$ that allocates a global trust-region
budget $\delta$ across trajectory-level drift, and a fiber-level
$\operatorname{logclip}$ that bounds each token's residual
deviation by $\varepsilon$. This decomposition inherits the
trust-region mechanism of APC-Obj while providing independent
control at two scales, an explicit budget separation is rarely
explored in prior methods. The resulting Jacobian is
block-diagonal over trajectories, reduces to identity at
on-policy, and exhibits a restorative gradient in the rollback
regime that actively corrects trajectory drift. This restorative
property is absent in PPO, GRPO, and GSPO, which either zero the
gradient (PPO/GRPO clipping) or suppress it uniformly (GSPO
gating) when a trajectory drifts beyond the clipping boundary.

The hierarchy of domains, prompt groups, trajectories, and tokens
motivates a framework that scales beyond two levels. The fiber
bundle formalism naturally supports this extension. Because fibrations
compose algebraically, the same FBG gating mechanism extends to
deeper hierarchies by chaining fibrations into a Fibration Gating
Hierarchy (FGH). We demonstrate this by deriving FiberPO-Domain,
a four-level instantiation (domain, prompt group, trajectory,
token) that applies $g^{\rm agg}$ independently at each level of
the hierarchy, with the gated residual at each level capturing
only the deviation from the next-coarser aggregate. This
construction requires no new gating primitives: the same
algebraic decomposition that produces FiberPO at two levels
produces FiberPO-Domain at four, providing per-domain and
per-prompt-group trust-region control that is absent in all
existing methods. The compositionality of fibrations is what allows the framework
to scale to the hierarchical, multi-domain training regimes that
modern LLM systems increasingly demand.

In summary, this paper contributes:
\begin{itemize}
    \item Aggregational Policy Censoring Objective (APC-Obj), the
      first exact unconstrained reformulation of sample-based
      TV-TRPO, with a formal equivalence proof
      (Theorem~\ref{thm:apc_obj_trpo_equiv}). APC-Obj establishes
      clipping-based surrogate design and trust-region policy
      optimization as dual formulations of the same problem, and
      provides a reliable analytical anchor from which PPO, GRPO, and
      GSPO are each derived via identified relaxations.
    \item Fiber Bundle Gating (FBG) and its hierarchical
      generalization, Fibration Gating Hierarchy (FGH), a
      compositional algebraic framework that couples global
      (base) and local (fiber) stability control through
      density-based gating on a fiber bundle, with a first-order
      agreement guarantee near on-policy
      (Theorem~\ref{thm:fbg_first_order}). The algebraic
      compositionality of fibrations allows the same construction
      to extend to arbitrary hierarchical depth without
      introducing new gating primitives.
    \item FiberPO-Trajectory, a concrete FBG instantiation
      (trajectory, token) derived from APC-Obj that decomposes
      trust-region control into a base-level aggregate gate
      (budget $\delta$) and a fiber-level residual gate (budget
      $\varepsilon$), with a block-diagonal, restorative Jacobian
      structure.
    \item FiberPO-Domain, a four-level FGH instantiation (domain,
      prompt group, trajectory, token) that extends
      FiberPO-Trajectory to multi-domain training, providing
      independent trust-region budgets at each hierarchical
      level.
\end{itemize}

%% file: rlvr_trpo.tex
\section{TRPO and the Discount Factor Obstruction}
\label{subsec:trpo}

The monotonic-improvement framework of Kakade and Langford~\cite{kakade2002approximately}, refined by TRPO~\cite{schulman2015trust}, remains the primary theoretical foundation for trust-region policy optimization.
TRPO guarantees monotonic improvement by constraining each policy update to a trust region defined via TV or KL divergence, with a trust-region radius governed by the discount factor $\gamma$ through a surrogate-gap bound.
In this section, we show that this $\gamma$-dependence creates a fundamental obstruction for episodic LLM RL, where the effective discount factor is $\gamma = 1$.
The classical surrogate-gap bound (Appendix~\ref{app:tv_trpo}) shows that the gap between the true RL objective $J(\theta)$ and its linear surrogate $J^{(1)}(\theta\,|\,\theta_{\rm old})$ satisfies
\begin{equation}\label{eq:trpo_bound}
    J(\theta) - J^{(1)}(\theta\,|\,\theta_{\rm old})
    \;\geq\;
    -\,\frac{4\gamma\,\|A^{(\theta_{\rm old})}_\bullet\|_\infty}{(1-\gamma)^2}\;
    D_{\rm TV}^{\max}(\theta\,\|\,\theta_{\rm old})^2,
\end{equation}
which confines TRPO updates to a trust region of radius $\delta^{(\rm TRPO)} = (1{-}\gamma)/(8\gamma)$ (Lemma~\ref{lem:trpo_update_bounded}).
In LLM RL, later tokens must not be discounted, since the correctness of a response is often determined only at completion, so the training objective effectively requires $\gamma = 1$.
Inspecting the bound~\eqref{eq:trpo_bound}, the penalty coefficient $\gamma/(1-\gamma)^2$ diverges as $\gamma \to 1$, forcing $\delta^{(\rm TRPO)} \to 0$ to maintain the guarantee.
We formalize this observation as a vanishing theorem:

\begin{theorem}[TRPO vanishing theorem]\label{thm:trpo_vanishing}
When $\gamma = 1$, both the TV-based and KL-based TRPO trust
regions collapse to the reference policy (Definition~\ref{def:trust_region}):
\[
    \mathcal{B}^{\rm TV\text{-}TR}_{\delta^{(\rm TRPO)}}(\theta_{\rm old})
    \;=\;
    \mathcal{B}^{\rm KL\text{-}TR}_{\delta^{(\rm TRPO)}}(\theta_{\rm old})
    \;=\;
    \{\,\pi_{\theta_{\rm old}}\,\}.
\]
Consequently, the only policy update permitted by TV-based TRPO's
 trust region at $\gamma=1$ is the trivial one:
$\pi_{\theta_{new}}=\pi_{\theta_{old}}$.
\end{theorem}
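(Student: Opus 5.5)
The plan is to work directly from the definitions: Definition~\ref{def:trust_region} for the two trust regions, and Lemma~\ref{lem:trpo_update_bounded} for the radius $\delta^{(\rm TRPO)} = (1-\gamma)/(8\gamma)$. I take $\mathcal{B}^{\rm TV\text{-}TR}_\delta(\theta_{\rm old})$ to be the set of policies $\pi$ with $D_{\rm TV}^{\max}(\pi\,\|\,\pi_{\theta_{\rm old}}) \le \delta$. I take the KL version to be the analogous set with $D_{\rm KL}^{\max}$ in place of $D^{\max}_{\rm TV}$, possibly with a transformed radius such as $2\delta^2$ via Pinsker; this depends on the exact form of Definition~\ref{def:trust_region}.

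\textbf{Step 1: the radius.} Substituting $\gamma = 1$ into $\delta^{(\rm TRPO)} = (1-\gamma)/(8\gamma)$ gives $\delta^{(\rm TRPO)} = 0$. For the KL radius (e.g.\ $2(\delta^{(\rm TRPO)})^2$ or an analogous expression) this also gives $0$. If the definition is instead phrased through the limit $\gamma \to 1$, I would note that the radius decreases monotonically to $0$. The trust region at $\gamma = 1$ is then the intersection over all positive radii, which yields the same set as radius zero.

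\textbf{Step 2: radius zero forces equality.} For TV, $D_{\rm TV}^{\max}(\pi\,\|\,\pi_{\rm old}) = \sup_s D_{\rm TV}(\pi(\cdot|s),\pi_{\rm old}(\cdot|s)) \le 0$. Since TV is a metric on distributions, $\pi(\cdot|s) = \pi_{\rm old}(\cdot|s)$ for every $s$, so $\pi = \pi_{\rm old}$. For KL, $D_{\rm KL}(\pi_{\rm old}(\cdot|s)\,\|\,\pi(\cdot|s)) \le 0$ forces equality by Gibbs' inequality (nonnegativity, with equality iff the distributions coincide). Alternatively, I would apply Pinsker, $D_{\rm TV} \le \sqrt{D_{\rm KL}/2}$, and reduce to the TV case. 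For the reverse inclusion, $\pi_{\rm old}$ itself lies in both sets because both divergences vanish at identity. Hence both trust regions equal $\{\pi_{\theta_{\rm old}}\}$.

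\textbf{Step 3: the consequence.} TV-TRPO maximizes the surrogate over $\mathcal{B}^{\rm TV\text{-}TR}_{\delta^{(\rm TRPO)}}(\theta_{\rm old})$, which is the singleton $\{\pi_{\theta_{\rm old}}\}$. The maximizer is therefore $\pi_{\theta_{\rm new}} = \pi_{\theta_{\rm old}}$. Equality here is at the level of policies, not parameters.

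\textbf{Main obstacle.} The argument is elementary once the definitions are fixed. The only subtle point is the treatment of $\sup_s$ over states. If the maximum ranges only over states reachable or visited under $\pi_{\rm old}$, equality holds only on that support. In that case I would either state the conclusion as equality of the induced trajectory distributions, or invoke the convention in Definition~\ref{def:trust_region} that the supremum is over all states. A secondary issue is the form of the KL radius: if Definition~\ref{def:trust_region} uses a different scaling, I would check that the radius is still a continuous function of $\delta^{(\rm TRPO)}$ that vanishes at $0$.
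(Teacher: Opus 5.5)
Your proposal is correct and matches the paper's proof: at $\gamma=1$ the radius $\delta^{(\rm TRPO)}=(1-\gamma)/(8\gamma)$ equals $0$, so $D^{\max}_{\rm TV}$ (resp.\ $D^{\max}_{\rm KL}$) must vanish, which forces $\pi(\cdot|s)=\pi_{\theta_{\rm old}}(\cdot|s)$ for every $s$; your Gibbs-inequality and reverse-inclusion remarks only make explicit what the paper leaves implicit. Your contingencies do not arise under Definition~\ref{def:trust_region}, which uses the same radius $\delta$ for the KL region (no Pinsker rescaling) and takes $D^{\max}$ over all of $\mathcal{S}$. For the consequence, note that the paper's TV-TRPO is a penalized rather than constrained problem, and it is Lemma~\ref{lem:trpo_update_bounded} that places its update inside the trust region.
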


\noindent Proof: see Appendix~\ref{app:vanishing_proof}.

Theorem~\ref{thm:trpo_vanishing} does not imply that
trust-region-style stabilization is impossible in the $\gamma = 1$
setting. It shows that the specific radius from TRPO's discounted
analysis cannot be used as-is. A natural workaround, annealing
$\gamma < 1$ toward $1$ during training~\cite{kim2022adaptive},
is impractical for LLMs, whose rewards are typically sparse and
all-or-nothing, making discounted reward estimation unstable (see
Appendix~\ref{app:gamma_annealing} for discussion). Importantly,
$\delta^{(\rm TRPO)} = 0$ at $\gamma = 1$ does not preclude the
existence of a useful TV bound $\delta > 0$ that controls update
stability in the $\gamma = 1$ case. In
Section~\ref{subsec:rgf_apc_obj}, we provide both theoretical and
empirical evidence for such a bound through the lens of
GSPO~\cite{zheng2025group}. The vanishing theorem thus motivates
relaxing $\delta$ into a tunable hyperparameter while retaining
the trust-region structure as a design guide for objective functions. This relaxed
formulation is the common starting point for
PPO~\cite{schulman2017proximal},
GRPO~\cite{shao2024deepseekmath}, GSPO~\cite{zheng2025group}, and
our FiberPO, as we shall show.

\noindent
To make the connection between
TRPO's trust-region mechanism and these practical methods
precise, we seek an unconstrained reformulation of sample-based
TV-TRPO (Appendix~\ref{app:tv_trpo}) that admits an explicit
Ratio Gating Formalism (RGF, Section~\ref{subsubsec:rgf}).
Because PPO, GRPO, and GSPO each also admit RGF representations,
casting TRPO into the same language will let us identify the
exact relaxation steps that transform a trust-region-enforced
source objective into each practical method.

%% file: rlvr_acpo.tex
\section{Ratio Gating Formalism and APC-Obj}
\label{subsec:rgf_apc_obj}

The vanishing theorem (Theorem~\ref{thm:trpo_vanishing}) shows
that TRPO's trust-region radius collapses at $\gamma=1$, but the
trust-region structure itself remains a valuable design guide. To
retain this structure, we first introduce the Ratio Gating
Formalism (RGF), a unified framework that captures a broad family
of proximal objectives by concentrating algorithmic choices into
a single ratio gating map. We then use RGF to reformulate sample-based TV-TRPO (Appendix~\ref{app:tv_trpo})
as an unconstrained clipping-based surrogate, Aggregational Censoring Policy
Optimization (APC-Obj), which enforces
the same policy update as the sample-based TV-TRPO objective (Theorem~\ref{thm:apc_obj_trpo_equiv}; see Appendix~\ref{app:apc_obj_equiv} for the full proof).
The motivation for casting this TRPO-equivalent objective
specifically in RGF form is that PPO, GRPO, and GSPO each also
admit explicit RGF representations
(Appendix~\ref{app:xxpo_rgf}). Expressing all methods in this
shared language makes the relationship between the
trust-region-enforced source (APC-Obj) and each practical method more transparent.
Starting from this
trust-region-enforced source objective, we show that PPO, GRPO,
and GSPO each arise through identifiable relaxations that
preserve the clipping structure and thereby inherit (partially) APC-Obj's
TV-penalty mechanism. We further establish that GSPO's clipping
implicitly maintains a TV-like trust region, providing empirical
evidence that a useful bound of TV distance exists even at
$\gamma=1$.

\subsection{The Ratio Gating Formalism}
\label{subsubsec:rgf}

\begin{definition}[Ratio Gating Formalism (RGF)]\label{def:rgf}
An RGF surrogate objective takes the form
\begin{equation}\label{eq:rgf}
    \hat{J}(\theta\,|\,\theta_{\rm old})
    \;=\;
    \sum_{(s,a,\mathcal{I})\,\in\,\mathcal{E}}
    \mu_{s,a,\mathcal{I}}\;
    \mathcal{G}\!\bigl(r_\bullet(\theta\,|\,\theta_{\rm old})\bigr)_{s,a,\mathcal{I}}\;
    \hat{A}^{\theta_{\rm old}}_{s,a},
\end{equation}
where:
\begin{itemize}
    \item $\mathcal{E}$ is the sampled index set of state--action pairs augmented with extra information $\mathcal{I}$ (e.g., trajectory membership, timestep);
    \item $\mu_{s,a,\mathcal{I}}$ are predetermined aggregation weights;
    \item $r_\bullet(\theta\,|\,\theta_{\rm old}) = \{r_{s,a}\}$ with $r_{s,a} := \pi_\theta(a\,|\,s)\,/\,\pi_{\theta_{\rm old}}(a\,|\,s)$ are the importance-sampling ratios;
    \item $\mathcal{G}: \mathbb{R}_{>0}^{\mathcal{E}} \to \mathbb{R}_{>0}^{\mathcal{E}}$ is the \textbf{ratio gating map}, the central design object that transforms ratio tuples into gated ratio tuples;
    \item $\hat{A}^{\theta_{\rm old}}_{s,a}$ is an estimate of the advantage function under $\pi_{\theta_{\rm old}}$.
\end{itemize}
\end{definition}

\noindent
RGF organizes a wide family of surrogate objectives, including
PPO~\cite{schulman2017proximal},
GRPO~\cite{shao2024deepseekmath}, GSPO~\cite{zheng2025group},
APC-Obj, and FiberPO.
Because all these methods share the RGF form, the differences
between them reduce entirely to their choice of
$(\mathcal{E},\,\mu,\,\mathcal{G})$. In particular, deriving a
trust-region-equivalent surrogate (APC-Obj) in RGF form lets us
identify, for each practical method, the precise relaxation steps
that transform the trust-region-enforced APC-Obj gating into the
method's own gating map. 
Appendix~\ref{app:xxpo_rgf} gives the explicit
$(\mathcal{E},\,\mu,\,\mathcal{G})$ specifications for each
method.

\subsection{Aggregational Policy Censoring Objective (APC-Obj)}
\label{subsubsec:apc_obj}

We now construct a concrete RGF instance that is provably equivalent to sample-based TV-TRPO (Appendix~\ref{app:tv_trpo}).
Aggregational Policy Censoring Objective (APC-Obj) is obtained by solving the sample-based surrogate-gap objective~\eqref{eq:trpo_bound} in closed form, yielding an unconstrained clipping-based surrogate whose clip bounds explicitly allocate the TV trust-region budget $T_s\delta^{(\rm APC\text{-}Obj)}$ in a cross-action-coupled form.

\begin{definition}[Aggregational Policy Censoring Objective, APC-Obj policy iteration\footnotemark]\label{def:apc_obj_main}
The APC-Obj update selects
$\theta_{\rm new} = \arg\max_\theta\,
  \hat{J}^{\rm APC\text{-}Obj}(\theta|\theta_{\rm old})$, where
\begin{equation}\label{eq:apc_obj_main_obj}
    \hat{J}^{\rm APC\text{-}Obj}(\theta|\theta_{\rm old})
    \;=\;
    \frac{1}{T}\sum_{(s,a,\tau,t) \in \bar{\mathcal{X}}}
    \left[
        \operatorname{clip}\!\Bigl(r_{s,a}-1,\;
        T_s\delta^{(\rm APC\text{-}Obj)}
        - \!\!\!\!\!\!
        \sum_{\substack{(s,a',\tau',t') \in \bar{\mathcal{X}}_s \\
          (a',\tau',t') \neq (a,\tau,t)}}
        \!\!\!\!\!\!|r_{s,a'}-1|\Bigr)
        \;\hat{A}^{\theta_{\rm old}}_{s,a}
        \;+\; \hat{A}^{\theta_{\rm old}}_{s,a}
    \right].
\end{equation}
\end{definition}
\footnotetext{The APC-Obj objective can equivalently be written in the RGF form; see Appendix~\ref{app:apc_obj_rgf}.}

\noindent
$\mathcal{X}$ denotes the set of all state--action pairs in the sampled trajectories, with $n_{s,a}$ the sample
multiplicity of pair $(s,a)$.  For each state
$s \in \mathcal{S}[\mathcal{X}]$, let
$\mathcal{X}_s := \{a \in \mathcal{A} \mid (s,a) \in \mathcal{X}\}$
be the set of actions co-occurring with $s$.  Define
$T := \sum_{(s,a) \in \mathcal{X}} n_{s,a}$ (total sample count) and
$T_s := \sum_{a \in \mathcal{X}_s} n_{s,a}$ (per-state count).
We also denote by $\bar{\mathcal{X}}$ the augmented space
$\{(s,a,\tau,t)\}$ of all sampled state--action pairs augmented with
trajectory membership $\tau$ and time step $t$. Each element of
$\bar{\mathcal{X}}$ is a distinct sampled token, so
$|\bar{\mathcal{X}}| = T$ and $|\bar{\mathcal{X}}_s| = T_s$.
$\bar{\mathcal{X}}$ is an example of an augmented index set $\mathcal{E}$ used in
Definition~\ref{def:rgf}. $\delta^{(\rm APC\text{-}Obj)} :=
  \frac{(1-\gamma)^2}{8\gamma\,\|\hat{A}^{\theta_{\rm old}}_\bullet\|_\infty}
  \,\frac{M(\hat{A}^{\theta_{\rm old}}_\bullet)}{T}$
(the argument $\hat{A}^{\theta_{\rm old}}_\bullet$ makes the dependence on the advantage explicit),
the per-entry clip bound is
$B_{s,a,\tau,t} := T_s\delta^{(\rm APC\text{-}Obj)} - \sum_{\substack{(s,a',\tau',t') \in \bar{\mathcal{X}}_s,\, (a',\tau',t') \neq (a,\tau,t)}} |r_{s,a'}-1|$,
and $\operatorname{clip}(a, B) := \operatorname{clip}(a, -B^+, B^+)$
with $B^+ := \max(B, 0)$. $M$ depends on $\hat{A}^{\theta_{\rm old}}$ and is bounded by $\|\hat{A}^{\theta_{\rm old}}_\bullet\|_\infty$, see Lemma~\ref{lem:M_bound}. See Appendix~\ref{app:apc_obj_rgf}  and Appendix~\ref{app:apc_obj_equiv} for more details.

\noindent

\noindent
The per-entry clip bound $B_{s,a,\tau,t}$ can be interpreted as:
the per-state trust-region budget $T_s\delta^{(\rm APC\text{-}Obj)}$ minus the
TV contribution already consumed by all other entries at the same
state.  This cross-entry coupling is what enforces a per-state TV
constraint in aggregate, rather than merely bounding each token
independently.
When the remaining budget is exhausted (i.e.\ $B_{s,a,\tau,t} \leq 0$),
the deviation $r_{s,a}-1$ is
clipped to zero, censoring that entry's contribution to the
surrogate.

Despite their superficially different structures (APC-Obj applies
per-token clipping with cross-action coupling, while sample-based TV-TRPO
maximizes a globally penalized objective), the two algorithms
produce the same policy update.

\begin{theorem}[Sample-based TV-TRPO and APC-Obj equivalence; restatement of Theorem~\ref{thm:apc_obj_trpo_equiv}]\label{thm:apc_obj_trpo_equiv_main}
  Suppose $\theta$ satisfies standard function approximation
  assumptions, APC-Obj and sample-based TV-TRPO
  (Definition~\ref{def:sample_tv_trpo}) produce the same policy
  update:
\[
    \pi_{\theta_{\rm new}^{(\rm APC\text{-}Obj)}} \;=\; \pi_{\theta_{\rm new}^{(\rm TV\text{-}TRPO)}}.
\]
\end{theorem}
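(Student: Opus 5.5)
The plan is to show that, under the assumption that the policy class can realize any per-state action distribution supported on the sampled actions (the ``standard function approximation assumption''), both objectives become optimization problems over the finite-dimensional vector of ratio deviations $x_{s,a} := r_{s,a}-1$. I will then show that the two problems have the same set of maximizers in $x$; equality of the induced policies $\pi_\theta(\cdot\,|\,s)$ on $\mathcal{S}[\mathcal{X}]$ follows. First I write sample-based TV-TRPO (Definition~\ref{def:sample_tv_trpo}) as maximizing the linear surrogate $\frac1T\sum_{\bar{\mathcal X}} (1+x_{s,a})\hat A_{s,a}$ minus the sample-based surrogate-gap penalty from \eqref{eq:trpo_bound}. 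The penalty has coefficient $\frac{4\gamma\|\hat A\|_\infty}{(1-\gamma)^2}$ times a squared per-state TV term $\bigl(\tfrac{1}{T_s}\sum_{\bar{\mathcal X}_s}|x_{s,a}|\bigr)^2$, or its max over $s$. Second, I use the argument of Lemma~\ref{lem:trpo_update_bounded} to turn the penalized form into an equivalent constrained form. The maximizer necessarily satisfies the per-state budget $\sum_{\bar{\mathcal X}_s}|x_{s,a}|\le T_s\delta^{(\rm APC\text{-}Obj)}$. This is where the constant $\delta^{(\rm APC\text{-}Obj)}$, including the factor $M(\hat A)/T$, comes from: by Lemma~\ref{lem:M_bound}, $M$ bounds the marginal linear gain per unit of TV, and matching that against the derivative of the quadratic penalty gives the radius. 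Third, I reduce to a problem that decouples across states: maximize a linear functional of $x$ over an $\ell_1$-ball of radius $T_s\delta$, subject to the simplex normalization $\sum_a \pi_{\rm old}(a|s)\,x_{s,a}=0$ inherited from the policy.

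Next I analyze APC-Obj as a function of $x$. The key observation is that $\operatorname{clip}(x_{s,a}, B_{s,a,\tau,t})$ equals $x_{s,a}$ exactly when the total $\ell_1$ deviation at state $s$ is within the budget $T_s\delta$. When the budget is exceeded, each entry's contribution saturates at the remaining slack, so pushing any coordinate beyond the budget adds nothing to the objective. I will show two things. (i) On the feasible $\ell_1$-ball, $\hat J^{\rm APC\text{-}Obj}$ coincides with the linear surrogate. (ii) For any infeasible $x$, there is a feasible $x'$ (obtained by radially shrinking the per-state deviation onto the boundary) with $\hat J^{\rm APC\text{-}Obj}(x')\ge \hat J^{\rm APC\text{-}Obj}(x)$. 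Together, these give $\sup$ APC-Obj $=$ $\max$ of the linear surrogate over the ball $=$ the TV-TRPO optimum. I will then argue that the argmax sets match, or at least that any APC-Obj maximizer can be taken in the ball and coincides with the TRPO maximizer (the ``censoring'' makes infeasible points non-strict improvements).

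The main obstacle is step (ii): showing that clipped values outside the ball never exceed the constrained linear maximum. The bound $B$ subtracts the \emph{unclipped} deviations of the other entries, so with signed advantages the clipped terms can interact. My first attempt will be an entrywise inequality: for each state, $\sum_{a}\operatorname{clip}(x_a,B_a)\hat A_a \le \max_{\|y\|_1\le T_s\delta}\sum_a y_a\hat A_a = T_s\delta\max_a|\hat A_a|$. This inequality holds because, whenever the budget is violated, every $B_a^+$ is at most $T_s\delta-\sum_{a'\neq a}|x_{a'}|$; these slacks sum to at most $T_s\delta$ in total when only one entry is nonzero, and are zero for all entries once at least two exceed the budget. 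A secondary delicacy is reconciling the simplex normalization and the role of $M$ with the plain $\ell_1$ ball. For that I plan to rely on the paper's function approximation assumption to absorb normalization via unsampled actions, and on Lemma~\ref{lem:M_bound} for the constant.
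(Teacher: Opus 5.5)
Step (ii) of your plan does not go through, and the obstruction is the probability constraint you defer to the end. In the paper's setting the sample-based normalization $\sum_{a\in\mathcal X_s} n_{s,a}\Delta_{s,a}=0$ is assumed to hold exactly. Both the TV-TRPO maximizer and $\delta^{(\rm APC\text{-}Obj)}$ (through $M$) are defined relative to it. The per-state optimum over the normalized ball of radius $\delta$ is $\delta M(s)$, with $M(s)=\tfrac{T_s}{2}\bigl(\max_a\hat A_{s,a}-\min_a\hat A_{s,a}\bigr)$.

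Your entrywise bound does not reach this target. Correctly stated, it is $\sum_a n_{s,a}\,|\operatorname{clip}(x_{s,a},B_{s,a})|\le T_s\delta$; your case split misses violations where no single entry exceeds $T_s\delta$. Even in that form it only compares the censored value with the \emph{unnormalized} maximum $T_s\delta\max_a|\hat A_{s,a}|$. That is strictly larger unless the advantages at $s$ are symmetric: with $n\equiv 1$ and $\hat A_{s,\bullet}=(2,-1,-1)$ it gives $6\delta$ against the target $4.5\delta$. So it cannot exclude a maximizer outside the trust region.

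Radial shrinking fails outright. Take $n\equiv1$, $\hat A_{s,\bullet}=(1,-1,0)$, $x_{s,\bullet}=(1.2,0.9,-2.1)$ and $T_s\delta=3.3$. The clipped vector is $(0.3,0,-1.2)$ with value $0.3$, while the radially shrunk point has value $0.3\cdot 3.3/4.2<0.3$.

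Absorbing the normalization through unsampled actions is not available either. It contradicts the standing assumption. It would also change the per-state value to $T_s t\max_a|\hat A_{s,a}|$, so the radius you obtain would no longer be the paper's $\delta^{(\rm APC\text{-}Obj)}$ built from $M$.

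What is missing is the paper's two-part mechanism.
\begin{itemize}
\item Advantage centering makes it free to subtract the per-state mean $\mathcal C(\Delta)_s$ of the clipped vector. So APC-Obj at state $s$ equals the linear functional evaluated at $\operatorname{proj}(\mathcal G(\Delta))$, which does satisfy the probability constraint (Lemma~\ref{lem:apc_obj_equiv_surrogate} and Eq.~\eqref{eq:apc_obj_proj_form}).
\item The retraction property (Lemma~\ref{lem:retraction}): if $\hat D_{\rm TV}(s,\Delta)>\delta$, then $\hat D_{\rm TV}\bigl(s,\operatorname{proj}(\mathcal G(\Delta))\bigr)<\delta$ strictly. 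This does not follow from an $\ell_1$ bound on the clipped vector, since centering can nearly double the $\ell_1$ norm. It uses that $\Delta$ itself was centered, and that the clip shrinks every surviving entry by the same amount $T_s(\hat D_{\rm TV}(s,\Delta)-\delta)$.
\end{itemize}
Per-state linear scaling then bounds the value at $s$ by $\hat D_{\rm TV}\bigl(s,\operatorname{proj}(\mathcal G(\Delta))\bigr)M(s)<\delta M(s)$. Out-of-region points are therefore strictly dominated whenever $M(s)>0$, which also settles your concern about non-strict ties.

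On the TRPO side you likewise need the exact maximizer, Theorem~\ref{thm:tv_trpo_max}: per-state TV equalizes across states with $M(s)>0$, the linear term is exactly proportional to $tM$, and maximizing the concave quadratic in $t$ gives $t^*$. Lemma~\ref{lem:M_bound} is only an upper bound, used for $\delta^{(\rm APC\text{-}Obj)}\le\delta^{(\rm TRPO)}$. Lemma~\ref{lem:trpo_update_bounded} only places the update in the larger ball of radius $\delta^{(\rm TRPO)}$. Neither pins down the radius.
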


The proof (Appendix~\ref{app:apc_obj_equiv}) proceeds in three
stages: (i)~we derive the maximizer of sample-based TV-TRPO
(Theorem~\ref{thm:tv_trpo_max}), showing that the optimal ratio
deviation is a scalar multiple of the per-state unit-TV maximizer,
(ii)~we establish a retraction property
(Lemma~\ref{lem:retraction}), showing that APC-Obj's clipping
mechanism projects any trust-region violation strictly back
inside the feasible set, and (iii)~we combine these to show that
the APC-Obj maximizer must lie inside the trust region, at which
point its objective reduces to the same linear program solved by
sample-based TV-TRPO. \qed

\begin{remark}[APC-Obj as the structural bridge for $\delta$-relaxation]\label{rem:trust_region_design}
The vanishing theorem (Theorem~\ref{thm:trpo_vanishing}) establishes that $\delta^{(\rm TRPO)} = \delta^{(\rm APC\text{-}Obj)} = 0$ at $\gamma=1$, rendering both TRPO and APC-Obj trivial as deployed algorithms.
However, APC-Obj's significance is structural rather than algorithmic: its RGF form cleanly separates the clipping mechanism (cross-action-coupled clipping, which remains well-defined for any $\delta > 0$) from the specific trust-region enforcing \emph{radius} $\delta^{(\rm APC\text{-}Obj)}$ inherited from the TRPO's strict stability requirement.
This separation is what makes $\delta$-relaxation---replacing the vanishing $\delta^{(\rm APC\text{-}Obj)}$ with a positive tunable hyperparameter---a precise, traceable operation rather than an ad hoc modification.
The GSPO analysis below (Claim~\ref{claim:gspo_tv}, Remark~\ref{rem:gspo_delta}) then provides the evidence that a specific positive $\delta$ controls policy drift at $\gamma=1$, supporting the motivation for relaxation.
APC-Obj therefore serves as a \emph{design guide} for surrogate construction: PPO~\cite{schulman2017proximal}, GRPO~\cite{shao2024deepseekmath}, and GSPO~\cite{zheng2025group} can each be recovered from APC-Obj through identifiable relaxation steps (Appendix~\ref{app:xxpo_derivation}).
\end{remark}

\noindent
Among these relaxations, GSPO is of particular interest because it directly addresses the open question raised by the vanishing theorem: \emph{does a positive  TV trust-region radius $\delta > 0$ effectively maintain stability in training at $\gamma = 1$?}
GSPO replaces $\delta^{\rm (APC-Obj)}$ with a positive tunable hyperparameter $\epsilon^{\rm (GSPO)}$ and introduces a direct connection between its clipping threshold and the TV trust-region radius, providing both theoretical and empirical evidence for an affirmative answer.

\begin{claim}\label{claim:gspo_tv}
GSPO (without minimization function) maintains approximately an average total-variation distance bound at the trajectory level with $\bar{D}_{\rm TV}^{\rm (tj)} \leq \delta \sim \sqrt{\epsilon^{\rm (GSPO)}/2}$. For definition of $\bar{D}_{\rm TV}^{\rm (tj)}$ see Appendix~\ref{app:divergences}.
\end{claim}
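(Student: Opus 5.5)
The plan is to relate GSPO's sequence-level ratio to a trajectory-level divergence and then convert that divergence into a TV bound via Pinsker's inequality.

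\textbf{Setup.} Recall that GSPO (without the $\min$) gates each trajectory $\tau$ of length $|\tau|$ through the length-normalized sequence ratio
\[
s_\tau(\theta) = \Bigl(\tfrac{\pi_\theta(\tau)}{\pi_{\theta_{\rm old}}(\tau)}\Bigr)^{1/|\tau|} = \exp\Bigl(\tfrac{1}{|\tau|}\textstyle\sum_t \log r_{s_t,a_t}\Bigr),
\]
clipped to $[1-\epsilon^{(\rm GSPO)},\,1+\epsilon^{(\rm GSPO)}]$. Because of the clipping, gradient contributions vanish once $|s_\tau-1|>\epsilon^{(\rm GSPO)}$. So, to first order, the iterates we care about satisfy $|\log s_\tau| \lesssim \epsilon^{(\rm GSPO)}$ for the sampled trajectories. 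I will treat this ``effective constraint'' as the working hypothesis; this is why the claim is only approximate.

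\textbf{Step 1: from clipped ratios to a per-token KL.} Take the expectation over $\tau\sim\pi_{\theta_{\rm old}}$:
\[
\mathbb E_\tau\Bigl[-\tfrac{1}{|\tau|}\log\tfrac{\pi_\theta(\tau)}{\pi_{\theta_{\rm old}}(\tau)}\Bigr]
\]
is the length-normalized trajectory KL $D_{\rm KL}(\pi_{\theta_{\rm old}}\|\pi_\theta)/|\tau|$, i.e.\ an average per-token KL along the old policy's visitation. Bounding $|\log s_\tau|\le \epsilon^{(\rm GSPO)}$ pointwise (more precisely $-\log(1-\epsilon)\approx\epsilon$) then bounds this average per-token KL by roughly $\epsilon^{(\rm GSPO)}$.

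\textbf{Step 2: Pinsker.} For each state, Pinsker's inequality gives $D_{\rm TV}(\pi_{\theta_{\rm old}}(\cdot|s),\pi_\theta(\cdot|s))\le\sqrt{D_{\rm KL}/2}$. Averaging over tokens and applying Jensen (concavity of $\sqrt{\cdot}$) yields
\[
\bar D^{(\rm tj)}_{\rm TV}\le\sqrt{\bar D_{\rm KL}/2}\lesssim\sqrt{\epsilon^{(\rm GSPO)}/2},
\]
matching the definition of $\bar D^{(\rm tj)}_{\rm TV}$ in Appendix~\ref{app:divergences} as a length-averaged per-token TV. If that definition is instead a TV between normalized trajectory distributions, I would apply Pinsker directly to the normalized KL.

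\textbf{Main obstacle.} The hardest part is justifying Step 1 honestly. A clip bound on the sampled log-ratio average does not bound the expected KL exactly. Single-sample log-ratios can cancel in sign, and KL is the expectation of $-\log r$ only under $\pi_{\theta_{\rm old}}$; for unsampled actions one must use $\mathbb E[r-1-\log r]\ge0$. I would handle this with a second-order expansion near on-policy: write $\log r = (r-1) - \tfrac12(r-1)^2 + O(|r-1|^3)$ and use $\mathbb E_{\pi_{\theta_{\rm old}}}[r-1]=0$. Then argue that the clip keeps $\mathbb E[(r-1)^2]$ of order $\epsilon$, which gives KL $\approx\tfrac12\chi^2\lesssim\epsilon$. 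The leftover gap between sample-level and expectation-level control is exactly why the claim is stated with ``approximately'' and ``$\sim$''. I would also support it with the empirical evidence referenced in Remark~\ref{rem:gspo_delta}.
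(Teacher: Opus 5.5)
Your proposal is correct and follows essentially the same route as the paper: identify $-\log s_\tau$ as the single-trajectory Monte Carlo estimate $\hat{\bar D}^{(\tau)}_{\rm KL}$ of the trajectory-normalized KL, and use the clip to get $\bar D^{(\rm tj)}_{\rm KL}\approx\hat{\bar D}^{(\tau)}_{\rm KL}\le-\log(1-\epsilon)\approx\epsilon$. Then combine per-state Pinsker with Jensen (your concavity-of-$\sqrt{\cdot}$ step is the paper's $\mathbb E[X]^2\le\mathbb E[X^2]$) to conclude $\bar D^{(\rm tj)}_{\rm TV}\lesssim\sqrt{\epsilon/2}$. The only difference is how the sample-versus-expectation gap you flag is bridged: the paper uses a law-of-large-numbers heuristic, treating the $T_\tau$ tokens of a trajectory as many samples from $\bar\rho^{\pi_{\theta_{\rm old}}}_{\rm (tj)}\cdot\pi_{\theta_{\rm old}}$, rather than your second-order $\chi^2$ expansion.
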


\noindent Proof: see Appendix~\ref{app:gspo_tv_proof}. \qed

\begin{remark}\label{rem:gspo_delta}
In practice, GSPO uses $\epsilon^{\rm (GSPO)} \approx 4 \times 10^{-4}$~\cite{zheng2025group}, which translates to $\delta \approx \sqrt{2 \times 10^{-4}} \approx 0.014$.
GSPO runs stably at this scale, providing empirical evidence that a TV trust region with $\delta \sim 10^{-2}$ remains effective for maintaining stability in LLM RL at $\gamma = 1$.
This confirms that $\mathcal{B}^{\rm TV\text{-}TR}_\delta$ is a meaningful design criterion even when TRPO's own radius has vanished: the vanishing theorem (Theorem~\ref{thm:trpo_vanishing}) shows that relaxation is \emph{necessary} to have nontrivial policy 
improvement, while in practice the relaxation is totally reasonable.
\end{remark}

\noindent
However, all methods derived from APC-Obj via these relaxations operate at a single scale: token-wise objectives (PPO, GRPO) gate each ratio independently without directly bounding trajectory-level drift, while sequence-wise objectives (GSPO) collapse each trajectory to a single aggregate and suppress within-trajectory variation. The relevant stability criterion, the trajectory-level average TV divergence $\bar{D}_{\rm TV}^{\rm (tj)}$ (Appendix~\ref{app:divergences}), is defined by aggregating per-token ratios over a single trajectory, so neither independent token-wise clipping nor trajectory-level gating alone provides a principled coupling between the two scales.
Because APC-Obj supplies a trust-region-enforced source objective, it can be composed with more structured gating mechanisms that operate across scales while retaining the underlying TV-based stability guarantee. In the next section, we introduce Fiber Bundle Gating (FBG) to address this gap.

%% file: rlvr_fiberpo.tex
\section{Fiber Bundle Gating Framework}
\label{subsec:fbg}

The APC-Obj derivations in Section~\ref{subsec:rgf_apc_obj} provide a trust-region-enforced source objective and reveal that, token-wise proximal objectives (PPO, GRPO) gate
each token's importance ratio independently without directly bounding trajectory-level
drift, while trajectory-wise objectives (GSPO, Claim~\ref{claim:gspo_tv}) collapse each trajectory to a single aggregate
ratio, reducing trajectory-level variation.
Neither paradigm offers a direct coupling between these two scales. 
In this section, we introduce an algebraic framework that addresses this limitation by exploiting
the natural two-level structure of sampled RLHF data.
We first organize sampled token data as a fiber bundle (Section~\ref{subsubsec:fiber_bundle}),
and then define Fiber Bundle Gating (FBG), a density-based gating operator that decomposes
stability control into explicit global and local components while preserving first-order
agreement with the true RL objective near on-policy. (Section~\ref{subsubsec:fbg}).

\subsection{Fiber Bundle Model}
\label{subsubsec:fiber_bundle}

Sampled RLHF data has a natural two-level structure: every token belongs to a
trajectory or global domain context. Global information arises by aggregating local quantities over a chosen domain or context; for example, the trajectory-level average total variation distance $\bar{D}_{\rm TV}^{\rm (tj)}$ (Appendix~\ref{app:divergences}) aggregates per-token ratios within each trajectory.
The notions of ``local information'' (per-token ratios) and ``global information'' (trajectory-level aggregates) are intuitive but imprecise; to formalize them and, crucially, to describe how they interact under gating, we represent both as \emph{densities on a manifold}.
Fiber bundle theory is an expressive mathematical language for this type of hierarchical
decomposition.
A fiber bundle $\pi_E: E \to B$ separates a total space $E$ of local data points from a base
space $B$ of global contexts; for each context $b \in B$, the fiber $\pi_E^{-1}[\{b\}]$
collects exactly the local data belonging to $b$. 
We organize sampled RLHF data as a fiber bundle of this form:
policy ratios are first converted to densities on the total space
$E$ via a fibration decomposition map $\mathcal{F}$; all gating
operations (global and local) are performed directly on these
densities; and a recovery map $\mathcal{R}$ converts the gated
densities back to policy ratios for use in the surrogate objective.  (see Appendix~\ref{app:fiber_bundle}, \ref{app:density}, and \ref{app:fbg} for a formal
definition).

To this end, we first define the augmented sampled space (Appendix~\ref{app:estimation}):
\[
    \bar{\cal X}:=\{ (s_t(\tau),a_t(\tau),\tau,t) \}\subseteq \mathcal X\times \mathrm{Tj}^{\theta_{\rm old}}\times \mathbb N.
\]
Each element of $\bar{\mathcal X}$ is a sampled state-action pair together with its trajectory membership and timestep. The space $\bar{\mathcal X}$ serves as the index set $\mathcal E$ in the RGF form (Definition~\ref{def:rgf}).

\begin{remark}
For mathematical completeness, even if a trajectory terminates at a finite step, we include a null state-action pair $(\varnothing,\varnothing,\tau,t)$ after termination.
\end{remark}

\noindent
The fiber bundle in our work consists of the following objects:

\paragraph{Base space ($B$).}
The base space collects all classes or contexts.
In this work, we define
\[
    B := \mathrm{Tj}^{\theta_{\rm old}} \times \{-1, +1\},
\]
so a typical element of the base is a tuple $(\tau, +1)$ or $(\tau, -1)$.

\paragraph{Total space ($E$).}
The total (bundle) space consists of all individual token-level data points together with a sign label.
We define
\[
    E := \bar{\mathcal X}\times \{-1,+1\}.
\]
The sign channel $\{-1,+1\}$ means positive and negative log-ratio contributions.

\paragraph{Bundle projection ($\pi_E$).}
The projection $\pi_E: E \to B$ maps each augmented token to its corresponding context:

$$
\pi_E:\bar{\mathcal X}\times \{-1,+1\}\to {\rm Tj}^{{\theta_{\rm old}}}\times\{-1,+1\}
\qquad
\pi_E(s,a,\tau,t;\,l):=(\tau,l)
$$

sending an augmented state-action pair to its trajectory while preserving the sign channel, which means a point in the base $B$ (e.g., $(\tau, 1)$) will either correspond to the tokens sampled in $\tau$ with positive log-ratio contribution or the tokens sampled in $\tau$ with negative log-ratio contribution, depending on the sign label $l$ ($l$ is either $+1$ or $-1$).

\paragraph{Fiber.}

For a given (trajectory, sign) tuple $b \in B$, the fiber $E_b := \pi_E^{-1}[\{b\}]$ contains all the local points belonging to the class/context $b$.
In our instantiation, $\pi_E^{-1}[\{(\tau,l)\}]$ collects \textbf{all} sampled tokens from the trajectory $\tau$ with sign label $l$.

\medskip
\noindent

\begin{remark}[Choice of base space and extensibility]\label{rem:base_extensibility}
The choice of base space $B$ determines the granularity of global control.
Here we instantiate FBG on a topologically trivial (product) fiber bundle with $B = \mathrm{Tj}^{\theta_{\rm old}} \times \{-1,+1\}$, where the base gate controls trajectory-level drift.
The framework extends to richer global structure by composing fibrations into a hierarchy: enriching $B$ with domain and prompt group indices produces per-domain and per-group aggregates, as realized concretely in FiberPO-Domain (Section~\ref{subsubsec:domain_fiberpo}).
In each case, the fiber $\pi_E^{-1}[\{b\}]$ collects exactly those tokens belonging to class $b$, and the local-global decomposition separates class-level drift from within-class variation.
See Remark~\ref{rem:base_choice_app} for further discussion.
\end{remark}

\begin{figure}[htbp]
\centering
\includegraphics[width=\textwidth]{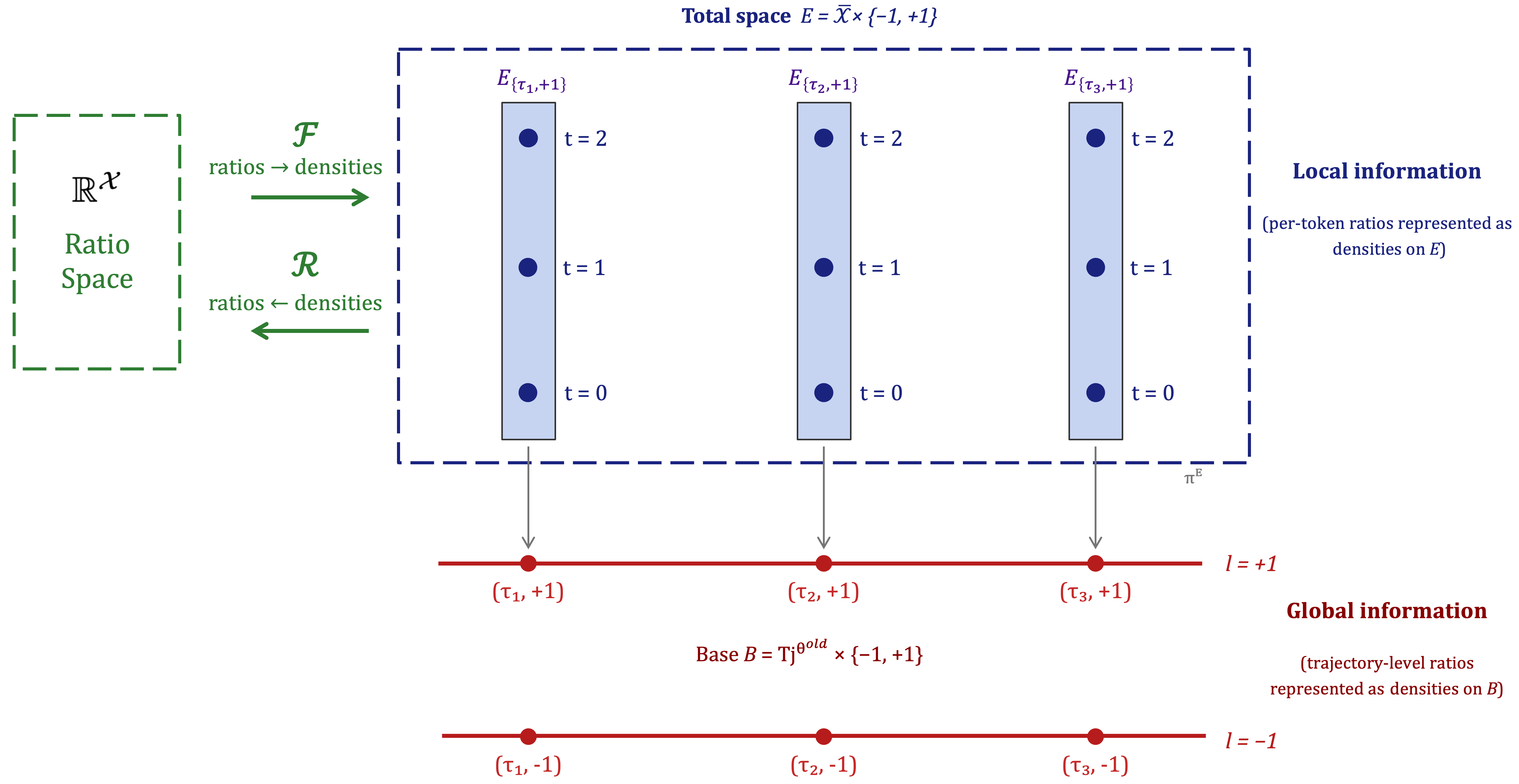}
\caption{Fiber bundle model for sampled RLHF data.
The base space $B=\mathrm{Tj}^{\theta_{\rm old}}\!\times\!\{-1,+1\}$ consists of two separate lines indexed by the sign label $l\in\{-1,+1\}$, encoding trajectory-level global information as densities on $B$.
Each fiber $\pi_E^{-1}[\{(\tau,l)\}]$ collects the per-token data points belonging to trajectory $\tau$ with sign $l$, encoding local information as densities on the total space $E$.
The maps $\mathcal{F}$ and $\mathcal{R}$ convert between policy ratios and densities.}
\label{fig:fiber_bundle}
\end{figure}

\noindent
Figure~\ref{fig:fiber_bundle} illustrates this fiber bundle structure: the projection $\pi_E$ maps each token to its trajectory context. The densities on the base space $B$ and bundle space $E$ will be introduced in the next section, where we define the FBG operator that performs stability control by gating these densities.

\subsection{Fiber Bundle Gating}
\label{subsubsec:fbg}

Building on the fiber bundle model (Section~\ref{subsubsec:fiber_bundle}) and the RGF language
(Section~\ref{subsubsec:rgf}), we now introduce Fiber Bundle Gating (FBG), a framework
that formally couples global and local information by operating
on densities over the bundle and base space.
The FBG operator acts in four stages: it pushes the token-level density forward to the base to
form the trajectory-level base density; applies a base-level atomic gate to enforce a trust-region
budget at the trajectory level; reflects the gated global signal back to the total space via a Markov kernel $K$; and gates the residual density on each fiber, capturing each token's
deviation from the trajectory aggregate, to prevent individual token spikes.
The reflecting condition on $K$ ensures the two components decouple cleanly: the residual
carries no base-level information, so global and local gating operate independently and
compositionally.
Once an FBG instance is specified, it induces a surrogate objective in RGF form
(Definition~\ref{def:rgf}), with the ratio gating map $\mathcal{G} = \mathcal{R} \circ G \circ
\mathcal{F}$ determined by the bundle and gating functions (Definition~\ref{def:fbg_rgf}).
All four constituent objects are defined relative to the chosen base space $B$ and therefore
extend naturally to richer global structures such as domain or temporal groupings (see
Appendix~\ref{app:fbg} for detailed exposition).

\begin{definition}[Fiber Bundle Gating]\label{def:fbg}
A Fiber Bundle Gating instance is specified by the tuple $(E,B,\pi_E,K,g_{\rm Base},g_{\rm Fiber,-};\,\mathcal F,\mathcal R)$, where $\pi_E:E\to B$ is a fiber bundle.
Let $\mathbf{E}$ and $\mathbf{B}$ denote the spaces of densities on $E$ and $B$, respectively.
The FBG gating function $G_{(\pi_E,K,g_{\rm Base},g_{\rm Fiber,-})}:\mathbf{E}\to\mathbf{E}$ is defined as:
\begin{equation}\label{eq:fbg}
    G(\sigma):=K\!\left(\Bigl(\bigoplus_{b \in B} g_{{\rm Base},b}\Bigr)(\pi_{E*}\sigma)\right)+\left(\Bigl(\bigoplus_{i \in E} g_{{\rm Fiber},i,\pi_{E*}(\sigma)}\Bigr)\bigl(\sigma - K(\pi_{E*}\sigma)\bigr) \right),
\end{equation}
where the constituent objects are:
\begin{enumerate}
    \item \textbf{Fiber bundle and density spaces.}
    $\pi_E: E\to B$ is a fiber bundle.
    Associated to $E$ and $B$ are their density spaces $\mathbf{E}$ and $\mathbf{B}$, consisting of all densities on the total space $E$ and the base space $B$, respectively
    (see Appendix~\ref{app:density} for the formal definition of densities on manifolds).

    \item \textbf{Fibration decomposition and recovery maps.}
    The fibration decomposition map $\mathcal F:\mathbb R_{>0}^{\mathcal E}\to \mathbf{E}$ converts policy ratios to densities on the total space, and the recovery map $\mathcal R:\mathbf{E}\to \mathbb R_{>0}^{\mathcal E}$ is a left-inverse of $\mathcal F$ that converts densities back to policy ratios.

    \item \textbf{Reflecting Markov kernel.}
    $K:\mathbf{B}\to \mathbf{E}$ is a  Markov kernel satisfying the reflecting condition $\pi_{E*}\circ K=\mathrm{id}_{\mathbf{B}}$. The subtraction $\sigma - K(\pi_{E*}\sigma)$ decouples the global information encoded in the base density $\pi_{E*}\sigma$ from $\sigma$, through the Markov kernel $K$. We need to ensure that, after decoupling, the residual density should no longer contribute to any global information, so we must have  $\pi_{E*}(\sigma - K(\pi_{E*}\sigma))=0$, which directly implies our requirement for $K$: $\pi_{E*}\circ K={\rm id}_{\mathbf B}$.

    \item \textbf{Atomic gating functions.}
    $g_{{\rm Base},b}:\mathcal D^1_bB\to\mathcal D^1_bB$ and $g_{{\rm Fiber},i,p_B}:\mathcal D^1_iE\to\mathcal D^1_i E$
    are pointwise gating functions on the localized base and fiber densities, respectively, where $b \in B$, $i \in E$, and $p_B \in \mathbf{B}$.
    The fiber gating function $g_{{\rm Fiber},i,p_B}$ depends on the base density $p_B$, reflecting the fact that local gating may be conditioned on global context.
\end{enumerate}
\end{definition}

\noindent
The FBG gating function~\eqref{eq:fbg} operates in two steps.
First, the base component: the density $\sigma$ is pushed forward to the base via $\pi_{E*}$ to form a base density, gated by $g_{\rm Base}$, and reflected back to the total space via the Markov kernel $K$.
Second, the fiber component: the residual $\sigma - K(\pi_{E*}\sigma)$, which captures local information after removing the global contribution, is gated by $g_{\rm Fiber}$.
The reflecting condition on $K$ ensures that these two components are orthogonal in the sense that the residual carries no base-level information.

\begin{definition}[FBG gating form]\label{def:fbg_rgf}
Given an FBG instance $(E,B,\pi_E,K,g_{\rm Base},g_{\rm Fiber,-};\,\mathcal F,\mathcal R)$, the associated surrogate objective takes the RGF form:
\begin{equation}\label{eq:fbg_rgf}
    \hat J_{(G;\mathcal F,\mathcal R)}(\theta|\theta_{\rm old}):=\sum_{\tau\in \mathrm{Tj}^{\pi_{\theta_{\rm old}}}}\left[ \sum_{t=0}^{T_\tau-1}  {\frac{1}{T_\tau}\, \mathcal R\circ G\circ\mathcal F(r_{\bullet})_{s_t,a_t}\; \hat A_{s_t,a_t} }  \right],
\end{equation}
where the induced ratio gating map is the composition $\mathcal{G} = \mathcal R\circ G\circ \mathcal F: \mathbb R_{>0}^{\bar{\mathcal X}} \to \mathbb R_{>0}^{\bar{\mathcal X}}$.
\end{definition}

\noindent

Fiber Bundle Gating guarantees the first order approximation to the true RL objective near on-policy:

\begin{theorem}[First-order agreement of FBG]\label{thm:fbg_first_order}
Suppose the atomic gating functions satisfy the identity conditions at the on-policy density, i.e., for all $i\in E$ and $b\in B$:
\[
    g_{{\rm Fiber},i,p_B}(\mathcal F(\mathbf{1})_i  -  K(\pi_{E*}\mathcal F(\mathbf{1}))_i)=\mathcal F(\mathbf{1})_i - K(\pi_{E*}\mathcal F(\mathbf{1}))_i
\]
\[
    g'_{{\rm Fiber},i,p_B}(\mathcal F(\mathbf{1})_i-  K(\pi_{E*}\mathcal F(\mathbf{1}))_i)=1, \quad
    g_{{\rm Base},b}(\pi_{E*}\mathcal F(\mathbf{1}))=\pi_{E*}\mathcal F(\mathbf{1}), \quad
    g_{{\rm Base},b}'(\pi_{E*}\mathcal F(\mathbf{1}))=1.
\]
Then the associated FBG objective agrees with the true RL objective to first order at $\theta = \theta_{\rm old}$:
\[
    \hat J_{(G;\mathcal F,\mathcal R)}(\theta_{\rm old}|\theta_{\rm old})= J(\theta_{\rm old}),
    \qquad
    \nabla_{\theta}\hat J_{(G;\mathcal F,\mathcal R)}(\theta|\theta_{\rm old})\big|_{\theta = \theta_{\rm old}}= \nabla_{\theta}J(\theta)\big|_{\theta = \theta_{\rm old}}.
\]
\end{theorem}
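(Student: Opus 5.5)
The plan is to show that the induced ratio gating map $\mathcal G=\mathcal R\circ G\circ\mathcal F$ fixes the on-policy ratio tuple $\mathbf 1$ and has identity Jacobian there. Both claims then follow from the standard first-order property of the linear surrogate $J^{(1)}(\theta\,|\,\theta_{\rm old})$ (Appendix~\ref{app:tv_trpo}). Concretely, at $\theta=\theta_{\rm old}$ the surrogate with ungated ratios $r_\bullet$ equals $J(\theta_{\rm old})$. Its gradient, via the policy gradient theorem written as an importance-weighted sum over the sampled augmented space $\bar{\mathcal X}$ with weights $1/T_\tau$ (Appendix~\ref{app:estimation}), equals $\nabla_\theta J(\theta_{\rm old})$, interpreting both sides as the sample-based estimators used throughout. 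So it suffices to prove $\mathcal G(\mathbf 1)=\mathbf 1$ and $D\mathcal G(\mathbf 1)=\mathrm{id}$. The chain rule then gives $\nabla_\theta \mathcal G(r_\bullet)|_{\theta_{\rm old}}=\nabla_\theta r_\bullet|_{\theta_{\rm old}}$, and the FBG objective~\eqref{eq:fbg_rgf} reproduces exactly the terms of the ungated surrogate.

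\emph{Step 1 (value).} Set $\sigma_0=\mathcal F(\mathbf 1)$ and $p_0=\pi_{E*}\sigma_0$. The base identity condition gives $\bigl(\bigoplus_b g_{{\rm Base},b}\bigr)(p_0)=p_0$, and the fiber identity condition gives that the gated residual equals $\sigma_0-K p_0$. Hence $G(\sigma_0)=Kp_0+\sigma_0-Kp_0=\sigma_0$. Since $\mathcal R$ is a left inverse of $\mathcal F$, we get $\mathcal G(\mathbf 1)=\mathcal R(\sigma_0)=\mathbf 1$.

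\emph{Step 2 (derivative).} Differentiate $G$ at $\sigma_0$ in a direction $\eta\in\mathbf E$. Both $\pi_{E*}$ and $K$ are linear: pushforward is linear, and a Markov kernel acts linearly on densities. The base term therefore contributes $K\,\mathrm{diag}(g'_{{\rm Base}})(\pi_{E*}\eta)=K\pi_{E*}\eta$, using $g'_{\rm Base}=1$ pointwise.

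The fiber term is the delicate one, because $g_{{\rm Fiber},i,p_B}$ depends on $p_B=\pi_{E*}\sigma$. Its derivative has two parts. The first is $g'_{\rm Fiber}\cdot(\eta-K\pi_{E*}\eta)=\eta-K\pi_{E*}\eta$. The second is $\partial_{p_B}g_{{\rm Fiber},i,p_B}$ evaluated at the residual $\sigma_0-Kp_0$, applied to $\pi_{E*}\eta$. I will argue that this second part vanishes. The identity condition in the theorem is stated for all $p_B$, so $p_B\mapsto g_{{\rm Fiber},i,p_B}(x_0)$ is constant equal to $x_0$, where $x_0=\sigma_0-Kp_0$ at $i$. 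Its derivative in $p_B$ is therefore zero. This is the step I expect to be the main obstacle: it requires reading the hypothesis as uniform in $p_B$ and assuming joint differentiability of $(x,p_B)\mapsto g_{{\rm Fiber},i,p_B}(x)$. If uniformity were not intended, I would instead note that the conditions are imposed at the on-policy $p_B=p_0$ and add the assumption explicitly.

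Summing the two terms gives $DG(\sigma_0)\eta=K\pi_{E*}\eta+\eta-K\pi_{E*}\eta=\eta$. The reflecting condition $\pi_{E*}\circ K=\mathrm{id}_{\mathbf B}$ is not even needed for this cancellation; it only ensures the decomposition is well-posed. Finally, $D\mathcal G(\mathbf 1)=D\mathcal R(\sigma_0)\,D\mathcal F(\mathbf 1)$, and differentiating $\mathcal R\circ\mathcal F=\mathrm{id}$ at $\mathbf 1$ shows this equals the identity, assuming $\mathcal F,\mathcal R$ are differentiable.
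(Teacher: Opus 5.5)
Your proposal is correct and follows the same route as the paper's proof. The identity conditions together with $\mathcal R\circ\mathcal F=\mathrm{id}$ give $\mathcal R\circ G\circ\mathcal F(\mathbf 1)=\mathbf 1$, and the unit-derivative conditions make the Jacobian of $G$ at $\mathcal F(\mathbf 1)$ the identity, so the chain rule reduces the gradient to that of the ungated linear surrogate; your two refinements sharpen this same argument rather than change it: you handle the $p_B$-dependence of $g_{\rm Fiber}$ explicitly, which the paper's sketch silently ignores, and you note that the cancellation $K\pi_{E*}\eta+(\eta-K\pi_{E*}\eta)=\eta$ does not use the reflecting condition, whereas the paper credits it to the orthogonal decomposition.
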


\noindent
\emph{Proof} See Appendix~\ref{app:fbg_proof}. \qed

\begin{remark}[Global-to-local information exchange]\label{rem:global_local_transmit}
Beyond first-order agreement, FBG's decoupling mechanism is essential for \emph{exchanging} global gating effects with individual token ratios control.
General RGF gating maps only operate on ratio tuples $r_\bullet$, without the structured decomposition, naively combining global and local gating. This may produce undesirable gradients, or entangle the two scales so that neither gate operates at its intended granularity.
FBG resolves this: $g_{\rm Base}$ gates the trajectory-level base density in isolation, while $g_{\rm Fiber}$ gates local residual $\sigma - K(\pi_{E*}\sigma)$---the per-token variation from which all global influence has been removed.
The reflecting condition guarantees that these two operations are orthogonal and non-interfering (see Appendix~\ref{app:fbg_transmit} for a detailed discussion).
\end{remark}

\begin{remark}[Extensibility via the base space]\label{rem:fbg_extensibility}
All four constituent objects of FBG are defined relative to the chosen base space $B$.
Extending $B$ to include additional classifying factors (e.g.,
domain, temporal grouping) naturally extends the FBG instance:
the pushforward $\pi_{E*}$ produces finer aggregates, the Markov
kernel $K$ reflects finer base densities back to the bundle, and
the atomic gates $g_{\rm Base}$, $g_{\rm Fiber}$ operate per class and locally, respectively.
Theorem~\ref{thm:fbg_first_order} guarantees first-order agreement for any such extension, provided the atomic gates satisfy the identity conditions at the on-policy point.
The FiberPO-Trajectory objective derived in Section~\ref{subsec:fiberpo} is the simplest nontrivial instantiation of this framework. In Section~\ref{subsec:fibration_hierarchy}, we generalize FBG to the Fibration Gating Hierarchy (FGH) and derive FiberPO-Domain, a four-level instantiation that incorporates domain and prompt group structure (see also Remark~\ref{rem:base_choice_app}).
\end{remark}

%% file: rlvr_fiberpo_sec3.tex
\section{FiberPO}
\label{subsec:fiberpo}

Starting from the APC-Obj objective (Section~\ref{subsubsec:apc_obj}), we derive the FiberPO surrogate by applying a sequence of controlled transformations yielding a Fiber Bundle Gating form (Definition~\ref{def:fbg}).
The derivation proceeds through four derivation steps (detailed in Appendix~\ref{app:fiberpo_derivation}):
\begin{enumerate}
    \item \textbf{$\delta$-relaxation:} replace the vanishing TRPO/APC-Obj threshold $\delta^{(\rm APC\text{-}Obj)}$ with a positive tunable hyperparameter $\delta$.
      This step is necessitated by the vanishing theorem (Theorem~\ref{thm:trpo_vanishing}), which shows the classical radius is zero at $\gamma{=}1$, and justified by the GSPO analysis (Claim~\ref{claim:gspo_tv}), which demonstrates that a positive $\delta \sim 10^{-2}$ effectively controls policy drift in the undiscounted setting.
      The clipping mechanism in APC-Obj's RGF form remains well-defined at the relaxed radius (Remark~\ref{rem:trust_region_design}).
    \item \textbf{Logarithmic approximation:} move to log-ratio coordinates via $\log r \approx r-1$ and $1+\operatorname{clip}(\cdots)\approx\exp(\operatorname{clip}(\cdots))$.
      Working in $\log r \in \mathbb R$ and exponentiating the result guarantees $r > 0$, so all gating operations (clipping, piecewise-linear gates) can be designed with unconstrained linear algebra on $\mathbb R$ rather than constrained to preserve positivity.
    \item \textbf{Sequence-level aggregation:} restrict the
      cross-token coupling to tokens within the same trajectory
      (replacing the all-token sum with a per-trajectory sum)
      and replace the per-state budget $T_s\delta$ with a
      per-trajectory budget $T_\tau\delta$.
    \item \textbf{Clipping decomposition:} decompose the coupled clipping term into an explicit base-level gate $g^{\rm agg}$ on the trajectory aggregate ratios $s_\tau^+, s_\tau^-$ and a fiber-level $\operatorname{logclip}$ on per-token residuals, with per-channel trust-region budgets $C^+, C^-$ ($C^++C^-=\delta$) that accommodate the intrinsic asymmetry $\log s^-_\tau \geq \log s^+_\tau$.
\end{enumerate}

\noindent
Derivation~4 is the central and distinctive step: it transforms a single coupled clipping bound into two explicit, complementary gating components that map directly onto the FBG architecture (base gate + fiber gate).
The resulting objective takes the following form.

\begin{definition}[Fiber-Aware Clipping Policy Optimization (FiberPO)]\label{def:fiberpo_main}
The FiberPO objective in RGF form is:
\begin{equation}\label{eq:fiberpo_main}
    {\hat J^{{\text{FiberPO}}}}(\theta |\theta_{\rm old}) = \sum_{(s,a,\tau,t)\in\bar{\mathcal X}} {\frac{1}{{|\mathrm{Tj}^{\theta_{\rm old}}|}}{\frac{1}{{T_\tau}}}  \cdot\mathcal G(r_\bullet)_{s,a,\tau,t}\;\cdot \hat A_{s,a} },
\end{equation}
where $\mathcal E=\bar{\mathcal X} :=\{ (s_t(\tau),a_t(\tau),\tau,t) \}\subseteq \mathcal X\times \mathrm{Tj}^{\pi_{\theta_{\rm old}}}\times \mathbb N $ is the augmented state-action
pair space\footnote{In the RGF (Definition~\ref{def:rgf}), each index $i \in \mathcal{E}$ carries extra information $\mathcal{I}$; here the extra information is the trajectory--timestep pair $(\tau, t)$, so each element of $\mathcal{E}$ is an augmented state-action pair $(s,a,\tau,t)$. See Appendix~\ref{app:estimation} for details.}, $T_\tau$ is the
length of trajectory $\tau$, and the ratio gating map $\mathcal G
= \mathcal R \circ G \circ \mathcal F$
(Definition~\ref{def:fbg_rgf}) acts on the full tuple of
importance ratios $r_\bullet$ and produces a gated ratio for each
augmented token $(s,a,\tau,t)\in\bar{\mathcal X}$. The fully expanded objective and its equivalent expression in LLM notation $(g,j,i)$ are given in Definition~\ref{def:fiberpo} and Appendix~\ref{app:fiberpo_llm_notation}, respectively.

\medskip\noindent
The FBG gating map $\mathcal G$ for each token $i \equiv (s,a,\tau,t)$ is a multiplicative decomposition into a trajectory-level \emph{base weight} and a token-level \emph{gated residual}:
\begin{equation}\label{eq:fiberpo_gating_decomp}
    \mathcal G(r_\bullet)_i \;=\; \underbrace{\frac{\exp\circ\;g^{\rm agg}(\log s_\tau^+,\;C^+,\;T_\tau)}{\exp\circ\;g^{\rm agg}(\log s_\tau^-,\;C^-,\;T_\tau)}}_{w^{\rm base}_\tau\;:\;\text{base weight (after base gating)}} \;\cdot\; \underbrace{\frac{{\rm logclip}\!\left((s_\tau^{(l_i)})^{-l_i}\,r_i,\;\epsilon\right)}{{\rm logclip}\!\left((s_\tau^{(-l_i)})^{-l_i},\;\epsilon\right)}}_{\tilde r_i^{\rm fiber}\;:\;\text{gated residual (after fiber gating)}},
\end{equation}
Each term will be explained carefully in the subsections below.
Overall, we call $w^{\rm base}_\tau$ the \emph{base weight} and
$\tilde r_i^{\rm fiber}$ the \emph{gated residual}. This
decomposition directly reflects the FBG architecture
(Definition~\ref{def:fbg}): the base gate $g_{\rm Base}$ operates
on pushed-forward densities on $B$, and the fiber gate $g_{\rm
  Fiber}$ operates on the residual after reflecting back to $E$
via the Markov kernel $K$. The base weight $w^{\rm base}_\tau$
depends only on trajectory-level aggregates and is shared by all
tokens in trajectory $\tau$; it controls how much gradient signal
the trajectory as a whole is permitted to contribute. The gated
residual $\tilde r_i^{\rm fiber}$ captures each token's deviation
from the trajectory aggregate on its respective sign channel,
gated by $\operatorname{logclip}$ to prevent individual token
spikes. The constituents of each component are defined in the
following two subsections.

\end{definition}

\subsection{Base Weight (Base Gate)}
\label{subsubsec:base_weight}

The base weight is
\begin{equation}\label{eq:base_weight}
    w^{\rm base}_\tau \;:=\; \frac{\exp\circ\;g^{\rm agg}(\log s_\tau^+,\;C^+,\;T_\tau)}{\exp\circ\;g^{\rm agg}(\log s_\tau^-,\;C^-,\;T_\tau)},
\end{equation}
where the constituent quantities are:

\paragraph{Trajectory aggregate ratios.}
The positive and negative aggregate ratios decompose the trajectory-level drift by sign:
\begin{equation}\label{eq:aggregate_ratios}
    \log s^+_\tau :=\frac{1}{T_\tau}\sum_{t=0}^{T_\tau-1} \max(\log r_{s_t(\tau),a_t(\tau)},\,0), \qquad
    \log s^-_\tau :=\frac{1}{T_\tau}\sum_{t=0}^{T_\tau-1} \max(-\log r_{s_t(\tau),a_t(\tau)},\,0).
\end{equation}
Here $\log s^+_\tau$ averages the positive log-ratio
contributions (tokens whose likelihood increased under
$\pi_\theta$) and $\log s^-_\tau$ averages the negative ones
(tokens whose likelihood decreased). We refer to $\log
s^+_\tau$ and $\log s^-_\tau$ as the positive and negative
channels' drift, or the signed-trajectory mean, respectively. \\[0.5em]
One of the merits of splitting by sign (for more theoretical reasoning, see Appendix~\ref{app:fiberpo_derivation4})  rather than averaging all log-ratios is that
the total trajectory drift $\overline{\log r}_\tau = \log s^+ -
\log s^-$ can be small even when both $\log s^+$ and $\log s^-$
are individually large. In this case, the trajectory contains many token-ratios that have shifted substantially in both positive and negative directions.\\[0.5em]
(i) If one simply performs $\overline{\log r}_\tau$ mean-subtraction, the mean-subtracted individual token will be large and easily clipped by token-level $\operatorname{logclip}$ (for details, see ratio decomposition in \ref{subsubsec:ratio_decoupling}), and\\[0.5em]
(ii) in this case, the trajectory TV distance $\hat{\bar D}_{\rm TV}^{(\tau)} \approx (1/T_\tau)\sum_t |\log r_{s_t, a_t}|$ is large and may need control, however keeping only $\overline{\log r}_\tau$ would mask this need for control.\\[0.7em]
By tracking
each sign channel independently, $g^{\rm agg}$ detects high total
variation (in ii) in the importance weights even when the signed average
nearly cancels, can apply rollback on the offending channel's aggregate ratio
without suppressing the well-behaved aggregate ratio, while also retaining many well-behaved local tokens that are located near the possibly large $\log s^{\pm}$ unclipped (from i), providing more nuanced control over the optimization process.

\paragraph{Aggregate gating function $g^{\rm agg}$.}
The function $g^{\rm agg}$ is a piecewise-linear gate that
transforms the aggregate log-ratio on each sign channel:
\begin{equation}\label{eq:gagg_main}
    g^{\rm agg}(x,C,T_\tau):= \left\{\begin{array}{ll}   x&\text{if }|x|\leq C\\[0.5em]  \operatorname{sign}(x)(T_\tau+1)C-T_\tau x & \text{if }C<|x|<(1+T_\tau^{-1})C  \\[0.5em]  0 &\text{otherwise}   \end{array}\right.
\end{equation}
where $T_\tau$ is the sequence length and $C \in \{C^+, C^-\}$ is the
per-channel trust-region budget. The three regimes correspond
directly to the three APC-Obj clipping regimes~(P), (R), (Z)
(Appendix~\ref{app:apc_obj_zones}), lifted to the aggregate level via the
clipping decomposition (Appendix~\ref{app:fiberpo_derivation},
Eq.~\ref{eq:fiberpo_decomposed}):

\begin{itemize}

\item \textbf{Pass-through~(P)} ($|x| \leq C$): the aggregate drift
  is within the trust-region budget. The gate outputs $x$
  unchanged, preserving the full importance-sampling correction
  on this channel. This corresponds to the aggregate-level counterpart of APC-Obj regime~(P) where $\hat{D}_{\rm TV}(s) \leq \delta^{(\rm APC\text{-}Obj)}$ (Eq.~\ref{eq:zone_P}): the total budget consumption (C1)~+~(C2) does not exceed the budget~(B), and the clipping is inactive.
\item \textbf{Rollback~(R)} ($C < |x| < C^* := (1+T_\tau^{-1})C$):
  the aggregate drift has exceeded the budget. The gate reverses
  slope to $-T_\tau$, producing a restorative gradient that
  actively opposes the drift direction and pushes the aggregate
  back toward on-policy.
  The term ``rollback'' follows the terminology of~\cite{wang2020truly}, who introduced a similar slope-reversing mechanism at the per-token level.
  This is the aggregate-level counterpart of APC-Obj regime~(R) (Eq.~\ref{eq:zone_R}), where $\hat{D}_{\rm TV}(s) > \delta^{(\rm APC\text{-}Obj)}$ but the current token's deviation (C2) has not individually exhausted the budget overshoot, so the clip suppresses $|r_{s,a}-1|$ without zeroing it\footnote{In the per-state APC-Obj objective (Eq.~\ref{eq:apc_obj_per_token}), regime~(R) requires $|r_{s,a}-1|/T_s > \hat{D}_{\rm TV}(s) - \delta$, equivalently $0 < |\operatorname{clip}(r_{s,a}-1, \text{(B)}-\text{(C1)})| < |r_{s,a}-1|$. After trajectory-level aggregation (Appendix~\ref{app:fiberpo_derivation3}), logarithmic approximation (Appendix~\ref{app:fiberpo_derivation2}), and clipping decomposition (Eq.~\ref{eq:fiberpo_decomposed}), this per-token condition on $|r_{s,a}-1|$ lifts to a condition on the aggregate log-ratio $\log s_\tau^{(l)}$ that still depends on individual $\log r_i$ (i.e., $|\log r_i|/T_\tau > \log s^{(l_i)}_\tau - \delta$). Since $T_\tau$ is typically large in LLM settings, the per-token dependence becomes negligible and the condition reduces to the uniform cross-token condition $C^{(l)} < \log s_\tau^{(l)} < C^{*(l)}$ on the aggregate ratio alone.}.
  The remaining budget induces a linear penalty whose gradient opposes further
  drift. The rollback regime is the unique continuous
  piecewise-linear interpolation between pass-through and
  zeroing, with slope fixed by the APC-Obj decomposition structure.
\item \textbf{Zeroed~(Z)} ($|x| \geq C^*$): the budget is fully
  consumed. The gate outputs $0$, completely blocking gradient
  signal for this channel. This is the aggregate-level counterpart of APC-Obj regime~(Z) (Eq.~\ref{eq:zone_Z}), where the cross-token consumption~(C1) alone exceeds the total budget~(B), making the residual clip bound nonpositive and forcing the clip output to zero.
\end{itemize}

\noindent
The rollback regime has width $C/T_\tau$ and shrinks for longer
trajectories, so $g^{\rm agg}$ approaches a hard clip at $\pm C$
as $T_\tau \to \infty$.

\paragraph{Per-channel budgets.}
$C^+$ and $C^-$ are the per-channel trust-region budgets
satisfying $C^+ + C^- = \delta$, with the recommendation $C^- <
C^+$ to compensate for the intrinsic KL bias $\log s^-_\tau \geq \log
s^+_\tau$ (Appendix~\ref{app:fiberpo_derivation}). The symmetric
case $C^+=C^-=\delta/2$ is a special case.

\begin{figure}[t]
    \centering
    \includegraphics[width=\textwidth]{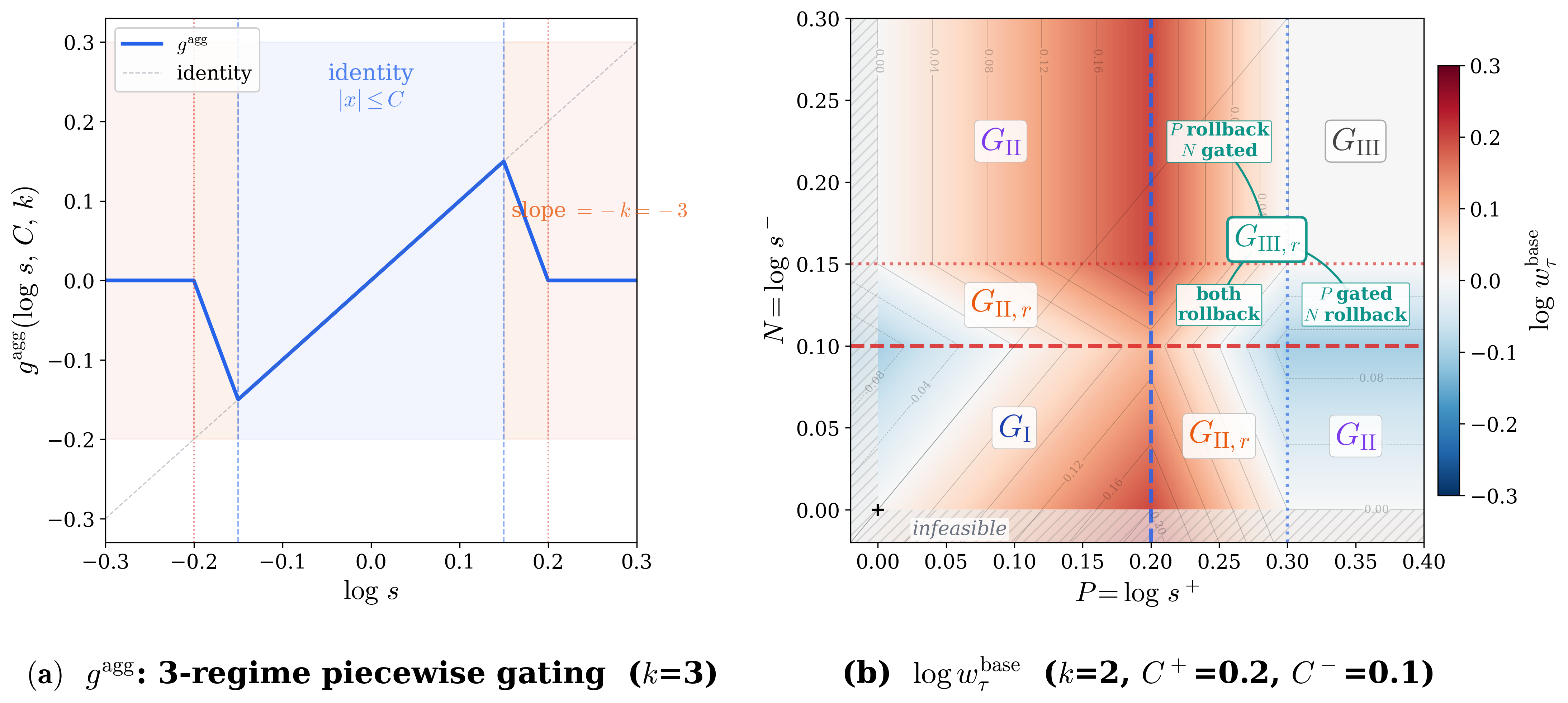}
    \caption{Base weight visualization.
      Panel~(a) shows the aggregate gate $g^{\rm agg}$
      (Eq.~\ref{eq:gagg_main}) with three regimes: pass-through
      ($|x| \leq C$, slope $1$), rollback ($C < |x| < C^*$, slope
      $-k$), and zeroed ($|x| \geq C^*$, output $0$). As $k =
      T_\tau$ increases, the rollback regime narrows (width
      $C/T_\tau$) and $g^{\rm agg}$ approaches a hard clip at
      $\pm C$.
      Panel~(b) shows the base weight $\log w_\tau^{\rm base}$
      (Eq.~\ref{eq:base_weight}) in $(\log s^+, \log s^-)$-space
      with asymmetric thresholds $C^+ = 0.20$, $C^- = 0.1$ ($k =
      2$). Dashed lines mark the budget boundaries $C^\pm$ (onset
      of rollback); dotted lines mark the full-gating thresholds
      $C^{*\pm}$ (onset of zeroing). The five global regimes (G-I
      through G-III)
      follow a non-monotonic pattern: $|\log w|$ rises through the
      rollback onset (G-II,r), peaks when one channel is fully
      gated (G-II), declines under mutual rollback (G-III,r), and
      collapses to zero when both channels are fully gated
      (G-III, $w^{\rm base}_\tau = 1$).}
    \label{fig:fiber_weight}
\end{figure}

\paragraph{Regime analysis of the base weight.}
\label{par:regime_base_weight}
Figure~\ref{fig:fiber_weight} visualizes $g^{\rm agg}$ and the
resulting base weight. Since $w^{\rm base}_\tau$ is a ratio of
two independently gated channels (Eq.~\ref{eq:base_weight}), the
combined behavior depends on which regime each channel occupies.
The key structural feature is that $w^{\rm base}_\tau$ depends on
$\log s_\tau^+$ and $\log s_\tau^-$ \emph{independently}: two
trajectories with identical mean log-ratio
$\frac1{T_\tau}\sum{\log r_t}=\log s^+ - \log s^-$ but different $\hat{\bar{D}}_{\rm TV}^{(\tau)}$ may produce same base weights, depending on
whether $(\log s^+ , \log s^-)$ lies in the G-I region or not.
A trajectory with large positive \emph{and} negative shifts falls
outside G-I, triggering rollback on both channels even when the
shifts nearly cancel; this produces a gated base weight. In contrast, a trajectory in G-I has base weight exactly
$\exp(\log s^+ - \log s^-)$, since $g^\text{agg}$ acts as the identity.

The five global regimes are:
\begin{itemize}
    \item \textbf{G-I} (both pass-through): $w^{\rm base}_\tau = \exp(\log s^+ - \log s^-)$, the unmodified importance-sampling ratio. The base gate is transparent.
    \item \textbf{G-II,r} (one channel rolling back): the active rollback produces a restorative gradient that opposes the drifting channel; $|\log w^{\rm base}_\tau|$ is rising.
    \item \textbf{G-II} (one channel fully gated): $|\log w^{\rm base}_\tau|$ peaks at approximately $C^{(l)}$ (the surviving channel $l$'s budget), delivering maximum one-sided correction.
    \item \textbf{G-III,r} (mutual rollback): both channels are past their thresholds; $|\log w^{\rm base}_\tau|$ is declining.
    \item \textbf{G-III} (both fully gated): $w^{\rm base}_\tau = 1$ and the trajectory-level gradient contribution vanishes entirely.
\end{itemize}

\subsection{Gated Residual (Fiber Gate)}
\label{subsubsec:gated_residual}

The gated residual captures each token's deviation from the trajectory aggregate, gated to prevent individual token spikes:
\begin{equation}\label{eq:gated_residual}
    \tilde r_i^{\rm fiber} \;:=\; \frac{{\rm logclip}\!\left((s_\tau^{(l_i)})^{-l_i}\,r_i,\;\epsilon\right)}{{\rm logclip}\!\left((s_\tau^{(-l_i)})^{-l_i},\;\epsilon\right)}.
\end{equation}
We now define its constituent elements.

\paragraph{Log-clipping function.}
The log-clipping function is:
\begin{equation}\label{eq:logclip}
    \operatorname{logclip}(x,\epsilon) \;:=\; \exp\!\bigl(\operatorname{clip}(\log x,\;\pm\epsilon)\bigr).
\end{equation}

We explain the specific choice of log space and thus log clip in
Appendix~\ref{app:fiberpo_derivation2}. Intuitively, in non-log
space, this is equivalently
$\operatorname{clip}(x,\;e^{-\epsilon},\;e^{+\epsilon})$: the
argument is clamped to the interval $[e^{-\epsilon},
e^{+\epsilon}]$. The asymmetry is inherent: the upper bound
$e^{+\epsilon} \approx 1 + \epsilon + \epsilon^2/2$ is further
from $1$ than the lower bound $e^{-\epsilon} \approx 1 - \epsilon
+ \epsilon^2/2$, so a symmetric $\epsilon$ in log-space naturally
produces an asymmetric clip in ratio space. This subsumes the
effect of asymmetric clip bounds $\epsilon^+/\epsilon^-$ used in
PPO/GRPO without introducing an additional hyperparameter.

\paragraph{Sign label and channel decomposition.}
For each token $i$, the sign assignment
\begin{equation}\label{eq:parity}
    l_i \;:=\; \operatorname{sign}(\log r_i) \;\in\; \{+1,\,-1\}
\end{equation}
indicates whether the new policy has increased ($l_i = +1$) or
decreased ($l_i = -1$) the likelihood of token~$i$ relative to
the reference policy. A parenthesized sign-label superscript
selects the sign channel: $s_\tau^{(l_i)} = s_\tau^+$ when
$l_i=+1$ and $s_\tau^-$ when $l_i=-1$; the negated form
$s_\tau^{(-l_i)}$ selects the opposite channel (see
Appendix~\ref{app:ratio_space} for notation conventions).

This sign partitions the tokens within each trajectory into two
channels (positive-ratio and negative-ratio, reflecting the FBG
fiber bundle structure of Section~\ref{subsubsec:fiber_bundle},
see Appendix~\ref{app:fiberpo_derivation4} for the instantiation
with signed channels). The gated
residual~\eqref{eq:gated_residual} decomposes into two
sign-channel cases:
\[
    \tilde r_i^{\rm fiber} = \begin{cases}
        \dfrac{\operatorname{logclip}\!\bigl((s_\tau^+)^{-1}\,r_i,\;\epsilon\bigr)}{\operatorname{logclip}\!\bigl((s_\tau^-)^{-1},\;\epsilon\bigr)} & \text{if } l_i = +1, \\[1.2em]
        \dfrac{\operatorname{logclip}\!\bigl((s_\tau^-)^{+1}\,r_i,\;\epsilon\bigr)}{\operatorname{logclip}\!\bigl((s_\tau^+)^{+1},\;\epsilon\bigr)} & \text{if } l_i = -1.
    \end{cases}
\]
In each case, the numerator involves only the same-sign aggregate
$s_\tau^{(l_i)}$, while the denominator involves the
opposite-sign aggregate $s_\tau^{(-l_i)}$.

\paragraph{Fiber residual and the gated residual in log-space.}
Define the \emph{fiber residual}
\begin{equation}\label{eq:fiber_residual}
    u_i \;:=\; l_i\log r_i - \log s_\tau^{(l_i)}
\end{equation}
and the \emph{opposite-sign aggregate}
\begin{equation}\label{eq:opposite_sign_agg}
    v_i \;:=\; -\log s_\tau^{(-l_i)}.
\end{equation}
The gated residual~\eqref{eq:gated_residual} can be written equivalently in terms of $u_i$ and $v_i$ as:
\begin{equation}\label{eq:gated_residual_uv}
    \tilde{r}_i^{\rm fiber} \;=\; \frac{\operatorname{logclip}\!\bigl(e^{l_i\,u_i},\;\epsilon\bigr)}{\operatorname{logclip}\!\bigl(e^{l_i\,v_i},\;\epsilon\bigr)} =  \exp(\operatorname{clip}\!\bigl({l_i\,u_i},\;\epsilon\bigr) - \operatorname{clip}\!\bigl({l_i\,v_i},\;\epsilon\bigr)).
\end{equation}
The fiber residual $u_i$~\eqref{eq:fiber_residual} subtracts the same-sign trajectory mean ($\log s_\tau^{(l_i)}$)
from each token's log-ratio. The numerator's
$\operatorname{logclip}$ acts on $e^{l_i u_i}$, which involves
only the same-sign channel aggregate $s_\tau^{(l_i)}$; this
prevents the opposite channel's aggregate ratio from directly
entering the numerator clip, avoiding opposite-channel
contamination. The denominator's $\operatorname{logclip}$ acts on
$e^{l_i v_i}$, incorporating the opposite-sign aggregate
$s_\tau^{(-l_i)}$ to complete the subtraction by trajectory-mean log-ratio ($\overline{\log r}_\tau := (1/T_\tau)\sum_{t}\log
r_{s_t,a_t}$).

\paragraph{Ratio decoupling.}
\label{subsubsec:ratio_decoupling}

When neither the numerator nor denominator $\operatorname{logclip}$ is saturated (i.e., the internal $\operatorname{clip}$ is inactive: $|u_i|
\leq \epsilon$ and $|v_i| \leq \epsilon$), the $\exp$ and $\log$
cancel and we obtain:
\[
    \log\tilde r_i^{\rm fiber} = l_i(u_i - v_i) = \log r_i - \overline{\log r}_\tau,
\]
where, recalling the log-aggregate ratio $\log s_\tau := \frac{1}{T_\tau}\sum_{t=0}^{T_\tau-1}\log r_{s_t(\tau),a_t(\tau)}$ (see also Appendix~\ref{app:gspo_tv_proof} for definition),
\[
    \overline{\log r}_\tau \;:=\; \log s_\tau \;=\; \frac{1}{T_\tau}\sum_{t=0}^{T_\tau-1}\log r_{s_t(\tau),a_t(\tau)}
\]
is the trajectory mean log-ratio. Thus the gated
residual is a trajectory-mean-centered quantity: it measures each
token's deviation from the trajectory average, with the
same-sign/opposite-sign channel split ensuring that the
$\epsilon$-clipping acts within each sign channel independently.

First, the total mean-centering $\overline{\log r}_\tau$ has the advantage of recovering the true linear surrogate and thus true RL objective near on-policy, when combined with the base weight's pass-through behavior in G-I.

Second, the signed-trajectory-mean centering quantity $l_iu_i = \log r_i - l_i\log s^{(l_i)}_\tau$ (from the sign channel decomposition also \eqref{eq:parity}) that passed into the numerator (equivalently may be expressed as $\exp\circ\operatorname{clip}(l_iu_i, \epsilon)$) provides the following additional advantages:

\textit{Fiber residual clip isolates signed trajectory-level drift}: even when the trajectory aggregate
$\log s^{(l_i)}_\tau$ is large (indicating substantial overall
policy shift), individual tokens are regulated only by how much
they deviate from the trajectory norm. Tokens that move
relatively closely with the same-sign trajectory-mean (trajectory-level drift)
always pass through the FiberPO gating function ($\mathcal G$ of FiberPO) unattenuated, thus preserving well-behaved tokens their full
gradient signal, allowing them to \textit{provide finer update direction}, leading to higher \textit{token-efficiency}, even when
(signed-)trajectory-level drift are large ($\log s^\pm_\tau > \epsilon$). And in this case,  methods like PPO/GRPO etc. that clip $\log r_i$
directly would have more than half tokens to exceed the $\operatorname{clip}$ bound simultaneously and destroy
token-level discriminated signal.

\textit{Fiber residual isolates inter-trajectory and
  intra-trajectory drift}: To demonstrate this point, consider
two trajectories answering ``Name a famous landmark'':
\begin{center}
\emph{``I love Paris and the Eiffel Tower''} \quad vs.\quad
\emph{``I love Rome and the Colosseum.''}
\end{center}
Globally, the policy may strongly prefer the Paris response, perhaps
it scores higher overall, so the aggregate ratio $s_\tau^+$ for that
trajectory is large. Without decoupling, this global preference bias
leaks into every token's gradient: the token \emph{Colosseum} in the
Rome trajectory receives a weaker learning signal not because the
token-level association ``Rome $\to$ Colosseum'' is poor, but simply
because its trajectory is globally less preferred. The residual
decomposition in Eq.~\ref{eq:fiberpo_gating_decomp} prevents this
contamination. By subtracting the trajectory aggregate from each
token's log-ratio, the fiber gate $\tilde r_i^{\rm fiber}$ isolates
the pure local association, how much ``Colosseum'' co-varies with
``Rome'' relative to what the trajectory drift alone would predict,
and gates it independently via $\operatorname{logclip}$. Within each
trajectory, token-level learning thus operates at a uniform, unbiased
scale: $P(\text{Colosseum}\mid\text{Rome})$ and
$P(\text{Eiffel Tower}\mid\text{Paris})$ are each refined on their own
statistical merits, free from the global preference
$P(\text{Paris trajectory}) \gg P(\text{Rome trajectory})$. The base
weight $w^{\rm base}_\tau$ then re-couples the trajectory-level
preference when the two scales are composed, so that global
significance is preserved without polluting local precision. This is
the orthogonal, non-interfering decomposition guaranteed by the
reflecting condition $\pi_{E*} \circ K = \mathrm{id}_{\mathbf{B}}$
(Appendix~\ref{app:fbg_transmit}).

\paragraph{Regime analysis of the gated residual.}
\label{par:regime_gated_residual}
To make the behavior of the $\epsilon$-clip explicit, we partition trajectories into three \emph{local} regimes according to how many tokens' fiber residuals exceed the clip threshold:
\begin{itemize}
    \item \textbf{L-I} (unclipped): $\max_i |u_i| < \epsilon$. Every token retains its full per-token self-gating; the fiber gate(gated residual $\tilde r^{\rm fiber}_i$) imposes no restriction and faithfully tracks each token's deviation from the trajectory mean log-ratio.
    \item \textbf{L-II} (selective clipping): $0 < n_\epsilon < T_\tau$, where $n_\epsilon$ counts tokens with $|u_i| \geq \epsilon$. Outlier tokens lose their direct per-token gradient while well-behaved tokens retain full signal---the regime where most of the meaningful per-token signals are retained.
    \item \textbf{L-III} (all $\operatorname{logclip}$'s saturated): $\min_i |u_i| \geq \epsilon$. Individual token identity is lost; the Jacobian \eqref{eq:fiberpo_jacobian} reduces to $\frac{\mathcal G(r)_j}{r_j}\tfrac{1}{T_\tau}\gamma_\tau^{(l_j)}$, the gradient is governed entirely by the trajectory-level controlled base weight $w_\tau^{\rm base}$.
\end{itemize}
Figure~\ref{fig:fiberpo_regime_map} visualizes the joint local--global regime map on the probability simplex. It also shows that many parts of L-I/II/III regime cascade fits inside G-I. This is the outcome of encouraged parameter choice:
That local token-level regulation engages before global trajectory-level regulation, allowing the fiber gate to preserve the most meaningful(non-outlier) per-token gradient signal for as long as possible, while the base gate remains transparent until the trajectory as a whole has drifted substantially where regulation on global quantities becomes necessary.

\begin{figure}[t]
    \centering
    \includegraphics[width=\textwidth]{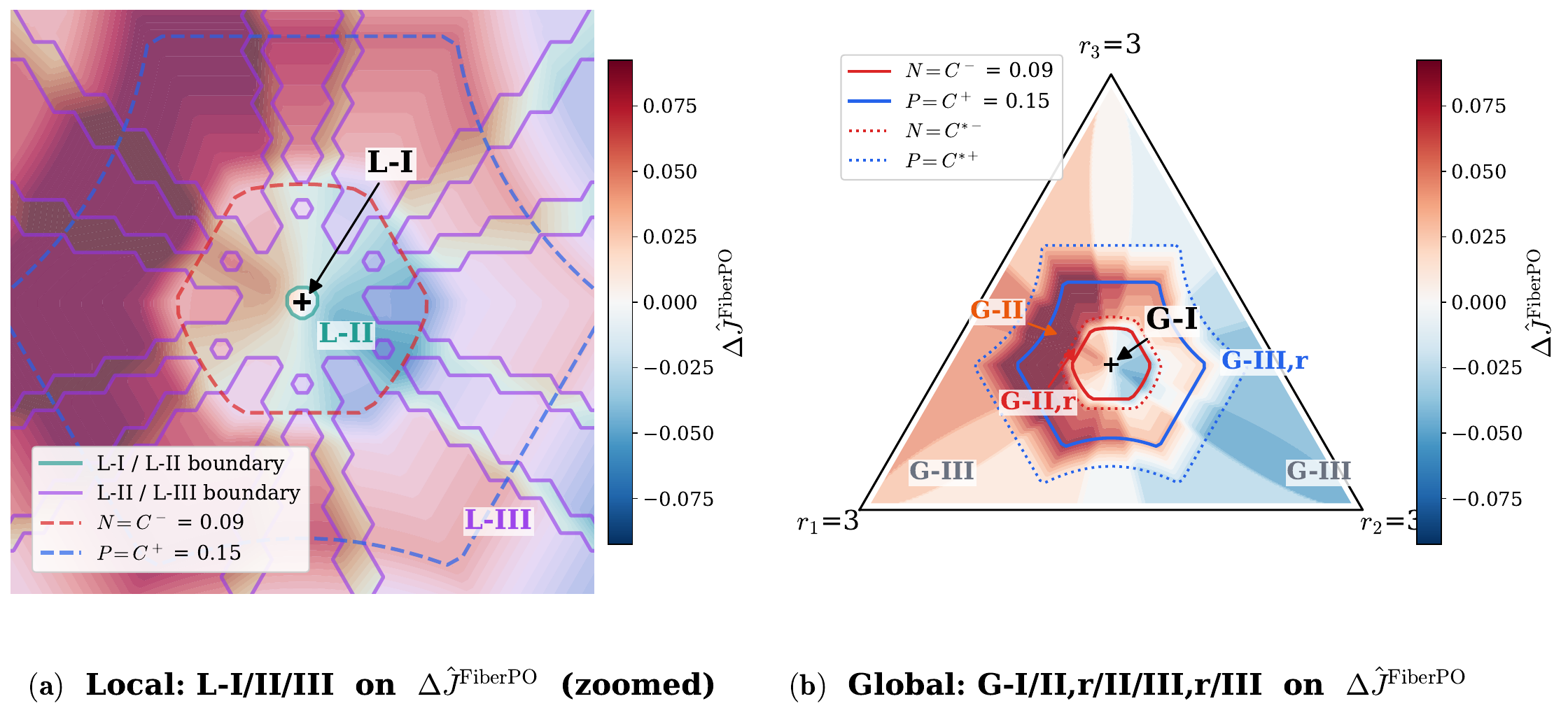}
    \caption{Regime map on the probability simplex given by all
      possible configurations of a trajectory's policy ratios
      (subject to probabilistic constraint ($\frac 1{T_{\rm
          trajectory}}\sum_{t=0}^{T_{\rm trajectory}-1} r_t = 1$)
      where $T{=}3$, $\varepsilon{=}0.025$, $C^+{=}0.15$,
      $C^-{=}0.09$, $K{=}3$,
      $C^{*\pm}{=}(1{+}1/K)\,C^{\pm}$). \textbf{(a)}~Local
      branch (zoomed $\pm 0.18$ around centroid): L-I (no clip,
      central polytope inside teal boundary), L-II (partial clip,
      between teal solid and purple solid boundaries), and L-III
      (full clip, outside purple boundary). Red and blue dashed
      contours show the rollback thresholds $N{=}C^-$ and
      $P{=}C^+$. \textbf{(b)}~Global branch (full simplex): five
      concentric regimes---G-I (inside red solid $N{=}C^-$),
      G-II,r (rollback transition, between red solid $N{=}C^-$
      and red dotted $N{=}C^{*-}$), G-II (between red dotted
      $N{=}C^{*-}$ and blue solid $P{=}C^+$), G-III,r (rollback
      transition, between blue solid $P{=}C^+$ and blue dotted
      $P{=}C^{*+}$), and G-III (outside blue dotted
      $P{=}C^{*+}$).}
    \label{fig:fiberpo_regime_map}
\end{figure}

\subsection{Per-Token and Total Objective}
\label{subsubsec:total_objective}

Substituting the gated ratio decomposition~\eqref{eq:fiberpo_gating_decomp} into the FiberPO objective~\eqref{eq:fiberpo_main}, the per-token objective contribution is:
\begin{equation}\label{eq:per_token_obj}
    \frac{1}{|\mathrm{Tj}^{\theta_{\rm old}}|}\,\frac{1}{T_\tau}\;w^{\rm base}_\tau \cdot \tilde r_i^{\rm fiber} \cdot \hat A_i,
\end{equation}
and the total objective is
\begin{equation}\label{eq:total_obj}
    \hat J^{\rm FiberPO}(\theta|\theta_{\rm old}) = \sum_{(s,a,\tau,t)\in\bar{\mathcal X}} \frac{1}{|\mathrm{Tj}^{\theta_{\rm old}}|}\,\frac{1}{T_\tau}\;w^{\rm base}_\tau \cdot \tilde r_i^{\rm fiber} \cdot \hat A_{s,a}.
\end{equation}
In terms of the fiber residual $u_i$, opposite-sign aggregate $v_i$, and aggregate ratios $\log s^\pm_\tau$, this can be expressed as:
\[
    \hat J^{\rm FiberPO} = \sum_{i\in\bar{\mathcal X}} \frac{1}{|\mathrm{Tj}^{\theta_{\rm old}}|}\,\frac{1}{T_\tau}\;\frac{\exp\circ\,g^{\rm agg}(\log s^+_\tau, C^+, T_\tau)}{\exp\circ\,g^{\rm agg}(\log s^-_\tau, C^-, T_\tau)} \cdot \frac{\operatorname{logclip}(e^{l_i u_i}, \epsilon)}{\operatorname{logclip}(e^{l_i v_i}, \epsilon)} \cdot \hat A_i.
\]

\begin{figure}[t]
    \centering
    \includegraphics[width=\textwidth]{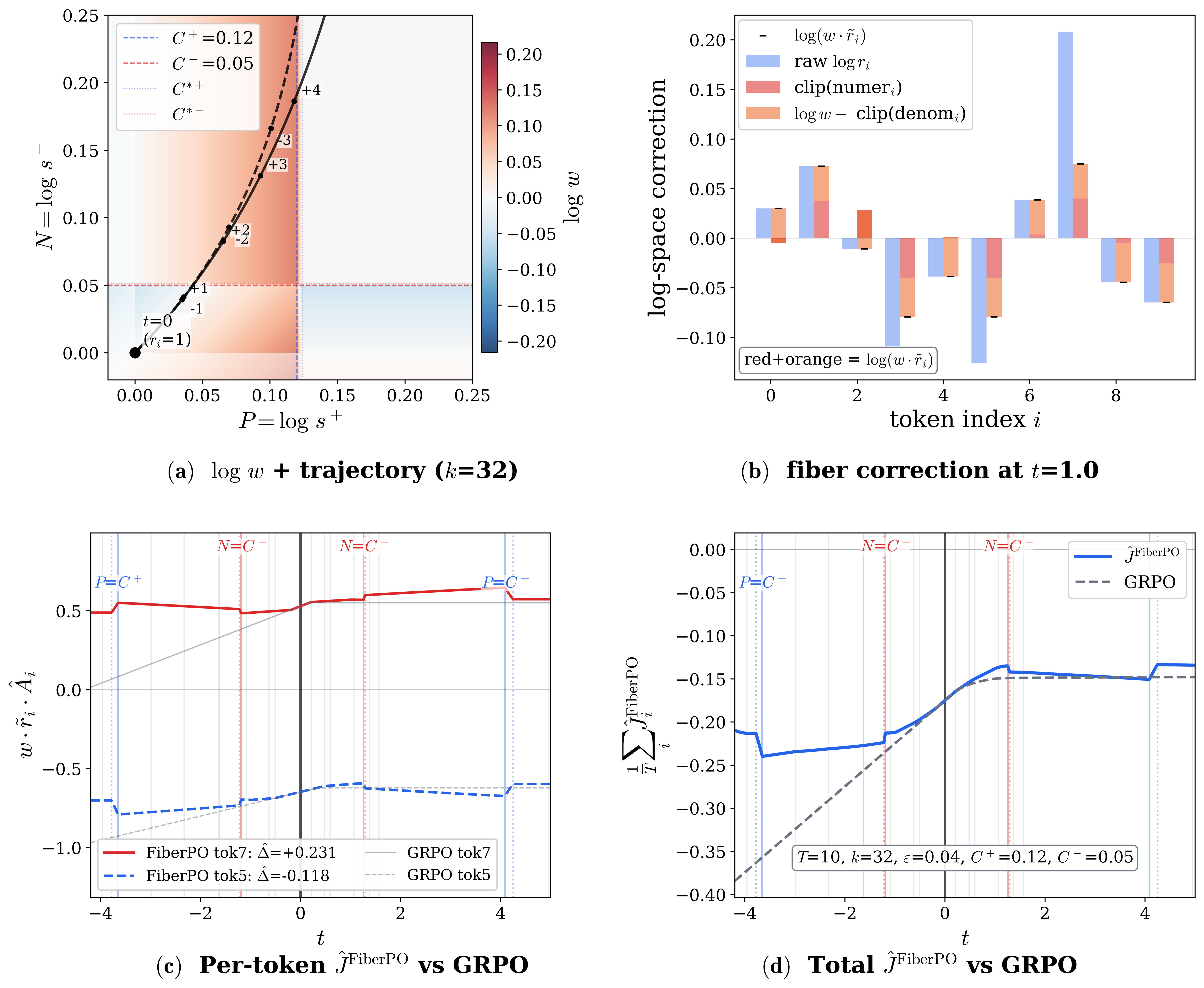}
    \caption{Per-token and total FiberPO objective under parameterized drift $r_i = 1 + t\hat\Delta_i$ ($T{=}10$, $k{=}32$, $\epsilon{=}0.04$, $C^+{=}0.12$, $C^-{=}0.05$), with $\operatorname{sign}(\hat\Delta_i) = \operatorname{sign}(\hat A_i)$, reflecting the typical policy update in which tokens with positive advantage increase their probability ($\hat\Delta_i > 0$) and tokens with negative advantage decrease their probability ($\hat\Delta_i < 0$). GRPO in the graph uses the same $\epsilon$-clip as FiberPO.\\[0.5em]
    \textbf{(a)}~Trajectory in $(\log s^+, \log s^-)$-space overlaid on the $\log w$ heatmap; as drift increases, the trajectory moves through the global regimes of Figure~\ref{fig:fiber_weight}(b).
    \textbf{(b)}~Per-token fiber correction at $t{=}1$: fiber residual (red) and base-weight contribution (orange) stack to form the full gated log-ratio.
    \textbf{(c)}~Per-token objective for two representative tokens.
    \textbf{(d)}~Total objective: FiberPO's $g^{\rm agg}$ and $\operatorname{logclip}$ saturate, causing the objective to return to baseline.
    Vertical markers: $\epsilon$-clip transitions (gray, L-I$\to$L-II), $C^-$ rollback (red, G-II,r onset), $C^+$ rollback (blue, G-II,r onset).}
    \label{fig:fiberpo_objective}
\end{figure}

Figure~\ref{fig:fiberpo_objective} traces the per-token and total objective under varying policy drift.
The key features are:
\begin{itemize}
\item \textbf{Local regulation engages first} (gray vertical
  lines, panels c,\,d): individual tokens hit the $\epsilon$-clip
  boundary ($|u_i| = \epsilon$, L-I$\to$L-II transition) well before the
  trajectory-level aggregates reach the budget threshold. At this
  stage $w^{\rm base}$ is in G-I, so only per-token
  spikes are capped while the trajectory-level
  importance-sampling correction remains intact.
\item \textbf{Rollback activates on aggregate drift} (between solid and dotted red/blue
  vertical lines): once $\log s^-_\tau$ exceeds $C^-$ (or $\log
  s^+_\tau$ exceeds $C^+$), $g^{\rm agg}$ enters rollback regime. The
  base weight $w^{\rm base}_\tau$ begins to oppose the drift
  direction, producing a restorative gradient (G-II,r).
\item \textbf{Non-monotonic pattern} (panel d): the total objective
  rises, peaks as $|\log w^{\rm base}_\tau|$ reaches its maximum
  (G-II,r), then declines through mutual rollback (G-III,r) and
  collapses to the no-reweighting baseline when both channels are
  fully gated (G-III, $w^{\rm base}_\tau = 1$).
\end{itemize}

\subsection{Theoretical Analysis}
\label{subsec:fiberpo_analysis}

Having defined the FiberPO objective and its two-scale gating structure, we now analyze the gradient-level properties that distinguish it from existing proximal methods.

\paragraph{Gradient Decomposition via the Ratio Gating Jacobian.}
\label{subsubsec:jacobian}

The gradient of a generic RGF objective (Definition~\ref{def:rgf}), and thus of FiberPO, with respect to the model parameters $\theta$ admits the decomposition:
\begin{equation}\label{eq:grad_decomp}
    \frac{\partial \hat J(\theta|\theta_{\rm old})}{\partial \theta}=\sum_{k\in {\cal E}}\frac{\partial z_k}{\partial \theta}\frac{\partial \hat J}{\partial z_k}=
    \sum_{i,j,k \in {\cal E}}{ \underbrace{ \frac{\partial z_k}{\partial \theta}\,\mu_{i}}_{\substack{\text{parametrization-related}\\\text{prior weights}}}
    \underbrace{\underbrace{\frac{\partial \mathcal G( r_\bullet)_{i}}{\partial r_{j}}}_{\mathfrak J(\mathcal G)_{ij}}\;\underbrace{\left( r_j(\delta_{jk}-(\pi_\theta)_k) \right)}_{\mathbf\Pi_{jk}\,=\,\frac{\partial r_j}{\partial z_k}}}_{\text{ratio gating factor}}
    \underbrace{\hat A_{\psi( i)}}_{\text{advantage force}}
       },
\end{equation}
where $z_i$ is the logit parameter of the policy, i.e., $\pi_\theta(\cdot |s)=\operatorname{softmax}(\mathbf z(\theta,s))$ (see Appendix~\ref{app:ratio_space}), $\delta_{jk}$ is the Kronecker delta, $(\pi_\theta)_k \equiv \pi_\theta(a_k|s_k)$ is the policy probability at the state-action pair indexed by $k \in \mathcal E$, and $\psi(i)$ maps an augmented index $i \in \mathcal E$ to its underlying state-action pair $(s,a)$.

The gradient decomposes into three factors: the parametrization-related prior weights $\frac{\partial z_k}{\partial \theta}\mu_{i}$, which are fixed a priori; the advantage force $\hat A_{\psi(i)}$, which comes from the reward model and is also given; and the ratio gating factor, which is the only component determined by the choice of surrogate objective.
Within the ratio gating factor, the matrix $\mathbf \Pi\equiv {\bigl[ r_j(\delta_{jk}-(\pi_{\theta})_k) \bigr]}_{j,k\in {\cal E}}$ is simply the transformation matrix from logit space to ratio space $[\frac{\partial r_j}{\partial z_k}]$.
Therefore, the Jacobian of the ratio gating map, $\mathfrak J(\mathcal G)_{ij} \equiv \frac{\partial \mathcal G( r_\bullet)_{i}}{\partial r_{j}}$, is the central object that determines how different surrogate objectives shape the policy update.

\paragraph{The FiberPO Ratio Gating Jacobian.}
\label{subsubsec:fiberpo_jacobian}

To characterize how this two-scale gating shapes gradient flow, we compute the Jacobian of the ratio transform $\mathcal G$.
Since $\mathcal G_i = \exp(\log \mathcal G_i)$, the Jacobian $\mathfrak J(\mathcal G)_{ij} := \frac{\partial \mathcal G(r_\bullet)_i}{\partial r_j}$ factors as $\frac{\mathcal G_i}{r_j}\cdot\frac{\partial \log \mathcal G_i}{\partial \log r_j}$.
Using the gating decomposition~\eqref{eq:fiberpo_gating_decomp} and differentiating through $g^{\rm agg}$, $\operatorname{logclip}$, and the aggregate ratios $s_\tau^\pm$, one obtains:

\begin{proposition}[Jacobian of FiberPO ratio transform]\label{prop:fiberpo_jacobian}
The Jacobian of the FiberPO ratio gating map is:
\begin{equation}\label{eq:fiberpo_jacobian}
    \mathfrak J(\mathcal G)_{ij} = \frac{\mathcal G(r_\bullet)_i}{r_j}\;\mathbb I_{\operatorname{tj}_i=\operatorname{tj}_j} \left[\;\underbrace{\mathbb I_{|u_i|\leq\epsilon}\;\mathbb I_{i=j}}_{\text{(a) local self-gating}} \;+\; \frac{1}{T_{\operatorname{tj}_i}}\underbrace{\left(\gamma_{\operatorname{tj}_i}^{(l_j)} - \mathbb I_{|u_i|\leq\epsilon}\;\mathbb I_{l_i=l_j} - \mathbb I_{|v_i|\leq\epsilon}\,(1-\mathbb I_{l_i=l_j})\right)}_{\text{(b) trajectory-mediated coupling}}\;\right],
\end{equation}
where:
\begin{itemize}
    \item $\mathrm{tj}_i$ represents the trajectory which
      state-action pair in $i$ belongs to,
    \item $u_i := l_i\log r_i - \log s_{\operatorname{tj}_i}^{(l_i)}$ is the fiber residual of token $i$ (Eq.~\ref{eq:fiber_residual}),
    \item $v_i := -\log s_{\operatorname{tj}_i}^{(-l_i)}$ is the opposite-sign aggregate (Eq.~\ref{eq:opposite_sign_agg}),
    \item $\gamma_\tau^{(l)} := (g^{\rm agg})'(\log
      s_\tau^{(l)},\;C^{(l)},\;T_\tau)$ is the derivative of the
      base gate (Eq.~\ref{eq:gagg_main}), taking the value $1$ in the pass-through regime
      ($|\log s_\tau^{(l)}|\leq C^{(l)}$), $-T_\tau$ in the rollback
      regime ($C^{(l)} < |\log s_\tau^{(l)}| <
      (1+T_\tau^{-1})C^{(l)}$), and $0$ in the zeroed regime ($|\log s_\tau^{(l)}| \geq (1+T_\tau^{-1})C^{(l)}$).
\end{itemize}
\end{proposition}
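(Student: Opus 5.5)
The plan is to work entirely in log coordinates. Write $x_j := \log r_j$. By the chain rule $\partial \mathcal G_i/\partial r_j = (\mathcal G_i/r_j)\,\partial \log\mathcal G_i/\partial x_j$, so it suffices to compute $\partial \log\mathcal G_i/\partial x_j$ and show it equals the bracket in \eqref{eq:fiberpo_jacobian}.

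From \eqref{eq:fiberpo_gating_decomp} and \eqref{eq:gated_residual_uv}, with $\tau=\mathrm{tj}_i$ and $l=l_i$,
\[
\log\mathcal G_i = g^{\rm agg}(\log s^+_\tau,C^+,T_\tau) - g^{\rm agg}(\log s^-_\tau,C^-,T_\tau) + \operatorname{clip}(l u_i,\epsilon) - \operatorname{clip}(l v_i,\epsilon).
\]
Every term depends only on $x_j$ for $j$ in trajectory $\tau$. This gives the factor $\mathbb I_{\mathrm{tj}_i=\mathrm{tj}_j}$, and hence the block-diagonal structure.

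The elementary derivatives come from \eqref{eq:aggregate_ratios}. Away from $x_j=0$ (generic points, where $l_j$ is locally constant),
\[
\partial \log s^+_\tau/\partial x_j=\mathbb I_{l_j=+1}/T_\tau, \qquad \partial\log s^-_\tau/\partial x_j=-\mathbb I_{l_j=-1}/T_\tau .
\]
Equivalently, $\partial \log s_\tau^{(l)}/\partial x_j = l\,\mathbb I_{l_j=l}/T_\tau$.

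The base part then contributes $\gamma^{+}_\tau\mathbb I_{l_j=+1}/T_\tau + \gamma^-_\tau\mathbb I_{l_j=-1}/T_\tau = \gamma_\tau^{(l_j)}/T_\tau$. Here $\gamma$ is the a.e. derivative of the piecewise-linear $g^{\rm agg}$, with values $1$, $-T_\tau$ or $0$ read off \eqref{eq:gagg_main}.

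For the fiber part, $l u_i = x_i - l\log s^{(l)}_\tau$, so
\[
\partial(lu_i)/\partial x_j = \mathbb I_{i=j} - \mathbb I_{l_j=l}/T_\tau .
\]
Multiplying by the clip derivative $\mathbb I_{|u_i|\le\epsilon}$ (note $|lu_i|=|u_i|$) yields term (a) plus the $-\mathbb I_{|u_i|\le\epsilon}\mathbb I_{l_i=l_j}/T_\tau$ piece of (b).

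Similarly, $l v_i = -l\log s^{(-l)}_\tau$, with derivative $-l\cdot(-l)\mathbb I_{l_j=-l}/T_\tau = \mathbb I_{l_j\ne l}/T_\tau$. Since this enters with a minus sign, times $\mathbb I_{|v_i|\le\epsilon}$, it gives the last term of (b). Summing the three contributions reproduces the bracket exactly.

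The main obstacle is not algebraic but concerns regularity. The maps $\max(\cdot,0)$, $\operatorname{sign}$, $\operatorname{clip}$ and $g^{\rm agg}$ are only piecewise smooth, so the formula holds at points where no $x_j=0$, no $|u_i|$ or $|v_i|$ equals $\epsilon$, and no $\log s^{(l)}_\tau$ sits at a breakpoint $C$ or $C^*$. I will state the result for these points, which form an open set of full measure, with the indicator conventions interpreted as a choice of one-sided derivative elsewhere. This matches how the gradient is used in practice.

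There is one further check. At $x_j=0$ the token's channel switches, but its contribution to both $\log s^\pm_\tau$ is continuous there, so the one-sided derivatives differ only by the channel assignment. This is consistent with the formula's use of $l_j$.
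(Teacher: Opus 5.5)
Your proof is correct and follows the route the paper indicates: factor $\mathfrak J(\mathcal G)_{ij} = (\mathcal G_i/r_j)\,\partial\log\mathcal G_i/\partial\log r_j$, then differentiate $\log\mathcal G_i$ term by term through $g^{\rm agg}$, the two $\operatorname{clip}$'s and the aggregates $\log s^\pm_\tau$; your sign bookkeeping via $\partial\log s^{(l)}_\tau/\partial x_j = l\,\mathbb I_{l_j=l}/T_\tau$ reproduces each term of the bracket exactly. The paper gives this only as a one-line sketch and never states that the formula holds almost everywhere, so restricting to points away from $x_j=0$, $|u_i|=\epsilon$, $|v_i|=\epsilon$ and the breakpoints of $g^{\rm agg}$ is a useful addition, not a departure.
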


\noindent
Three structural properties follow from this expression.

\begin{property}[Trajectory independence]\label{prop:traj_independence}
The factor $\mathbb I_{\operatorname{tj}_i=\operatorname{tj}_j}$ ensures that tokens in different trajectories are fully decoupled: $\mathfrak J(\mathcal G)_{ij} = 0$ whenever $\operatorname{tj}_i \neq \operatorname{tj}_j$.
Equivalently, the Jacobian is block-diagonal over trajectories, so each trajectory's gradient is self-contained.\footnote{It is still worth mention that there might still be cross-trajectory coupling via the parametrization-related prior weights $\frac{\partial z_k}{\partial \theta}$ in the full gradient decomposition (Eq.~\ref{eq:grad_decomp}), however, (i) a canonical choice of parametrization should have different parameters controlling for unrelated trajectories, thus typically leading to sparse $\frac{\partial z_k}{\partial \theta}$ and thus minimal unrelated cross-trajectory coupling. In contrast, the related trajectory-level coupling of parametrization is in fact desirable, as it allows more stable and robust gradient direction provided by correlated gradients of related trajectories. (ii) The parametrization-related prior weights $\frac{\partial z_k}{\partial \theta}$ are independent of the choice of surrogate objective and thus is not possible to further control with surrogate formulations. The block-diagonal Jacobian already provides maximal decoupling that can be achieved at the surrogate level.}
\end{property}

\begin{property}[First-order agreement]\label{prop:first_order}
At the on-policy point ($r_\bullet = \mathbf 1$), the Jacobian reduces to $\mathfrak J(\mathcal G)_{ij} = \mathbb I_{i=j}/r_j$, the identity gating.
The FiberPO surrogate therefore recovers the linear surrogate $J^{(1)}(\theta|\theta_{\rm old})$ and the true RL objective to first order near on-policy (Theorem~\ref{thm:fbg_first_order}).
\end{property}

\begin{property}[Scale separation]\label{prop:scale_separation}
The local self-gating term~(a) in Eq.~\ref{eq:fiberpo_jacobian}
has $O(1)$ magnitude. The coupling term~(b) is weighted by
$1/T_\tau$, so trajectory-mediated effects scale inversely with
sequence length. In the near-on-policy regime, the local gradient
dominates; trajectory-level corrections become significant only
as aggregate drift grows.
\end{property}

%% file: rlvr_fiberpo_sec4.tex
\section{Fibration Hierarchy and FiberPO-Domain}
\label{subsec:fibration_hierarchy}

The fiber bundle model of Section~\ref{subsubsec:fiber_bundle}
captures a single level of abstraction: tokens are grouped into
trajectories, and Fiber Bundle Gating decomposes stability
control into a base gate (trajectory level) and a fiber gate
(token level). In practice, however, dependency and
classification in LLM RL training are inherently hierarchical.
Tokens belong to trajectories, trajectories belong to prompt
groups, and prompt groups belong to domains such as mathematics,
code generation, and instruction following. There is no reason to
restrict the fibration to a single level, and the algebraic
structure of FBG extends naturally to a chain of fibrations.

This generalization is not merely a matter of symbolic
generality. Different levels of the hierarchy demand different
trust-region radii for substantive reasons. First, different
domains may be trained at different stages and therefore have
different proximity to the optimum, requiring appropriately sized
per-domain trust regions (a radius that is too large causes
instability, while one that is too small leads to premature
convergence to a local optimum). Second, parameters associated
with each hierarchical level carry different degrees of
importance in the network architecture and should accordingly be
given different amounts of flexibility. Third, even the positive
and negative drift channels within each level demand different
budgets, to avoid overly optimistic updates from accidental
rewards or excessive punishment of exploratory behavior. Existing
methods provide only coarse control over these demands: GRPO
applies a uniform per-token clip, GSPO operates at a single
trajectory level, and APC-Obj and TRPO enforce a global aggregate
trust region that does not discriminate across finer structures.
FiberPO-Trajectory (Section~\ref{subsec:fiberpo}) improves on
these by coupling trajectory-level and token-level control, but
it operates at only two levels of the hierarchy. In this section,
we show that the fibration formalism generalizes naturally to
arbitrarily many hierarchical levels, providing a principled
compositional framework for multi-scale trust-region control. We
formalize this generalization as the Uniform Fibration Hierarchy
and its associated Fibration Gating Hierarchy, and demonstrate
its utility by deriving FiberPO-Domain, a concrete four-level
instantiation (domain, prompt group, trajectory, token). The same
construction extends readily to any level of hierarchical
complexity that a given training setup requires.

\subsection{Uniform Fibration Hierarchy and Fibration Gating Hierarchy}
\label{subsubsec:fgh}

We begin by extending the fiber bundle model to a chain of
fibrations, each representing one level of abstraction.

\begin{definition}[Uniform Fibration Hierarchy]\label{def:uniform_fibration}
  A \emph{uniform fibration hierarchy} of depth $n$ is a chain of
  spaces $\{B_k\}_{k\in\{0,\ldots,n\}}$ (called \emph{strata})
  together with fibrations $\{\pi_k\}_{k\in\{1,\ldots,n\}}$,
  where each $\pi_k: B_k \to B_{k-1}$ is a fibration
  projection:
\begin{equation}\label{eq:fibration_chain}
B_0 \overset{\pi_1}{\longleftarrow} B_1 \overset{\pi_2}{\longleftarrow} B_2
\overset{\pi_3}{\longleftarrow} \cdots \overset{\pi_n}{\longleftarrow} B_n \equiv E.
\end{equation}
The chain of fibrations naturally induces a chain of pushforward maps on density spaces:
\begin{equation}\label{eq:pushforward_chain}
\mathbf{B}_0 \overset{\pi_{1*}}{\longleftarrow} \mathbf{B}_1
\overset{\pi_{2*}}{\longleftarrow} \mathbf{B}_2
\overset{\pi_{3*}}{\longleftarrow} \cdots
\overset{\pi_{n*}}{\longleftarrow} \mathbf{B}_n \equiv \mathbf{E},
\end{equation}
where $\mathbf{B}_k$ denotes the space of densities on $B_k$.
\end{definition}

\noindent
Building on the uniform fibration hierarchy, we equip each stratum with gating
infrastructure analogous to Fiber Bundle Gating (Definition~\ref{def:fbg}).

\begin{definition}[Fibration Gating Hierarchy (FGH)]\label{def:fgh}
A \emph{Fibration Gating Hierarchy} consists of the following objects:
\begin{enumerate}
    \item[\textbf{Object~1.}] \textbf{Uniform fibration hierarchy}
    $\{B_k, \pi_k\}$ (Definition~\ref{def:uniform_fibration}).

    \item[\textbf{Object~2.}] \textbf{Reflecting Markov kernels}
    $\{K_k\}_{k\in\{0,\ldots,n{-}1\}}$, $K_k: \mathbf{B}_k \to \mathbf{B}_{k+1}$:
    \[
    \mathbf{B}_0 \overset{K_0}{\longrightarrow} \mathbf{B}_1
    \overset{K_1}{\longrightarrow} \mathbf{B}_2
    \overset{K_2}{\longrightarrow} \cdots
    \overset{K_{n-1}}{\longrightarrow} \mathbf{B}_n \equiv \mathbf{E}.
    \]

    \item[\textbf{Object~3.}] \textbf{Atomic gating functions} on each stratum,
    $\{g_{k,p_{<k}}\}_{k\in\{0,\ldots,n\}}$:
    \[
    g_{k,p_{<k}}: \mathbf{B}_k \to \mathbf{B}_k,
    \]
    where $g_{k,p_{<k}}$ may depend on the densities on all lower strata
    $p_{<k} \equiv (p_0, p_1, \ldots, p_{k-1})$.

    \item[\textbf{Object~4.}] \textbf{Fibration decomposition and recovery maps}, the
    same as in FBG (Definition~\ref{def:fbg}):
    $\mathcal{F}: \mathbb{R}^{\mathcal{E}} \to \mathbf{E}$, \quad
    $\mathcal{R}: \mathbf{E} \to \mathbb{R}^{\mathcal{E}}$.
\end{enumerate}

\noindent
These objects are subject to the following conditions:
\begin{enumerate}[label=(\roman*)]
    \item \textbf{Reflecting condition:}
    $\pi_{k*} \circ K_{k-1} = \mathrm{id}_{\mathbf{B}_{k-1}}$ for $0 < k \leq n$,
    equivalently
    $\pi_{k*}(\sigma - K_{k-1}(\pi_{k*}\sigma)) = 0$, ensuring that the residual at
    each level carries no lower-stratum information.

    \item \textbf{Recovery condition:}
    $\mathcal{R} \circ \mathcal{F} = \mathrm{id}_{\mathbb{R}^{\mathcal{E}}}$.

    \item \textbf{Locality condition:}
    $g_{k,p_{<k}} = \bigoplus_{b \in B_k} g_{k,p_{<k},b}$, i.e.,
    $g_{k,p_{<k}}(\sigma_k)_b = g_{k,p_{<k},b}((\sigma_k)_b)$ for all
    $\sigma_k \in \mathbf{B}_k$ and $b \in B_k$. Each atomic gating function acts
    pointwise over the elements of its stratum.
\end{enumerate}
\end{definition}

\noindent
For notational convenience, we define sequential compositions of the fibration and
kernel maps. Let
\begin{align}
K_{m\leftarrow k} &:= K_{m-1} \circ K_{m-2} \circ \cdots \circ K_{k+1} \circ K_k
\quad \text{for } k < m, \qquad K_{k\leftarrow k} := \mathrm{id}_{\mathbf{B}_k},
\label{eq:K_composition}\\
\pi_{k\leftarrow m} &:= \pi_{k+1} \circ \pi_{k+2} \circ \cdots \circ \pi_{m-1} \circ \pi_m
\quad \text{for } k < m, \qquad \pi_{k\leftarrow k} := \mathrm{id}_{B_k},
\label{eq:pi_composition}\\
\pi_{k\leftarrow m\,*} &:= \pi_{k+1\,*} \circ \pi_{k+2\,*} \circ \cdots \circ \pi_{m-1\,*} \circ \pi_{m*}
\quad \text{for } k < m, \qquad \pi_{k\leftarrow k\,*} := \mathrm{id}_{\mathbf{B}_k}.
\label{eq:pi_push_composition}
\end{align}
We also write $\pi_{<k\,*}\sigma \equiv (\pi_{0\leftarrow n\,*}\sigma,\;
\pi_{1\leftarrow n\,*}\sigma,\; \ldots,\; \pi_{k-1\leftarrow n\,*}\sigma)$
for $\sigma \in \mathbf{E} \equiv \mathbf{B}_n$, collecting the pushed-forward
densities on all strata below level~$k$.

\medskip\noindent
The FGH gating function generalizes the FBG operator~\eqref{eq:fbg} by gating the
residual at each stratum and reflecting all contributions back to the total space:
\begin{equation}\label{eq:fgh_gating}
G(\sigma) \;:=\; K_{n\leftarrow 0} \circ g_0 \circ \pi_{0\leftarrow n\,*}(\sigma)
\;+\; \sum_{0 < k \leq n} K_{n\leftarrow k} \circ
g_{k,\pi_{<k\,*}\sigma}\!\left(\pi_{k\leftarrow n\,*}\sigma
- K_{k-1}\!\left(\pi_{k-1\leftarrow n\,*}\sigma\right)\right),
\end{equation}
which admits the compact form
\begin{equation}\label{eq:fgh_gating_compact}
G(\sigma) \;=\; \sum_{0 \leq k \leq n} K_{n\leftarrow k} \circ G_k \circ
\pi_{k\leftarrow n\,*}(\sigma),
\end{equation}
where the per-stratum operator $G_k$ is defined by
\begin{equation}\label{eq:Gk_def}
G_k(\sigma_k) := \begin{cases}
g_0(\sigma_k) & \text{if } k = 0, \\[0.3em]
g_{k,\pi_{<k\,*}\sigma}\!\left(\sigma_k - K_{k-1} \circ \pi_{k*}(\sigma_k)\right)
& \text{otherwise.}
\end{cases}
\end{equation}
The structure mirrors that of Fiber Bundle Gating: the density $\sigma$ is pushed forward
to each stratum $\mathbf{B}_k$ via $\pi_{k\leftarrow n\,*}$, the contribution already
accounted for at stratum $k{-}1$ is subtracted through
$\sigma_k - K_{k-1} \circ \pi_{k*}(\sigma_k)$, the residual is gated by
$g_{k,\pi_{<k\,*}\sigma}$, and all gated components are reflected back to $\mathbf{E}$
via $K_{n\leftarrow k}$ and summed.

\medskip\noindent
The associated FGH surrogate objective is defined analogously to the FBG gating
form~\eqref{eq:fbg_rgf}:
\begin{equation}\label{eq:fgh_objective}
\hat{J}^{\rm FGH}(\theta|\theta_{\rm old})
= \sum_{i \in \mathcal{E}} \mu_i \;\mathcal{R} \circ G \circ
\mathcal{F}(r_\bullet(\theta|\theta_{\rm old}))_i \;\hat{A}_i.
\end{equation}

\begin{remark}[Recovery of Fiber Bundle Gating]\label{rem:fbg_recovery}
When $n = 1$, the fibration hierarchy reduces to a single fiber bundle $\pi_1: B_1 \equiv E \to B_0 \equiv B$, and the FGH definitions recover exactly the FBG framework of Definition~\ref{def:fbg}. In this case, the gating function~\eqref{eq:fgh_gating} reduces to $G(\sigma) = K_0(g_0(\pi_{1*}\sigma)) + g_{1,\pi_{1*}\sigma}(\sigma - K_0(\pi_{1*}\sigma))$, which is precisely~\eqref{eq:fbg}. The FiberPO objective (Definition~\ref{def:fiberpo_main}), which operates at the trajectory and token levels, corresponds to $n = 1$.
\end{remark}

\subsection{FiberPO-Domain}
\label{subsubsec:domain_fiberpo}

We now instantiate the Fibration Gating Hierarchy with $n = 3$ to obtain FiberPO-Domain,
which extends FiberPO's two-level gating (trajectory, token) to a four-level hierarchy:
domain, prompt group, trajectory, and token.

We first introduce the relevant notation. Let $D \in
\mathrm{Domain}$ denote a domain index (e.g., mathematics, code,
instruction following), and let $D_g$ denote the domain of prompt
group~$g$. For simplicity, we assume that training prompts are
not composite domain tasks, so that we may also write $D_\tau$
for the domain of trajectory $\tau$ and $D_i:=D_\tau$ for the
domain of an augmented state-action pair $i \equiv (s, a, \tau,
t) \in \bar{\mathcal{X}}$. The prompt group index of
trajectory~$\tau$ is denoted $g_\tau$, and $g_i$ denotes the
prompt group of augmented pair~$i$. Let $\tau_i$ denote the
trajectory to which $i$ belongs, $T_\tau$ the length of
trajectory $\tau$, $T_D := \sum_{\tau \in D} T_\tau$ the total
number of state-action pairs classified to domain~$D$, and $T_g
:= \sum_{\tau \in g} T_\tau$ the total number of state-action
pairs within prompt group~$g$. We write $\mathrm{Tj}^{\theta_{\rm
    old}}(D)$ for the set of trajectories in domain~$D$ and
$\mathrm{Tj}^{\theta_{\rm old}}(g)$ for those in prompt
group~$g$.

\begin{definition}[FiberPO-Domain]\label{def:domain_fiberpo}
The FiberPO-Domain objective is:
\begin{equation}\label{eq:domain_fiberpo}
\hat{J}^{\text{FiberPO-Domain}}(\theta|\theta_{\rm old})
= \sum_{i \in \bar{\mathcal{X}}} \frac{1}{|\mathrm{Tj}^{\theta_{\rm old}}|}
\,\frac{1}{T_\tau}\; w_i^{\rm Base} \cdot \tilde{r}_i^{\rm Fiber} \cdot \hat{A}_i.
\end{equation}
The base weight decomposes into six gated aggregate terms, three
per sign channel, corresponding to the domain, prompt group, and
trajectory levels:
\begin{equation}\label{eq:domain_fiberpo_base_weight}
w_i^{\rm Base} := \exp\!\left(\begin{array}{rl}
& g^{\rm agg}(\log s^+_{D_i},\, C^+,\, T_{D_i})
\;+\; g^{\rm agg}(\log s^+_{g_i} - \log s^+_{D_i},\, C^+,\, T_{g_i})
\;+\; g^{\rm agg}(\log s^+_{\tau_i} - \log s^+_{g_i},\, C^+,\, T_{\tau_i})
\\[0.5em]
-\; & g^{\rm agg}(\log s^-_{D_i},\, C^-,\, T_{D_i})
\;-\; g^{\rm agg}(\log s^-_{g_i} - \log s^-_{D_i},\, C^-,\, T_{g_i})
\;-\; g^{\rm agg}(\log s^-_{\tau_i} - \log s^-_{g_i},\, C^-,\, T_{\tau_i})
\end{array}\right),
\end{equation}
where the aggregate gate $g^{\rm agg}$ is defined in
Eq.~\ref{eq:gagg_main}. The gated residual is identical to
FiberPO's fiber gate (Eq.~\ref{eq:gated_residual}):
\begin{equation}\label{eq:domain_fiberpo_residual}
\tilde{r}_i^{\rm Fiber}
:= \frac{\operatorname{logclip}\!\bigl((s_{\tau_i}^{(l_i)})^{-l_i}\, r_i,\;
\epsilon\bigr)}{\operatorname{logclip}\!\bigl((s_{\tau_i}^{(-l_i)})^{-l_i},\;
\epsilon\bigr)},
\end{equation}
where $l_i := \operatorname{sign}(\log r_i)$. The
prompt-group-level and domain-level aggregate log-ratios
appearing in the base weight are defined as:
\begin{equation}\label{eq:group_domain_aggregates}
\log s_g^{(l)} := \frac{1}{|\mathrm{Tj}^{\theta_{\rm old}}(g)|}
\sum_{\tau \in g} \log s_\tau^{(l)}, \qquad
\log s_D^{(l)} := \sum_{g \in D}
\frac{|\mathrm{Tj}^{\theta_{\rm old}}(g)|}{|\mathrm{Tj}^{\theta_{\rm old}}(D)|}
\log s_g^{(l)}.
\end{equation}
The associated FGH objects (strata, reflecting kernels, atomic gates, and
decomposition/recovery maps) that realize this objective as an $n{=}3$ FGH instance
are specified in Appendix~\ref{app:fiberpo_domain_fgh}.
\end{definition}

\noindent
The base weight~\eqref{eq:domain_fiberpo_base_weight} has a
transparent hierarchical structure. Each of the three positive
terms gates the residual drift at one level of the hierarchy:
$\log s^+_{D_i}$ captures domain-level drift, $\log s^+_{g_i} -
\log s^+_{D_i}$ captures the prompt group's deviation from its
domain aggregate, and $\log s^+_{\tau_i} - \log s^+_{g_i}$
captures the trajectory's deviation from its prompt group
aggregate. The negative channel is treated symmetrically. At each
level, $g^{\rm agg}$ independently maintains a trust-region
budget with the same three-regime (pass-through, rollback,
zeroed) behavior as in FiberPO-Trajectory
(Section~\ref{subsubsec:base_weight}), scaled by the appropriate
count ($T_{D_i}$, $T_{g_i}$, or $T_{\tau_i}$).

FiberPO-Domain inherits the structural merits that
FiberPO-Trajectory demonstrates at the trajectory and token
levels. The first-order agreement with the true RL objective near
on-policy extends to the full hierarchy under the same identity
conditions on the atomic gates (Theorem~\ref{thm:fbg_first_order}
applied to the FGH framework). The ratio decoupling property
(Section~\ref{subsubsec:ratio_decoupling}), which ensures that
tokens are regulated by their deviation from the trajectory mean
rather than by their absolute log-ratio, now extends upward: each
trajectory is regulated by its deviation from the prompt group
aggregate, and each prompt group is regulated by its deviation
from the domain aggregate. The rollback mechanism operates
independently at each level, so that budget violation at one
level of the hierarchy does not contaminate the gating at other
levels. This information decoupling between hierarchical levels
provides substantially more versatile optimization control for
multi-domain LLM RL training than any single-scale or two-scale
method.

The FiberPO-Domain objective~\eqref{eq:domain_fiberpo} admits an
equivalent FGH gating form $\mathcal{G}(r_\bullet)_i = w_i^{\rm
  Base} \cdot \tilde{r}_i^{\rm Fiber} = \mathcal{R} \circ G \circ
\mathcal{F}(r_\bullet)_i$ corresponding to the $n = 3$ case of
Definition~\ref{def:fgh}. The detailed specification of the
constituent FGH objects (fibration hierarchy, reflecting Markov
kernels, atomic gating functions, and decomposition/recovery maps)
is given in Appendix~\ref{app:fiberpo_domain_fgh}.

\begin{remark}[Heterogeneous fibration systems]\label{rem:heterogeneous_fibration}
  The uniform fibration hierarchy assumes a linear chain of
  strata $B_0 \leftarrow B_1 \leftarrow \cdots \leftarrow B_n$,
  which suffices for the domain--prompt-group--trajectory--token
  hierarchy considered here. A more general construction replaces
  the linear chain with a rooted tree (or more generally, a
  quiver), where each edge carries a fibration and each vertex
  carries a gating function. Given a rooted tree with source and
  target maps $s, t: \mathrm{Edge} \to \mathrm{Vertex}$ (edges
  directed toward the root), one assigns a space $F_v$ to each
  vertex and a fibration $F_1(e): F_{s(e)} \to F_{t(e)}$ to each
  edge, together with reflecting kernels and atomic gates at each
  vertex. The resulting gating operator aggregates gated
  residuals from all vertices back to the total leaf space,
  generalizing the linear sum~\eqref{eq:fgh_gating} to a
  tree-indexed sum. This heterogeneous fibration system
  accommodates branching hierarchies (e.g., separate domain and
  modality classifications at the same level) and provides a
  framework for expressing gating algebras over arbitrary
  dependency structures.
\end{remark}

%% file: rlvr_conclusion.tex
\section{Conclusion}

This paper develops an algebraic approach to multi-scale stability
control in reinforcement learning for large language models. We
begin by proving that classical TRPO trust-region guarantees
collapse at discount factor $\gamma = 1$, the regime required by
sparse-reward LLM tasks, motivating a separation of the
trust-region maintenance mechanism from the specific radius
prescribed by the classical bound. We then derive Aggregational Policy Censoring Objective (APC-Obj), an unconstrained
clipping-based surrogate that is provably equivalent to
sample-based TV-TRPO under complete parametrization, establishing
that clipping-based surrogate design and trust-region policy
optimization are dual formulations of the same problem. Through
the Ratio Gating Formalism (RGF), we show that PPO, GRPO, and
GSPO each arise from APC-Obj via identifiable relaxation steps,
making each method's departure from the trust-region optimum
explicit.

Building on this analytical foundation, we introduce Fiber Bundle
Gating (FBG), a framework that organizes sampled RLHF data as a
fiber bundle and decomposes ratio gating into coordinated global
and local components operating on densities over the base and
total spaces. FBG preserves first-order agreement with the true
RL objective near on-policy whenever the atomic gates reduce to
identity at the reference point. From a relaxed APC-Obj formulation,
we derive FiberPO, a concrete FBG instantiation whose
trajectory-level aggregate gate and token-level residual gate
provide independent trust-region budgets at two scales. The
FiberPO Jacobian is block-diagonal over trajectories, reduces to
identity at on-policy, and exhibits a restorative gradient
structure in the rollback regime that is absent in all prior
methods.

To our knowledge, this is the first work to provide a principled
algebraic framework that unifies trust-region maintenance,
multi-scale gating, and gradient-level guarantees within a single
coherent formalism. A natural question is whether the fiber
bundle apparatus is necessary, or whether the same multi-scale
gating could be achieved with a carefully hand-crafted structured
clipping rule. We observe that methods coupling global and local
stability control necessarily involves a dependency structure in
which many local quantities depend on a shared global context.
This dependency defines a fibration $\pi_E: E \to B$, and the
three objects of a fiber bundle, the total space~$E$, the base
space~$B$, and the projection~$\pi_E$, correspond directly to the
components already present in the data. Methods that aggregate
local quantities into context-level statistics implicitly invoke
the pushforward $\pi_{E*}$, and methods that distribute
context-level information back to local entries implicitly
construct a Markov kernel~$K$. The formalism makes these
operations explicit and thereby exposes the reflecting condition
$\pi_{E*} \circ K = \mathrm{id}_{\mathbf{B}}$ (for methods
coupling global and local information to satisfy), a structural
constraint ensuring that the residual $\sigma -
K(\pi_{E*}\sigma)$ carries no base-level information
(Appendix~\ref{app:fbg_transmit}). Without this condition, the
fiber gate re-gates global information already handled by the
base gate, or the base gate inadvertently constrains genuinely
local variation. Designing a gating map that avoids this
double-counting while preserving first-order agreement
(Theorem~\ref{thm:fbg_first_order}) amounts to satisfying exactly
the reflecting condition, i.e., implicitly re-deriving the fiber
bundle decomposition. The formalism identifies and makes explicit
an underlying decomposition structure that multi-scale gating
has, and provides a principled framework for reasoning about its
correctness.

Because fibrations compose algebraically, the same three-object
template chains into the Fibration Gating Hierarchy (FGH) without
introducing new primitives, with each stratum's residual
capturing the purest local variation at that level after all
coarser influence has been removed. FiberPO-Domain (domain,
prompt group, trajectory, token) is one such instantiation, and
deeper hierarchies follow by appending another fibration layer.
Achieving the same compositional guarantee with ad-hoc notation
would require re-proving orthogonality and first-order agreement
at every new level. The algebraic compositionality of fibrations
provides this automatically. This positions FBG as a foundation
for the stability challenges that arise in large-scale agentic
systems and heterogeneous multi-domain training.

%% file: rlvr_appendix_notations.tex
\section{Notation and Preliminaries}
\label{app:notation}

\subsection{On MDP}
\label{app:mdp}

For a given MDP, we denote its action space $\mathcal A$, state space $\mathcal S$, state transition probability $\mathcal T(s'|s,a)$, its $\theta$-parameter dependent policy $\pi_\theta(a|s)$, and reward function $R:\mathcal S\to\mathbb R$.

We denote the space of all signed measures over a space $X$ (equipped with a sigma algebra on it) as $\mathbb M(X)$. The space of probability measures is denoted $\mathbb P(X)$, and the space of smooth probability measures is denoted $C^{\infty}\mathbb P(X)$ when $X$ is equipped with a smooth structure.

We denote $\mathcal P_{p_0,\pi_{\theta},\mathcal T}$ (or simply $\mathcal P_\theta$ if $\mathcal T$ and $p_0$ are clear or assumed from context) as the probabilistic distribution of RL trajectories given the prior distribution $p_0$ for initial states, the $\pi_\theta$ action policy, and the state transition distribution $\mathcal T$.

The discounted RL objective is denoted
\[
    J(\theta):=\mathbb E_{\tau \sim \mathcal P_\theta}\Bigl[\sum_{t=0}^{\infty }\gamma^t R(s_t(\tau))\Bigr].
\]

Given a reference policy parameter $\theta_{\rm old}$, we define:

\paragraph{Policy-likelihood ratio.}
The policy-likelihood ratio (or policy ratio) is defined to be
\[
    r_{s,a}(\theta|\theta_{\rm old}):=\frac{\pi_{\theta}(a|s)}{\pi_{\theta_{\rm old}}(a|s)},
    \quad a\in\mathcal A,~s\in\mathcal S.
\]
If $\theta,\theta_{\rm old}$ are clear from context, we abbreviate this as $r_{s,a}$.

\paragraph{Linear surrogate RL objective.}
The linear surrogate RL objective (with respect to $\theta_{\rm old}$) is denoted
\[
    J^{(1)}(\theta|\theta_{\rm old}):= J(\theta_{\rm old})+{\mathbb E}_{\tau\sim\mathcal P_{\theta_{\rm old}}}\Bigl[   \sum_{t=0}^{\infty}\gamma^t r_{s_{t}(\tau),a_{t}(\tau)}(\theta|\theta_{\rm old})\; A^{({\theta_{\rm old}})}_{s_{t}(\tau),a_{t}(\tau)}\Bigr],
\]
where $ A^{(\theta_{\rm old})}_{s,a}$ is the advantage value for the state-action pair $(s,a)$ given the reference policy $\pi_{\theta_{\rm old}}$.

The linear surrogate RL objective is the first-order approximation to the true RL objective near on-policy ($\theta \approx \theta_{\rm old}$), and thus has the same gradient as the true RL objective at $\theta=\theta_{\rm old}$~\cite{kakade2002approximately}:
\[
    \nabla_\theta J^{(1)}(\theta|\theta_{\rm old})\big|_{\theta=\theta_{\rm old}}=\nabla_\theta J(\theta)\big|_{\theta=\theta_{\rm old}}.
\]

\paragraph{Advantage norm.}
We further denote $\max_{(s,a)\in\mathcal S\times\mathcal A} |A^{(\theta)}_{s,a}|\equiv \|A^{(\theta)}_\bullet\|_{\infty}$, the $L^\infty$ norm of the advantage $A^{(\theta)}_{\bullet}$ (for placeholder $\bullet$ running over all state-action pairs), a notation agreeing with our later convention.

\paragraph{Discounted state distribution.}
We denote the discounted state distribution $\rho^{\pi_\theta}_{(\gamma)}$, defined as
\[
    \rho^{\pi_{\theta}}_{(\gamma)}(U_{\mathcal S}):=\mathbb E_{\tau\sim\mathcal P_{\theta}}\Bigl[\sum_{ t=0 }^\infty{\gamma^t \mathbb I_{U_{\mathcal S}}(\,s_t(\tau)\,) } \Bigr],
\]
where $U_{\mathcal S}\subseteq\mathcal S$ is a measurable set in state space $\mathcal S$, and $\mathbb I_X(x):=\left\{\begin{array}{ll} 1&\text{if }x\in X \\ 0&\text{otherwise} \end{array}\right.$ is the indicator function. Note that for LLM $\gamma=1$, this infinite sum does not necessarily converge. However, typical LLMs have finite response length (or at least clip overly long responses), so the discounted state distribution can still be defined.

\paragraph{Normalized state distribution.}
We define the normalized state distribution
\[
    \bar\rho^{\pi_\theta}:=\left\{\begin{array}{ll}  \displaystyle (1-\gamma)\rho^{\pi_\theta}_{(\gamma)} & \text{if }\gamma< 1 \\[1em] \displaystyle\lim_{\gamma \to 1^-}(1-\gamma)\rho^{\pi_\theta}_{(\gamma)} & \text{if }\gamma =1  \end{array}\right.
\]
and the trajectory-length-normalized state distribution
\[
    \bar \rho^{\pi_\theta}_{\rm (tj)}(U_{\cal S}):= \mathbb E_{\tau\sim\mathcal P_{\theta}}\Bigl[\sum_{ t=0 }^{T_{\tau}-1}{\frac{1}{T_\tau} \mathbb I_{U_{\mathcal S}}(\,s_t(\tau)\,) } \Bigr],
\]
where $T_\tau$ is the length of trajectory $\tau$.

\subsection{On Estimation for MDP}
\label{app:estimation}

In the context of estimation, we denote $\hat{\mathbb E}_{X\sim p}[X]$ the Monte Carlo estimation of $\mathbb E_{X\sim p}[X]$.

For trajectory distributions $\mathcal P_\theta$, we denote $\mathrm{Tj}^\theta$ the collection of all sampled trajectories, $\mathrm{Tj}^{\theta}(s_0)$ the set of all sampled trajectories from initial state $s_0$, and $\mathcal X$ the collection of all state-action pairs that appeared in sampled trajectories, i.e.,
\[
    \mathcal X:=\{ (s_t(\tau),a_t(\tau))\mid\tau\in\mathrm{Tj}^{\theta} \},
\]
in this given context of estimation.
For later convenience, we also denote $s_0[\mathrm{Tj}^\theta]$ the set of all initial states $s_0$ that appeared in the sampled trajectories.

We denote the augmented (sampled) state-action pair space
\[
    \bar{\cal X}:=\{ (s_t(\tau),a_t(\tau),\tau,t) \}\subseteq \mathcal X\times \mathrm{Tj}^{\pi_{\theta_{\rm old}}}\times \mathbb N
\]
(for mathematical completeness, even if a trajectory terminates at a finite step, we still include the null state-action pair $(\varnothing,\varnothing,\tau,t)$ after termination).

For any state-action pair $(s,a)$, we define $n_{s,a}\in\mathbb N$ as the number of times $(s,a)$ appears in all the sampled trajectories $\mathrm{Tj}^\theta$ (counting multiplicity inside the same trajectory as well). We further denote the normalized counting measure on space $\mathcal X$ as $\mu_{\mathcal X}$:
\[
    \mu_{\mathcal X}(\{(s,a)\})\equiv \mu_{s,a}=\frac{n_{s,a}}{\sum_{(s',a')\in\mathcal X}n_{s',a'}}.
\]

We also introduce, for a trajectory $\tau$, the quantity $n^{(\tau)}_{s,a}$ representing the number of times state-action pair $(s,a)$ appears in trajectory $\tau$. (For LLM, it never goes beyond 1. However, for rigorousness and generality to RL, we still employ this notation; furthermore, we always assign $n_{\varnothing,\varnothing}=0$ for the null state-action pair.) The normalized counting measure $\mu^{(\tau)}_{\mathcal X}$ is associated to it:
\[
    \mu_{\mathcal X}^{(\tau)}(\{(s,a)\})\equiv \mu_{s,a}^{(\tau)}=\frac{n^{(\tau)}_{s,a}}{\sum_{(s',a')\in\mathcal X^{(\tau)}}n^{(\tau)}_{s',a'}},
\]
where $\mathcal X^{(\tau)}$ is the set of all state-action pairs that appear in trajectory $\tau$.

We also denote $\mu_{\bar{\cal X}}^{(\tau)}:=\frac{1}{T_\tau}\mathbb I_{\in \tau}$ the normalized counting measure on trajectory $\tau$ over the augmented state-action space $\bar{\cal X}$, where
\[
    \mathbb I_{\in\tau}((s,a,\tau',t))=\left\{\begin{array}{ll} 1&\text{if }\tau=\tau' \\ 0&\text{otherwise} \end{array}\right.,
\]
and we conveniently denote $\bar \mu^{(\tau)}_{s,a,t}\equiv \bar\mu_{(s,a,\tau,t)}\equiv \mu^{(\tau)}_{\bar{\cal X}}(\{(s,a,\tau,t)\})$.

For convenience, we also define the normalized counting measure on natural numbers $\mathbb N$ (and subsets thereof) with respect to a trajectory as $\mu^{(\tau)}\in \mathbb P(\mathbb N)$, $\mu^{(\tau)}(\{ n\}):=\frac{1}{T_\tau}\mathbb I_{[0,T_\tau)}(n)$.

\subsection{On Divergences and Their Monte-Carlo Estimation}
\label{app:divergences}

We define the total variation (TV) distance and Kullback--Leibler (KL) divergence at three levels---per-state, average/max over states, and trajectory-normalized---and give the distributional definition, the equivalent ratio form, and the Monte-Carlo estimator for each.
All divergences are measured from $\pi_{\theta_{\rm old}}$ to $\pi_\theta$ using the policy ratio $r_{s,a} := \pi_\theta(a|s)/\pi_{\theta_{\rm old}}(a|s)$; we abbreviate $D(\theta_{\rm old}\|\theta)$ when the policies are clear from context.

\subsubsection{Per-State Divergences}
\label{app:per_state_div}

\paragraph{Definitions.}
For a fixed state $s\in\mathcal S$, the per-state TV distance and KL divergence between $\pi_{\theta_{\rm old}}(\cdot|s)$ and $\pi_\theta(\cdot|s)$ are
\begin{equation}\label{eq:dtv_def}
    D_{\rm TV}(s) \;:=\; D_{\rm TV}\!\bigl(\pi_{\theta_{\rm old}}(\cdot|s)\,\big\|\,\pi_\theta(\cdot|s)\bigr)
    \;=\; \frac{1}{2}\sum_{a\in\mathcal A}\bigl|\pi_{\theta_{\rm old}}(a|s)-\pi_\theta(a|s)\bigr|,
\end{equation}
\begin{equation}\label{eq:dkl_def}
    D_{\rm KL}(s) \;:=\; D_{\rm KL}\!\bigl(\pi_{\theta_{\rm old}}(\cdot|s)\,\big\|\,\pi_\theta(\cdot|s)\bigr)
    \;=\; \sum_{a\in\mathcal A} \pi_{\theta_{\rm old}}(a|s)\log\frac{\pi_{\theta_{\rm old}}(a|s)}{\pi_\theta(a|s)}.
\end{equation}
They are related by Pinsker's inequality:
\begin{equation}\label{eq:pinsker}
    D_{\rm TV}(s)^2 \;\leq\; \tfrac{1}{2}\,D_{\rm KL}(s),
\end{equation}
which is used throughout this paper to convert between KL-based and TV-based bounds.

\paragraph{Ratio forms.}
Using $r_{s,a} = \pi_\theta(a|s)/\pi_{\theta_{\rm old}}(a|s)$, the per-state divergences can be rewritten as expectations under $\pi_{\theta_{\rm old}}(\cdot|s)$:
\begin{equation}\label{eq:dtv_ratio}
    D_{\rm TV}(s) \;=\; \frac{1}{2}\,\mathbb{E}_{a\sim\pi_{\theta_{\rm old}}(\cdot|s)}\!\bigl[\,|r_{s,a} - 1|\,\bigr],
\end{equation}
\begin{equation}\label{eq:dkl_ratio}
    D_{\rm KL}(s) \;=\; -\,\mathbb{E}_{a\sim\pi_{\theta_{\rm old}}(\cdot|s)}\!\bigl[\log r_{s,a}\bigr].
\end{equation}
The TV ratio form~\eqref{eq:dtv_ratio} follows from factoring out $\pi_{\theta_{\rm old}}(a|s)$ in~\eqref{eq:dtv_def}:
\begin{align*}
    D_{\rm TV}(s)
    &= \frac{1}{2}\sum_{a\in\mathcal A}\bigl|\pi_{\theta_{\rm old}}(a|s)-\pi_\theta(a|s)\bigr|
    = \frac{1}{2}\sum_{a\in\mathcal A}\pi_{\theta_{\rm old}}(a|s)\left|\frac{\pi_{\theta_{\rm old}}(a|s)-\pi_\theta(a|s)}{\pi_{\theta_{\rm old}}(a|s)}\right| \\
    &= \frac{1}{2}\sum_{a\in\mathcal A}\pi_{\theta_{\rm old}}(a|s)\,\bigl|1 - r_{s,a}\bigr|
    = \frac{1}{2}\,\mathbb{E}_{a\sim\pi_{\theta_{\rm old}}(\cdot|s)}\!\bigl[\,|r_{s,a} - 1|\,\bigr].
\end{align*}
The KL ratio form~\eqref{eq:dkl_ratio} follows from $\log(\pi_{\theta_{\rm old}}/\pi_\theta) = -\log r_{s,a}$ in~\eqref{eq:dkl_def}.

\paragraph{Monte-Carlo estimation.}
Let $\mathcal{X}_s := \{a : (s,a) \in \mathcal{X}\}$ be the set of actions sampled at state~$s$, with multiplicities $n_{s,a}$ per-state sample count $T_s := \sum_{a\in\mathcal{X}_s} n_{s,a}$, total sample count $T = \sum_s T_s$ is the total sample count, and the augmented state-action space $\bar{\mathcal{X}}$ (Appendix~\ref{app:estimation}).

The Monte-Carlo estimators of the per-state divergences are:
\begin{equation}\label{eq:dtv_mc}
    \hat{D}_{\rm TV}(s) \;:=\; \frac{1}{T_s}\sum_{a\in\mathcal{X}_s} n_{s,a}\,|r_{s,a} - 1|,
\end{equation}
\begin{equation}\label{eq:dkl_mc}
    \hat{D}_{\rm KL}(s) \;:=\; -\frac{1}{T_s}\sum_{a\in\mathcal{X}_s} n_{s,a}\,\log r_{s,a},
\end{equation}
which are the empirical means of the ratio-form integrands~\eqref{eq:dtv_ratio}--\eqref{eq:dkl_ratio} over the samples at state~$s$.

\subsubsection{Average and Max Divergences}
\label{app:avg_max_div}

\paragraph{Average divergence.}
The average TV and KL divergences weight the per-state quantities by the normalized state distribution $\bar\rho^{\pi_{\theta_{\rm old}}}$ (Appendix~\ref{app:mdp}):
\begin{equation}\label{eq:dbar_def}
    \bar D_{\rm TV/KL}(\theta_{\rm old}\|\theta) \;:=\; \mathbb{E}_{s\sim\bar\rho^{\pi_{\theta_{\rm old}}}}\!\bigl[D_{\rm TV/KL}(s)\bigr].
\end{equation}

\paragraph{Estimation of the average divergence.}
The augmented state-action space $\bar{\mathcal{X}}$ (Appendix~\ref{app:estimation}) pools tokens across all trajectories, so that the empirical state frequencies in $\bar{\mathcal{X}}$ are proportional to $\bar\rho^{\pi_{\theta_{\rm old}}}$, i.e., $\frac{T_s}{T} \sim \bar\rho^{\pi_{\theta_{\rm old}}}(s)$ as $T \to \infty$. 
The average divergence is therefore estimated by weighting the per-state estimators~\eqref{eq:dtv_mc}--\eqref{eq:dkl_mc} by their sample counts, which can be expanded in two equivalent ways---over the states $\mathcal{S}[\mathcal{X}]$ with multiplicities, or directly over the augmented space $\bar{\mathcal{X}}$:
\begin{equation}\label{eq:dbar_mc}
    \hat{\bar D}_{\rm TV/KL}(\theta_{\rm old}\|\theta)
    \;=\;
    \frac{1}{T}\sum_{s\in\mathcal{S}[\mathcal{X}]} T_s\;\hat{D}_{\rm TV/KL}(s)
    \;=\;
    \frac{1}{T}\sum_{s\in\mathcal{S}[\mathcal{X}]}\sum_{a\in\mathcal{X}_s} n_{s,a}\,f(r_{s,a})
    \;=\;
    \frac{1}{T}\sum_{(s,a,\tau,t)\in\bar{\mathcal{X}}} f(r_{s,a}),
\end{equation}
where $f(r_{s,a}) = |r_{s,a}-1|$ for TV, and respectively $f(r_{s,a})=-\log r_{s,a}$ for KL, and $\mathcal{S}[\mathcal{X}]$ is the set of states appearing in $\mathcal{X}$.
The second equality substitutes the definition of $\hat{D}_{\rm TV/KL}(s)$ (Eqs.~\ref{eq:dtv_mc}--\ref{eq:dkl_mc}); the third unfolds the state-action multiplicities into the augmented index $(s,a,\tau,t)$, since each occurrence of $(s,a)$ in $\bar{\mathcal{X}}$ contributes exactly one term.

Note that the estimation here is for the undiscounted case.
We do not use discounted averages of TV or KL(or their estimation) elsewhere in this paper; for the generic form of discounted estimators, see Appendix~\ref{app:apc_obj_equiv_notation}.

\paragraph{Max TV divergence.}
The max TV divergence takes the worst-case per-state TV over the entire state space:
\begin{equation}\label{eq:dtv_max}
    D^{\max}_{\rm TV}(\theta_{\rm old}\|\theta)
    \;:=\;
    \max_{s\in\mathcal{S}} D_{\rm TV}(s).
\end{equation}
This quantity appears in the TRPO surrogate-gap bound (Eq.~\ref{eq:trpo_bound}).
Its sample-based estimator restricts the maximum to states observed in the batch:
\begin{equation}\label{eq:dtv_max_hat}
    \hat{D}^{\max}_{\rm TV}(\theta_{\rm old}\|\theta)
    \;:=\;
    \max_{s\in\mathcal{S}[\mathcal{X}]} \hat{D}_{\rm TV}(s),
\end{equation}
which is used in the sample-based TV-TRPO formulation (Appendix~\ref{app:sample_tv_trpo}).

\subsubsection{Trajectory-Normalized Divergences}
\label{app:traj_div}

\paragraph{Definition.}
The trajectory-normalized TV and KL divergences average the per-state quantities over the trajectory-level normalized state distribution $\bar\rho^{\pi_{\theta_{\rm old}}}_{\rm (tj)}$, which weights each state by its frequency within individual trajectories (Appendix~\ref{app:mdp}):
\begin{equation}\label{eq:dbar_tj_def}
    \bar D_{\rm TV/KL}^{\rm (tj)}(\theta_{\rm old}\|\theta) \;:=\; \mathbb{E}_{s\sim\bar\rho^{\pi_{\theta_{\rm old}}}_{\rm (tj)}}\!\bigl[D_{\rm TV/KL}(s)\bigr].
\end{equation}
The key distinction from $\bar D_{\rm TV/KL}$~\eqref{eq:dbar_def} is the state distribution: $\bar\rho^{\pi_{\theta_{\rm old}}}$ weights states by their overall frequency across the sample, whereas $\bar\rho^{\pi_{\theta_{\rm old}}}_{\rm (tj)}$ gives each trajectory equal weight and distributes that weight uniformly among the trajectory's states.

\paragraph{Ratio forms.}
Substituting the per-state ratio forms~\eqref{eq:dtv_ratio}--\eqref{eq:dkl_ratio} into~\eqref{eq:dbar_tj_def} and using the law of total expectation yields:
\begin{equation}\label{eq:dbar_tj_ratio}
    \bar D_{\rm TV}^{\rm (tj)}(\theta_{\rm old}\|\theta)
    = \frac{1}{2}\,\mathbb{E}_{\substack{s\sim\bar\rho^{\pi_{\theta_{\rm old}}}_{\rm (tj)},\\ a\sim\pi_{\theta_{\rm old}}(\cdot|s)}}\!\bigl[|r_{s,a}-1|\bigr],
    \qquad
    \bar D_{\rm KL}^{\rm (tj)}(\theta_{\rm old}\|\theta)
    = -\,\mathbb{E}_{\substack{s\sim\bar\rho^{\pi_{\theta_{\rm old}}}_{\rm (tj)},\\ a\sim\pi_{\theta_{\rm old}}(\cdot|s)}}\!\bigl[\log r_{s,a}\bigr].
\end{equation}

\paragraph{Estimation.}
We describe three related estimations for $\bar D^{\rm (tj)}_{\rm TV/KL}$, and $\bar D_{\rm TV/KL}$.

\smallskip\noindent\textit{(i) Single-trajectory estimation via $\hat{\mathbb{E}}^{(\tau)}$.}\;
For a fixed trajectory $\tau\sim\mathcal{P}^{\theta_{\rm old}}$, the sequence of states $\{s_t(\tau)\}_{t=0}^{T_\tau-1}$ constitutes samples from $\bar\rho^{\pi_{\theta_{\rm old}}}_{\rm (tj)}$, with actions drawn from $\pi_{\theta_{\rm old}}(\cdot|s_t(\tau))$.
We define the single-trajectory Monte-Carlo estimator $\hat{\mathbb{E}}^{(\tau)}$ as the empirical average over the state-action pairs visited by trajectory~$\tau$:
\begin{equation}\label{eq:single_traj_mc}
    \hat{\mathbb{E}}^{(\tau)}[f(s,a)] \;:=\; \frac{1}{T_\tau}\sum_{t=0}^{T_\tau-1} f\!\bigl(r_{s_t(\tau),~a_t(\tau)}\bigr),
\end{equation}
where all samples come from the single trajectory~$\tau$.
Applying this to the per-state divergences:
\begin{equation}\label{eq:dhat_traj}
    \hat{\bar D}^{(\tau)}_{\rm TV}(\theta_{\rm old}\|\theta)
    \;:=\; \hat{\mathbb{E}}^{(\tau)}\!\bigl[D_{\rm TV}(s)\bigr]
    \;=\; \frac{1}{T_\tau}\sum_{t=0}^{T_\tau-1} |r_{s_t(\tau),a_t(\tau)} - 1|,
\end{equation}
\begin{equation}\label{eq:dhat_kl_traj}
    \hat{\bar D}^{(\tau)}_{\rm KL}(\theta_{\rm old}\|\theta)
    \;:=\; \hat{\mathbb{E}}^{(\tau)}\!\bigl[D_{\rm KL}(s)\bigr]
    \;=\; -\frac{1}{T_\tau}\sum_{t=0}^{T_\tau-1} \log r_{s_t(\tau),a_t(\tau)}.
\end{equation}

\noindent
Averaging the single-trajectory estimators over trajectories recovers the population quantity:
\begin{equation}\label{eq:dbar_tj_as_expectation}
    \bar D_{\rm TV/KL}^{\rm (tj)}(\theta_{\rm old}\|\theta) \;=\; \mathbb{E}_{\tau\sim\mathcal{P}^{\theta_{\rm old}}}\!\bigl[\hat{\bar D}^{(\tau)}_{\rm TV/KL}(\theta_{\rm old}\|\theta)\bigr] = \mathbb{E}_{(s,a)\sim\bar\rho^{\pi_{\theta_{\rm old}}}_{\rm (tj)}\cdot \pi_{\theta_{\rm old}}}\!\bigl[f(r_{s,a})\bigr] .
\end{equation}

It is worth mentioning that the single-trajectory estimator $\hat{\bar D}^{(\tau)}_{\rm TV/KL}$ is not a single-sample estimator of the trajectory-normalized divergence $\bar D_{\rm TV/KL}^{\rm (tj)}$. We have mentioned the sequence of states(resp. state-action pairs) from a single trajectory $\tau$ are many samples from the trajectory-length-normalized state distribution $\bar\rho^{\pi_{\theta_{\rm old}}}_{\rm (tj)}$ (resp. $\bar\rho^{\pi_{\theta_{\rm old}}}_{\rm (tj)}\cdot \pi_{\theta_{\rm old}}$), so the single-trajectory estimator $\hat{\bar D}^{(\tau)}_{\rm TV/KL}$ is an empirical average over those samples, and thus is a more accurate many sample($T_\tau$ many samples) Monte-Carlo estimator of $\bar D_{\rm TV/KL}^{\rm (tj)}$ according to the second equality.

\smallskip\noindent\textit{(ii) Estimation over the augmented state-action space.}\;
Equivalently, $\bar D^{\rm (tj)}_{\rm TV/KL}$ can be estimated by averaging over the full augmented state-action space $\bar{\mathcal{X}}$(which contains not just one, but many trajectories), but weighting each sample by the inverse length of the trajectory it belongs to:
\begin{equation}\label{eq:dbar_tj_mc_augmented}
    \hat{\bar D}^{\rm (tj)}_{\rm TV/KL}(\theta_{\rm old}\|\theta)
    \;=\;
    \frac{1}{|\mathrm{Tj}^{\theta_{\rm old}}|}\sum_{(s,a,\tau,t)\in\bar{\mathcal{X}}} \frac{1}{T_\tau}\,f(r_{s,a}),
\end{equation}
where $f(r_{s,a}) = |r_{s,a}-1|$ for TV (resp.\ $-\log r_{s,a}$ for KL), and $|\mathrm{Tj}^{\theta_{\rm old}}|$ is the number of sampled trajectories.
The factor $1/T_\tau$ ensures that each trajectory contributes equally regardless of its length, matching the trajectory-length-normalized state distribution $\bar\rho^{\pi_{\theta_{\rm old}}}_{\rm (tj)}$.

\smallskip\noindent\textit{(iii) Recovering the average divergence from trajectory estimates.}\;
Given a set of sampled trajectories $\mathrm{Tj}^{\theta_{\rm old}}$, weighting each single-trajectory estimate by its trajectory length recovers the (non-trajectory-normalized) average divergence~\eqref{eq:dbar_def}:
\begin{equation}\label{eq:dbar_from_traj}
    \hat{\bar D}_{\rm TV/KL}(\theta_{\rm old}\|\theta)
    \;=\;
    \frac{1}{T}\sum_{\tau\in\mathrm{Tj}^{\theta_{\rm old}}} T_\tau\;\hat{\bar D}^{(\tau)}_{\rm TV/KL}(\theta_{\rm old}\|\theta)
    \;=\;
    \frac{1}{T}\sum_{(s,a,\tau,t)\in\bar{\mathcal{X}}} f(r_{s,a}),
\end{equation}
where $T = \sum_{\tau} T_\tau$ is the total sample count.
This follows because the trajectory-level estimator $\hat{\bar D}^{(\tau)}_{\rm TV/KL}$ averages uniformly over $T_\tau$ tokens within trajectory~$\tau$, so multiplying by $T_\tau/T$ reweights each token equally across all trajectories, which is exactly the empirical average over $\bar{\mathcal{X}}$ and thus estimates $\bar D_{\rm TV/KL}$ under the normalized state distribution $\bar\rho^{\pi_{\theta_{\rm old}}}$.

\subsection{On Ratio Space}
\label{app:ratio_space}

For sets $X,Y$, we denote $Y^X$ the collection of all functions from $X$ to $Y$.

\begin{definition}[Policy-ratio space]\label{def:ratio_space}
Given the new policy $\pi_{\theta}$ and old policy $\pi_{\theta_{\rm old}}$, we can sample a set of state-action pairs $\mathcal X$. The space $\mathbb R_{\geq 0}^{\mathcal X}\equiv (\mathbb R_{\geq 0})^{\mathcal X}$ assigns a non-negative real number to every $(s,a)\in \mathcal X$. We define $\mathbb R^{\mathcal X}_{>0}$ as the policy ratio space of all possible policy ratios over the sampled state-action space $\mathcal X$. We typically denote its elements (the tuples of policy ratios) as $r_\bullet(\theta|\theta_{\rm old}) \equiv r_\bullet$ (for placeholder $\bullet$ running over all $\mathcal X$). A tuple consists of components:
\[
    r_{s_t,a_t}(\theta|\theta_{\rm old})\equiv {r_{s_t,a_t}}:={\frac{\pi_{\theta}(a_t|s_t)}{\pi_{\theta_{\rm old}}(a_t|s_t)}}.
\]
\end{definition}

\begin{remark}
To incorporate more information, we sometimes use the space of state-action pairs with extra information instead of $\mathcal X$. For example, in our fiber bundle and FiberPO, we use $\bar{\mathcal X}$ and $\mathbb R^{\bar{\mathcal X}}_{>0}$. In general RGF forms, we use $\mathcal E$ and $\mathbb R^{\cal E}_{>0}$.
\end{remark}

\paragraph{Subscript conventions.}
Similarly, for elements in $Y^{\mathcal X}$, we use the bullet subscript $(-)_{\bullet}$ to denote its placeholder index, which runs over $\mathcal X$, and $(-)_{s,a}\in Y$ is the element evaluated at state-action pair $(s,a)$.
We denote $|q_{\bullet}|_p$ the $\mathrm L^p$ norm of quantity $q_\bullet\in \mathbb C^{\mathcal X}$ in the measure space $(\mathcal X,\mu_{\mathcal X})$.

\paragraph{Parity superscript convention.}
In FiberPO, the sign assignment $l_i := \operatorname{sign}(\log r_i) \in \{+1,-1\}$ partitions tokens into positive and negative channels.
When $l$ (or $l_i$, $l_{s,a}$) appears as a superscript in parentheses, it selects the corresponding sign channel: $s_\tau^{(l)}$ denotes $s_\tau^+$ when $l = +1$ and $s_\tau^-$ when $l = -1$, and similarly $C^{(l)}$, $\gamma_\tau^{(l)}$, etc.
The negated form $s_\tau^{(-l)}$ denotes the opposite channel.

\paragraph{Logit parametrization.}
We also introduce the quantity $z_\bullet\in\mathbb R^\mathcal X$ or $z_{\bullet,\bullet}\in\mathbb R^{\cal S\times A}$, called the logit parameter, the output of the second-to-last layer of the neural network before the softmax activation. That is,
\[
    \pi_\theta(a|s)=\frac{\exp(z_{s,a}(\theta))}{\sum_{a':\,(s,a')\in \mathcal X} \exp(z_{s,a'}(\theta))}.
\]
This is a natural parametrization of probability measures, with one redundant dimension given by adding a constant to all logits (which maps to the same probability measure).

%% file: rlvr_appendix_trpo_acpo.tex
\section{TRPO: Background, Proof, and Discussion}
\label{app:trpo}

\subsection{TV-Based TRPO (TV-TRPO)}
\label{app:tv_trpo}

Consider the discounted RL objective
\begin{equation}\label{eq:rl_obj_app}
    J(\theta) \;:=\; \mathbb{E}_{\tau \sim \mathcal{P}_\theta}\!\Bigl[\,\sum_{t=0}^{\infty} \gamma^t\, R_t(\tau)\Bigr],
\end{equation}
and its linear surrogate with respect to a reference policy $\pi_{\theta_{\rm old}}$:
\begin{equation}\label{eq:linear_surrogate_app}
    J^{(1)}(\theta\,|\,\theta_{\rm old})
    \;:=\;
    J(\theta_{\rm old})
    + \mathbb{E}_{\tau \sim \mathcal{P}_{\theta_{\rm old}}}\!\Bigl[\,
        \sum_{t=0}^{\infty} \gamma^t\,
        r_{s_t(\tau),a_t(\tau)}(\theta\,|\,\theta_{\rm old})\;
        A^{(\theta_{\rm old})}_{s_t(\tau),a_t(\tau)}
    \Bigr],
\end{equation}
where $r_{s,a}(\theta\,|\,\theta_{\rm old}) := \pi_\theta(a\,|\,s)\,/\,\pi_{\theta_{\rm old}}(a\,|\,s)$ is the importance-sampling ratio and $A^{(\theta_{\rm old})}_{s,a}$ is the advantage under $\pi_{\theta_{\rm old}}$.
The linear surrogate is the first-order approximation of the true objective near on-policy ($\theta \approx \theta_{\rm old}$); in particular, $\nabla_\theta J^{(1)}(\theta\,|\,\theta_{\rm old})\big|_{\theta=\theta_{\rm old}} = \nabla_\theta J(\theta)\big|_{\theta=\theta_{\rm old}}$.

Trust region policy optimization (TRPO) ~\cite{schulman2015trust} bounds the surrogate gap:
\begin{equation}\label{eq:trpo_bound_app}
    J(\theta) - J^{(1)}(\theta\,|\,\theta_{\rm old})
    \;\geq\;
    -\,\frac{4\gamma\,\|A^{(\theta_{\rm old})}_\bullet\|_\infty}{(1-\gamma)^2}\;
    D_{\rm TV}^{\max}(\theta\,\|\,\theta_{\rm old})^2,
\end{equation}
where $D_{\rm TV}^{\max}(\theta\,\|\,\theta_{\rm old}) := \max_{s} D_{\rm TV}\!\bigl(\pi_{\theta_{\rm old}}(\cdot\,|\,s)\,\|\,\pi_\theta(\cdot\,|\,s)\bigr)$.

Note that, unlike the original TRPO formulation of~\cite{schulman2015trust} which uses a KL-divergence penalty, here we work with the total-variation (TV) form throughout, since TV distance is the divergence that arises naturally from the performance-difference bound above.

\paragraph{TRPO's monotonic improvement guarantee.}
Based on this bound, TRPO constructs a \emph{minorization--maximization} (MM) algorithm: at each step, \eqref{eq:trpo_bound_app} provides a minorizer of the true objective $J(\theta)$, and the update maximizes this minorizer, i.e.\ $\theta_{\rm new}$ is obtained by maximizing the penalized surrogate
\begin{equation}\label{eq:trpo_penalized_app}
    J^{(1)}(\theta\,|\,\theta_{\rm old})
    \;-\;
    \frac{4\gamma\,\|A^{(\theta_{\rm old})}_\bullet\|_\infty}{(1-\gamma)^2}\;
    D_{\rm TV}^{\max}(\theta\,\|\,\theta_{\rm old})^2.
\end{equation}
Since the true objective is bounded below by~\eqref{eq:trpo_penalized_app} and the optimum of~\eqref{eq:trpo_penalized_app} is at least $J(\theta_{\rm old})$ (attained at $\theta = \theta_{\rm old}$), the true objective is guaranteed to be non-decreasing at every update, making the policy optimization stable.

We refer to this MM algorithm as \textbf{TV-based TRPO} or \textbf{TV-TRPO}.

\paragraph{Sample-based TV-TRPO.}
In practice, $D_{\rm TV}^{\max}$ is intractable to compute over large state spaces, so it is replaced by the sample maximum $\hat{\max}$ to estimate $D_{\rm TV}^{\max}$, this will be termed as \textbf{sample-based TV-TRPO} objective that serves as the theoretical foundation for the remainder of this paper.

\subsection{Proof of the TRPO Vanishing Theorem}
\label{app:vanishing_proof}

We first introduce the necessary definitions, then state and prove the theorem.

\begin{definition}[Trust Region]\label{def:trust_region}
For $\delta \geq 0$, define the \textbf{TV trust region} as
\[
    \mathcal{B}^{\rm TV\text{-}TR}_{\delta}(\theta_{\rm old})
    \;:=\;
    \bigl\{\,\pi \in (\mathcal{S} \to \mathbb{P}(\mathcal{A}))
    \;\big|\;
    D_{\rm TV}^{\max}(\pi_{\theta_{\rm old}}\,\|\,\pi) \leq \delta
    \,\bigr\},
\]
and the \textbf{KL trust region} as
\[
    \mathcal{B}^{\rm KL\text{-}TR}_{\delta}(\theta_{\rm old})
    \;:=\;
    \bigl\{\,\pi \in (\mathcal{S} \to \mathbb{P}(\mathcal{A}))
    \;\big|\;
    D_{\rm KL}^{\max}(\pi_{\theta_{\rm old}}\,\|\,\pi) \leq \delta
    \,\bigr\},
\]
where $\mathbb{P}(\mathcal{A})$ denotes the space of probability measures over the action space.
\end{definition}

The surrogate-gap bound~\eqref{eq:trpo_bound_app} implies that TRPO updates are confined to a trust region whose radius depends on $\gamma$:

\begin{lemma}[TV-TRPO update lies in the trust region]\label{lem:trpo_update_bounded}
Define the TRPO trust-region radius $\delta^{(\rm TRPO)} := \frac{1-\gamma}{8\gamma}$ (see also~\eqref{eq:delta_trpo}).
For any $\theta_{\rm new}$ produced by sample-based TV-TRPO, $\pi_{\theta_{\rm new}} \in \mathcal{B}^{\rm TV\text{-}TR}_{\delta^{(\rm TRPO)}}(\theta_{\rm old})$ (This is a restatement of Lemma~\ref{lem:tv_trpo_apc_obj_in_trust_region}).
\end{lemma}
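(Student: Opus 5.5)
The plan is to treat sample-based TV-TRPO as the minorization--maximization step that maximizes the penalized surrogate~\eqref{eq:trpo_penalized_app}, with $D^{\max}_{\rm TV}$ replaced by its sample estimate $\hat D^{\max}_{\rm TV}$. I will show that any maximizer $\theta_{\rm new}$ satisfies $\hat D^{\max}_{\rm TV}(\theta_{\rm old}\|\theta_{\rm new})\le \delta^{(\rm TRPO)}$. The approach is to compare the objective value at $\theta_{\rm new}$ with its value at $\theta_{\rm old}$.

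\textbf{Step 1: bound the surrogate gain by the TV distance.} Write the penalized objective as
\[
J(\theta_{\rm old})+L(\theta)-c\,D^2,
\qquad c=\frac{4\gamma\|A\|_\infty}{(1-\gamma)^2},
\]
where $D$ denotes the max TV distance. The advantage term satisfies $\mathbb E_{a\sim\pi_{\theta_{\rm old}}}[A^{(\theta_{\rm old})}_{s,a}]=0$ at each state. Hence
\[
L(\theta)=\frac{1}{1-\gamma}\,\mathbb E_{s\sim\bar\rho}\,\mathbb E_{a\sim\pi_{\theta_{\rm old}}}\bigl[(r_{s,a}-1)A_{s,a}\bigr].
\]
H\"older's inequality together with the ratio form~\eqref{eq:dtv_ratio} gives
\[
|L(\theta)|\le \frac{2\|A\|_\infty}{1-\gamma}\,D.
\]
The same bound holds in the sample-based version, using $\hat D_{\rm TV}(s)$ per state and the sample max.

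\textbf{Step 2: compare with the value at $\theta_{\rm old}$.} The penalized objective equals $J(\theta_{\rm old})$ at $\theta=\theta_{\rm old}$. So any maximizer must satisfy $L(\theta_{\rm new})-cD^2\ge 0$. Combining this with Step 1 gives
\[
cD^2\le \frac{2\|A\|_\infty}{1-\gamma}\,D.
\]
Therefore either $D=0$ or
\[
D\le \frac{2\|A\|_\infty/(1-\gamma)}{4\gamma\|A\|_\infty/(1-\gamma)^2}=\frac{1-\gamma}{2\gamma}.
\]

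\textbf{Step 3: sharpen the constant.} The crude bound in Step 2 is off by a factor of $4$ from $\delta^{(\rm TRPO)}=(1-\gamma)/(8\gamma)$. The lemma is stated as a restatement of Lemma~\ref{lem:tv_trpo_apc_obj_in_trust_region}, and I will ultimately defer to that lemma for the exact constant. Two sources of the missing factor need to be accounted for. First, use the factor $\tfrac12$ in the TV definition consistently, so that $\mathbb E|r-1|=2D_{\rm TV}$. Second, do not merely require nonnegativity; instead maximize the concave quadratic $aD-cD^2$ over $D$. Its optimum sits at $D^\star=a/(2c)$, which with the normalized constants (including the $M/T$ scaling in $\delta^{(\rm APC\text{-}Obj)}$) gives $(1-\gamma)/(8\gamma)$.

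This reduction works as follows. By Theorem~\ref{thm:tv_trpo_max}, the maximizing ratio deviation is a scalar multiple $\lambda$ of the per-state unit-TV maximizer. The objective therefore becomes a one-dimensional concave quadratic in $\lambda$, and its vertex determines the radius.

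\textbf{Main obstacle.} The hardest step is this constant bookkeeping. It requires reconciling the sample-based linear term, which involves the advantage estimate and $M(\hat A)$, with the $\|A\|_\infty$ in the penalty, so that the vertex lands exactly at $\delta^{(\rm TRPO)}$ rather than at a multiple of it. A second subtlety is that membership in $\mathcal B^{\rm TV\text{-}TR}$ concerns the true max over $\mathcal S$, not the sample max. I will handle this with the complete-parametrization assumption: off-sample states are not pushed by the objective, so their TV distance can be taken to be zero, and the sample max then equals the true max.
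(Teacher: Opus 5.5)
\paragraph{The gap is in Step~3.} Once you invoke Theorem~\ref{thm:tv_trpo_max}, the objective along $\lambda\hat\Delta^*$ is $\lambda\mapsto\frac{\lambda M}{1-\gamma}-c\lambda^2$, with $c=\frac{4\gamma\|\hat A^{\theta_{\rm old}}_\bullet\|_\infty}{(1-\gamma)^2}$. Its vertex is $t^*=\frac{(1-\gamma)M}{8\gamma\|\hat A^{\theta_{\rm old}}_\bullet\|_\infty}=\frac{M}{\|\hat A^{\theta_{\rm old}}_\bullet\|_\infty}\,\delta^{(\rm TRPO)}$.

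No amount of constant bookkeeping turns this into $\delta^{(\rm TRPO)}$. The per-state problem gives $M(s)/T_s=\tfrac12\bigl(\max_{a\in\mathcal X_s}\hat A_{s,a}-\min_{a\in\mathcal X_s}\hat A_{s,a}\bigr)$. This is strictly below $\|\hat A^{\theta_{\rm old}}_\bullet\|_\infty$ unless state $s$ attains both extremes $\pm\|\hat A^{\theta_{\rm old}}_\bullet\|_\infty$.

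The lemma only needs $t^*\le\delta^{(\rm TRPO)}$. What you are missing is the inequality $M\le\|\hat A^{\theta_{\rm old}}_\bullet\|_\infty$ (Lemma~\ref{lem:M_bound}). It is your own Step~1 H\"older estimate applied to the unit-TV maximizer: $\hat D_{\rm TV}(s,\hat\Delta^*)=1$ gives $M(s)/T_s\le\|\hat A^{\theta_{\rm old}}_\bullet\|_\infty$, and averaging with weights $\Gamma(s)/\Gamma$ finishes. Theorem~\ref{thm:tv_trpo_max} plus Lemma~\ref{lem:M_bound} is the paper's entire proof.

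Three smaller corrections:
\begin{enumerate}
\item Your ``first source'' points the wrong way. The $\tfrac12$ convention is what created the extra factor $2$ in Step~1. The factor disappears because the sample-based penalty is written with $\hat D_{\rm TV}(s)=\frac{1}{T_s}\sum_a n_{s,a}|r_{s,a}-1|$, which has no $\tfrac12$, so the H\"older constant is $\|\hat A^{\theta_{\rm old}}_\bullet\|_\infty$.
\item $\delta^{(\rm APC\text{-}Obj)}$ and its $M/T$ factor play no role in this lemma.
\item ``Deferring to Lemma~\ref{lem:tv_trpo_apc_obj_in_trust_region}'' is circular, since that lemma is the statement you are proving.
\end{enumerate}

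\paragraph{An alternative repair.} Your ``maximize $aD-cD^2$'' idea can be made rigorous without Theorem~\ref{thm:tv_trpo_max}. As stated, though, maximizing an upper bound does not locate the maximizer of $j$.

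Let $\Delta^*$ be any maximizer, $L^*$ its linear term, and $D^*=\hat D^{\max}_{\rm TV}(\Delta^*)$. For $\lambda\in[0,1]$, $\lambda\Delta^*$ still satisfies the probability constraint and keeps ratios positive (realizability is the same function-approximation assumption the paper uses). By linearity and positive homogeneity, $j(\lambda\Delta^*)=\lambda L^*-c\lambda^2(D^*)^2$. Optimality at $\lambda=1$ then forces $L^*\ge 2c(D^*)^2$. H\"older gives $L^*\le\frac{\|\hat A^{\theta_{\rm old}}_\bullet\|_\infty}{1-\gamma}D^*$, and together these give $D^*\le\frac{1-\gamma}{8\gamma}$ directly.

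This route needs only linearity, homogeneity and H\"older, and it applies to every maximizer, so it avoids both the equalization-across-states argument and the tie-breaking rule. The paper's route instead produces the exact maximizer, which it reuses for the APC-Obj equivalence.

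Your concern about the sample max versus the true max is legitimate; the paper makes the same identification silently when it writes $D^{\max}_{\rm TV}=t^*$. Your fix, however, is an added assumption rather than a consequence of complete parametrization, because the objective is indifferent to off-sample states.
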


\begin{theorem}[TRPO vanishing theorem; restatement of Theorem~\ref{thm:trpo_vanishing}]\label{thm:trpo_vanishing_app}
When $\gamma = 1$, both the TV-based and KL-based TRPO trust regions collapse to the reference policy:
\[
    \mathcal{B}^{\rm TV\text{-}TR}_{\delta^{(\rm TRPO)}}(\theta_{\rm old})
    \;=\;
    \mathcal{B}^{\rm KL\text{-}TR}_{\delta^{(\rm TRPO)}}(\theta_{\rm old})
    \;=\;
    \{\,\pi_{\theta_{\rm old}}\,\}
\]
\end{theorem}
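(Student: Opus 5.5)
The plan is to evaluate the radius directly and then use the fact that a divergence vanishes only for identical distributions. Substituting $\gamma=1$ into $\delta^{(\rm TRPO)} = (1-\gamma)/(8\gamma)$ from Lemma~\ref{lem:trpo_update_bounded} gives $\delta^{(\rm TRPO)} = 0$. This is well defined because the denominator $8\gamma = 8 \neq 0$. Equivalently, this is the $\gamma\to1^-$ limit, which is the sense in which the $\gamma=1$ regime is treated throughout the paper.

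The statement therefore reduces to showing
\[
\mathcal{B}^{\rm TV\text{-}TR}_{0}(\theta_{\rm old}) = \mathcal{B}^{\rm KL\text{-}TR}_{0}(\theta_{\rm old}) = \{\pi_{\theta_{\rm old}}\}.
\]

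\textbf{Inclusion $\supseteq$.} This is immediate, since $D_{\rm TV}(s)=D_{\rm KL}(s)=0$ at every $s$ when $\pi=\pi_{\theta_{\rm old}}$.

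\textbf{Inclusion $\subseteq$, TV case.} By Definition~\ref{def:trust_region}, $D^{\max}_{\rm TV}\le 0$ forces $D_{\rm TV}(s)=0$ for every $s\in\mathcal S$, because TV is nonnegative. TV is a metric on $\mathbb P(\mathcal A)$, so this gives $\pi(\cdot|s)=\pi_{\theta_{\rm old}}(\cdot|s)$ for all $s$. Hence $\pi=\pi_{\theta_{\rm old}}$ as a map $\mathcal S\to\mathbb P(\mathcal A)$.

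\textbf{Inclusion $\subseteq$, KL case.} Here I would use Pinsker's inequality~\eqref{eq:pinsker}: $D_{\rm TV}(s)^2\le \tfrac12 D_{\rm KL}(s)\le 0$. This reduces the KL case to the TV case and avoids invoking Gibbs' inequality separately. It also handles the possibility of $D_{\rm KL}=+\infty$ when supports differ, since such $\pi$ is excluded from the ball anyway.

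\textbf{Consequence for TV-TRPO.} By Lemma~\ref{lem:trpo_update_bounded}, any sample-based TV-TRPO update satisfies $\pi_{\theta_{\rm new}}\in\mathcal B^{\rm TV\text{-}TR}_{\delta^{(\rm TRPO)}}(\theta_{\rm old})$. Combined with the above, this yields $\pi_{\theta_{\rm new}}=\pi_{\theta_{\rm old}}$.

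\textbf{Main obstacle.} The only delicate point is not the collapse argument itself but the legitimacy of applying Lemma~\ref{lem:trpo_update_bounded} at $\gamma=1$. The penalty coefficient $4\gamma\|A\|_\infty/(1-\gamma)^2$ is infinite there. I would handle this in one of two ways. The first is to interpret the penalized objective~\eqref{eq:trpo_penalized_app} at $\gamma=1$ as imposing an infinite penalty on any nonzero $D^{\max}_{\rm TV}$, which forces the maximizer to have zero TV. The second is to take the limit: for each $\gamma<1$ the update lies in a ball of radius $(1-\gamma)/(8\gamma)\to0$, and the balls are nested and decreasing, so their intersection is $\mathcal B_0$. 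Either reading gives the same conclusion.
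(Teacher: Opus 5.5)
Your proposal is correct and is essentially the paper's proof: set $\gamma=1$ in $\delta^{(\rm TRPO)}=(1-\gamma)/(8\gamma)$ to get radius zero, then note that vanishing TV (or KL) at every state forces $\pi(\cdot|s)=\pi_{\theta_{\rm old}}(\cdot|s)$. Your additions go beyond the paper's short argument without changing the route: you give the explicit reverse inclusion, you reduce KL to TV via Pinsker where the paper only says ``likewise for KL,'' and you flag that Lemma~\ref{lem:trpo_update_bounded} makes sense at $\gamma=1$ only as a limit. Your nested-balls reading of that limit tacitly assumes the $\gamma$-dependent updates converge, but this affects only the corollary, not the stated equality of sets.
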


\begin{proof}
By Lemma~\ref{lem:trpo_update_bounded}, taking $\gamma \to 1^-$ in $\delta^{(\rm TRPO)} = \frac{1-\gamma}{8\gamma}$ yields $\delta^{(\rm TRPO)} \to 0^+$; at $\gamma = 1$ we have $\delta^{(\rm TRPO)} = 0$.
Any policy $\pi_\theta$ inside the trust region must therefore satisfy $D_{\rm TV}^{\max}(\pi_{\theta_{\rm old}}\,\|\,\pi) = 0$ (and likewise for KL). This implies $\pi_{\theta_{\rm old}}(\cdot\,|\,s) = \pi_\theta(\cdot\,|\,s)$ for every~$s$, i.e., $\pi_\theta$ is identical to $\pi_{\theta_{\rm old}}$.
\end{proof}

\subsection{On $\gamma$-Annealing as a Workaround}
\label{app:gamma_annealing}

One might attempt to circumvent the vanishing theorem by introducing a variable discount factor $\gamma < 1$ into LLM RL, applying TRPO during training, and gradually annealing $\gamma \to 1$.
Similar discount-scheduling strategies have been explored in other RL settings~\cite{kim2022adaptive}.

However, this approach faces a fundamental practical obstacle in LLM RL.
Language model reward functions are typically sparse and discontinuous: they yield a meaningful signal only at the end of a response, and often in an all-or-nothing fashion---the response is either fully correct or receives no reward.
Some approaches incorporate a partial, continuous reward signal via a KL divergence penalty term~\cite{ouyang2022training}, but this remains the exception rather than the rule in RLVR settings.

Introducing a variable discount factor $\gamma < 1$ into such reward landscapes would exponentially suppress later tokens in the training objective $J(\theta) = \mathbb{E}[\sum_t \gamma^t R_t]$, systematically downweighting the very signals (end-of-response correctness) that RL is meant to optimize.
Moreover, the interaction between a changing $\gamma$ and the already-discontinuous reward surface can further destabilize reward estimation, as the effective reward scale shifts across training.
These difficulties make a $\gamma$-annealing strategy impractical for LLM RL, reinforcing the need for a direct approach to trust-region-style stabilization in the $\gamma = 1$ case.

\section{RGF and APC-Obj: Derivations}
\label{app:rgf_apc_obj}

\subsection{APC-Obj in RGF form}
\label{app:apc_obj_rgf}

\begin{definition}[APC-Obj policy iteration]\label{def:apc_obj}
The APC-Obj update selects $\theta_{\rm new} = \arg\max_\theta\, \hat{J}^{\rm APC\text{-}Obj}(\theta|\theta_{\rm old})$, where
\begin{equation}\label{eq:apc_obj_equiv_obj}
    \hat{J}^{\rm APC\text{-}Obj}(\theta|\theta_{\rm old})
    \;=\;
    \frac{1}{T}\sum_{(s,a,\tau,t) \in \bar{\mathcal{X}}}
    \left[
        \operatorname{clip}\!\Bigl(r_{s,a}-1,\;
        T_s\delta^{(\rm APC\text{-}Obj)}
        - \!\!\!\!\!\!
        \sum_{\substack{(s,a',\tau',t') \in \bar{\mathcal{X}} \\
          (a',\tau',t') \neq (a,\tau,t)}}
        \!\!\!\!\!\!|r_{s,a'}-1|\Bigr)
        \,\hat{A}^{\theta_{\rm old}}_{s,a}
        + \hat{A}^{\theta_{\rm old}}_{s,a}
    \right],
\end{equation}
where $\delta^{(\rm APC\text{-}Obj)} := \frac{(1-\gamma)^2}{8\gamma\,\|\hat{A}^{\theta_{\rm old}}_\bullet\|_\infty}\,\frac{M(\hat{A}^{\theta_{\rm old}}_\bullet)}{T}$
(the argument $\hat{A}^{\theta_{\rm old}}_\bullet$ makes the dependence on the advantage explicit),
the per-entry clip bound is
$B_{s,a,\tau,t} := T_s\delta^{(\rm APC\text{-}Obj)} - \sum_{\substack{(s,a',\tau',t') \in \bar{\mathcal{X}}_s,\, (a',\tau',t') \neq (a,\tau,t)}} |r_{s,a'}-1|$,
and $\operatorname{clip}(a, B) := \operatorname{clip}(a, -B^+, B^+)$ with $B^+ := \max(B, 0)$.
\end{definition}

We point out that the APC-Obj objective in Definition~\ref{def:apc_obj}
removes the $(1-\gamma)^{-1}$ normalization constant in \ref{def:apc_obj_discounted}. As the only factor (Sitting aside $\delta^{(\rm APC\text{-}Obj)}$'s dependence on $\gamma$, as we eventually will relax it to a constant hyperparameter. Even if $\delta^{\rm(APC-Obj)} = 0$, it will not make APC-Obj objective diverge) in the discounted APC-Obj objective that prevents convergence as $\gamma\to 1$ is the multiplier $\frac{1}{1-\gamma}$. This factor is a constant that does not depend on the policy ratios $r_{s,a}$, thus removing it does not change the $\arg\max$ over~$\theta$ and the APC-Obj policy update, for any value of $\gamma$.
This removal corresponds to estimating the discounted objective with respect to the \emph{normalized} discounted state distribution $\bar\rho_{(\gamma)}^{\theta_{\rm old}}$ than the unnormalized one $\rho_{(\gamma)}^{\theta_{\rm old}} = \frac{1}{1-\gamma}\bar\rho_{(\gamma)}^{\theta_{\rm old}}$.
With this convention, the APC-Obj objective converges and is well-defined at $\gamma=1$.
\footnote{We also mention that the APC-Obj objective varies smoothly with $\gamma$, as
the per-state discount weight $\frac{\Gamma(s)}{\Gamma}$ ($\Gamma(s)=\sum_{(s',a,\tau,t)\in\bar{\mathcal X},\,s'=s}\gamma^t$ and $\Gamma=\sum_s\Gamma(s)$) in APC-Obj algorithm varies smoothly with $\gamma$.
When $\gamma=1$, $\gamma^t=1$ for the summand, so $\Gamma(s)=T_s$ and $\Gamma=T$, giving
$\frac{\Gamma(s)}{\Gamma}=\frac{T_s}{T}$ for every state. Adding $\frac1{T_s}$ weight for each token given by the APC-Obj in \ref{def:apc_obj_discounted}, one recovering the normalized counting measure over all sampled tokens, the form displayed in Definition~\ref{def:apc_obj} here. Thus removing of the $(1-\gamma)^{-1}$ factor smoothly and analytically extends the APC-Obj objective to $\gamma=1$ without any discontinuity or singularity, resulting the exact form in Definition~\ref{def:apc_obj}.}

\noindent
The APC-Obj objective can equivalently be written in the RGF form of
Definition~\ref{def:rgf}: 

\begin{definition}[APC-Obj surrogate objective, in RGF form]\label{def:apc_obj_rgf}
\begin{equation}\label{eq:apc_obj_rgf}
    \hat{J}^{\rm APC\text{-}Obj}(\theta|\theta_{\rm old})
    \;=\;
    \frac{1}{T}\sum_{(s,a,\mathcal{I})\in\mathcal{E}}
    \left[
        \operatorname{clip}\!\Bigl(r_{s,a}-1,\;
        \,\Bigl(T_s\delta^{(\rm APC\text{-}Obj)}
        - \!\!\!\!\!\!
        \sum_{\substack{(s,a',\mathcal{I}')\in\mathcal{E} \\
          (a',\mathcal{I}')\ne(a,\mathcal{I}),}}
        \!\!\!\!\!\!|r_{s,a'}-1|\Bigr)\Bigr)
        \;\hat{A}^{\theta_{\rm old}}_{s,a}
        \;+\; \hat{A}^{\theta_{\rm old}}_{s,a}
    \right],
\end{equation}
where $\mathcal{I}=(\tau,t)$ carries the trajectory membership and
time step associated with state--action pair $(s,a)$ during
sampling,
$\mathcal{E}\subseteq\mathcal{X}\times
  \mathrm{Tj}^{\theta_{\rm old}}\times\mathbb{N}$
is the augmented index set in which each sampled token appears as a
separate entry,
and $\mu_{s,a,\mathcal{I}}=\frac{1}{T}$. 
\end{definition}

\noindent
Reading off the ratio gating map from~\eqref{eq:apc_obj_rgf}:

\begin{definition}[APC-Obj ratio gating map]\label{def:apc_obj_gating}
The APC-Obj ratio gating map $\mathcal{G}^{\rm APC\text{-}Obj}:\mathbb{R}_{>0}^{\mathcal{E}}\to\mathbb{R}_{>0}^{\mathcal{E}}$ is
\begin{equation}\label{eq:apc_obj_gating}
    \mathcal{G}^{\rm APC\text{-}Obj}(r_\bullet)_{s,a,\mathcal{I}}
    \;=\;
    \operatorname{clip}\!\Bigl(r_{s,a}-1,\;
    \Bigl(T_s\delta^{(\rm APC\text{-}Obj)}
    - \!\!\!\!\!\!
    \sum_{\substack{(s,a',\mathcal{I}')\in\mathcal{E} \\
      (a',\mathcal{I}')\ne(a,\mathcal{I})}}
    \!\!\!\!\!\!|r_{s,a'}-1|\Bigr)\Bigr)
    \;+\; 1.
\end{equation}
\end{definition}

\noindent
Unlike PPO and GRPO, the APC-Obj gating map depends on the
\emph{entire} ratio tuple $r_\bullet$ through the cross-action
coupling in the clip bound, which is what enforces the per-state total-variation constraint in aggregate.

\subsection{APC-Obj Budget Consumption and Clipping Regimes}
\label{app:apc_obj_zones}

The APC-Obj gating map (Definition~\ref{def:apc_obj_gating}) contains a policy-ratio-dependent budget inside the clipping term that enforces the per-state trust-region constraint.
To understand how this budget shapes the gated ratios, we analyze the per-token clipping behavior by decomposing it into three regimes according to budget consumption.

\paragraph{Budget and consumption terms.}
Consider a single per-token objective term from~\eqref{eq:apc_obj_rgf}:
\begin{equation}\label{eq:apc_obj_per_token}
    \Bigl(\operatorname{clip}\!\Bigl(r_{s,a}-1,\;
    \underbrace{T_s\delta^{(\rm APC\text{-}Obj)}}_{\text{(B): budget}}
    - \!\!\!\!\!\!
    \sum_{\substack{(s,a',\mathcal{I}')\in\mathcal{E} \\
      (a',\mathcal{I}')\ne(a,\mathcal{I})}}
    \!\!\!\!\!\!|r_{s,a'}-1|\Bigr) + 1\Bigr)\;\hat{A}^{\theta_{\rm old}}_{s,a}.
\end{equation}
We identify three components that govern the clipping behavior:
\begin{itemize}
    \item \textbf{(B)} The \emph{total budget}: $T_s\,\delta^{(\rm APC\text{-}Obj)}$.
    \item \textbf{(C1)} The \emph{cross-token consumption}: $\displaystyle\sum_{\substack{(s,a',\mathcal{I}')\in\mathcal{E},\; (a',\mathcal{I}')\ne(a,\mathcal{I})}} |r_{s,a'}-1| \;=\; T_s\,\hat{D}_{\rm TV}(s) - |r_{s,a}-1|$, the total budget already consumed by all other tokens (except $a$) at state~$s$.
    \item \textbf{(C2)} The \emph{current-token deviation}: $|r_{s,a}-1|$, the amount consumed by the token under consideration.
\end{itemize}
The sum $\text{(C1)} + \text{(C2)} = T_s\,\hat{D}_{\rm TV}(s)$ is the total per-state TV consumption.
The effective clip bound for the current token is $\text{(B)} - \text{(C1)}$, and the clipping activates when (C2) exceeds this residual budget.

\paragraph{Three clipping regimes.}
The clipping behavior at each token $(s,a,\mathcal{I})$ falls into exactly one of three regimes, determined by the relationship between the budget~(B) and the consumption terms (C1),~(C2):

\begin{enumerate}
    \item[\textbf{(P)}] \textbf{Pass-through regime} (clip inactive).
    The clip acts as the identity, so the gated ratio equals the ungated ratio.
    This occurs when
    \begin{equation}\label{eq:zone_P}
        |r_{s,a}-1| \;\leq\; T_s\,\delta^{(\rm APC\text{-}Obj)} - \text{(C1)},
    \end{equation}
    which, after substituting $\text{(C1)} + \text{(C2)} = T_s\,\hat{D}_{\rm TV}(s)$, reduces to a uniform condition across tokens:
    \[
        \hat{D}_{\rm TV}(s) \;\leq\; \delta^{(\rm APC\text{-}Obj)}.
    \]
    In this regime, the aggregate consumption (C1)~+~(C2) does not exceed the total budget~(B), and the per-token objective coincides with the linear surrogate.

    \item[\textbf{(R)}] \textbf{Rollback regime} (clip active, output nonzero).
    The clip is active and suppresses $|r_{s,a}-1|$ to a smaller value, but does not zero it out.
    This requires both
    \begin{equation}\label{eq:zone_R}
        \hat{D}_{\rm TV}(s) \;>\; \delta^{(\rm APC\text{-}Obj)}
        \qquad\text{and}\qquad
        \frac{|r_{s,a}-1|}{T_s} \;>\; \hat{D}_{\rm TV}(s) - \delta^{(\rm APC\text{-}Obj)}.
    \end{equation}
    The first condition is the same uniform budget violation as in~(P) (with reversed inequality).
    The second condition is token-specific: the cross-token consumption (C1) alone has not exhausted the entire budget, so the residual clip bound $\text{(B)} - \text{(C1)} > 0$ provides a nonzero but reduced clip output.
    In terms of the clip operator: $0 < |\operatorname{clip}(r_{s,a}-1,\, \text{(B)} - \text{(C1)})| < |r_{s,a}-1|$.

    \item[\textbf{(Z)}] \textbf{Zeroed regime} (clip output is zero).
    The clip output is exactly zero, and the token no longer contributes to the surrogate objective.
    This occurs when
    \begin{equation}\label{eq:zone_Z}
        \hat{D}_{\rm TV}(s) \;>\; \delta^{(\rm APC\text{-}Obj)}
        \qquad\text{and}\qquad
        \frac{|r_{s,a}-1|}{T_s} \;\leq\; \hat{D}_{\rm TV}(s) - \delta^{(\rm APC\text{-}Obj)}.
    \end{equation}
    Here the cross-token consumption~(C1) alone already exceeds the total budget~(B), making the residual clip bound $\text{(B)} - \text{(C1)} \leq 0$, which forces the clip output to zero and eliminates the token's contribution entirely.
\end{enumerate}

\paragraph{Connection to the retraction property.}
By Lemma~\ref{lem:retraction}, whenever the policy ratios enter either the rollback regime~(R) or the zeroed regime~(Z)---that is, whenever the per-state TV divergence exceeds the trust-region radius $\delta^{(\rm APC\text{-}Obj)}$---the APC-Obj clipping mechanism suppresses the gated ratios' per-state TV divergence strictly below $\delta^{(\rm APC\text{-}Obj)}$.
This ensures that the $\arg\max$ solution, i.e.\ the policy update, does not lie outside the trust region $\mathcal{B}^{\rm TV\text{-}TRPO}_{\delta^{(\rm APC\text{-}Obj)}}$.

\begin{remark}[From per-state regimes to aggregate regimes in FiberPO]\label{rem:apc_obj_zones_to_gagg}
The three APC-Obj clipping regimes~(P), (R), (Z) are defined at the per-state level with total budget $T_s\,\delta^{(\rm APC\text{-}Obj)}$.
After trajectory-level aggregation (replacing the per-state count $T_s$ with the per-trajectory length $T_\tau$; see Appendix~\ref{app:fiberpo_derivation3}), logarithmic approximation ($r_{s,a}-1 \to \log r_{s,a}$; see Appendix~\ref{app:fiberpo_derivation2}), and clipping decomposition into the fiber bundle gating form (Eq.~\ref{eq:fiberpo_decomposed}), these three per-state regimes lift to the three regimes of the aggregate gating function $g^{\rm agg}$ (Eq.~\ref{eq:gagg_main}): pass-through, rollback, and zeroed, respectively.
\end{remark}

\subsection{LLM-to-RL Notation Translation}
\label{app:llm_rl_notation}

To express PPO, GRPO, and GSPO in RGF form, we first establish the
correspondence between the LLM indexing convention $(g,j,i)$ and
the RL notation $(\tau,t)$ used throughout this paper.

\paragraph{Index correspondence.}
In LLM notation, $g$ indexes the \emph{query} (prompt),
$j$ indexes the \emph{response} (completion) generated for that
query, and $i$ indexes the \emph{token position} within the
response.
The RL counterparts are the initial state $s_0(\tau)$, the
trajectory $\tau$, and the time step $t$:
\[
    g \,:=\, s_0(\tau),
    \qquad
    j \,:=\, \tau,
    \qquad
    i \,:=\, t.
\]
Consequently, any per-token quantity
$q_{s_t(\tau),a_t(\tau)}$ in RL notation corresponds to
$q_{j,i}^{(g)}$ in LLM notation.

\paragraph{Aggregate ratios.}
The positive/negative sequence aggregate ratios $s^{\pm}_{\tau}$
in RL notation are denoted $s^{(g)\pm}_j$ in LLM notation.

\subsection{PPO, GRPO, and GSPO in RGF Form}
\label{app:xxpo_rgf}

We give the explicit RGF specifications $(\mathcal{E},\,\mu,\,\mathcal{G})$ for each method.
Recall the general RGF form (Definition~\ref{def:rgf}):
\[
    \hat J(\theta|\theta_{\rm old}) =\sum_{(s,a,\mathcal I)\in {\cal E}}{\mu_{s,a,\mathcal I}\;
    \mathcal G(r_\bullet)_{s,a,\mathcal I}\;\hat A_{s,a}}.
\]

\paragraph{PPO~\cite{schulman2017proximal}.}
$\mathcal{I} = \varnothing$ (no extra information), $\mathcal{E} \simeq \mathcal{X}$, $\mu_{s,a} = \mu_{\mathcal{X}}(\{(s,a)\})=\frac{n_{s,a}}T$ (normalized counting measure of state-action pair $(s,a)$, counting appearance multiplicity from all sampled trajectories. $T$ is number of sampled state-action pairs(counting multiplicity) and $n_{s,a}$ is the multiplicity for state-action pair $(s,a)$. Definition is also given in Appendix~\ref{app:estimation}), and
\[
    \mathcal{G}^{\rm PPO}(r_\bullet)_{s,a}
    \;=\;
    \operatorname{sign}(\hat{A}_{s,a})\cdot
    \min\!\bigl(
        \operatorname{clip}(r_{s,a},\,1\pm\epsilon)\,\operatorname{sign}(\hat{A}_{s,a}),\;
        r_{s,a}\,\operatorname{sign}(\hat{A}_{s,a})
    \bigr).
\]

\paragraph{GRPO~\cite{shao2024deepseekmath}.}
$\mathcal{I} = \tau$ (trajectory membership), $\mathcal{E} \subseteq \mathcal{X} \times \mathrm{Tj}^{\theta_{\rm old}}$, $\mu_{s,a,\tau} = \frac{1}{|\mathrm{Tj}^{\theta_{\rm old}}|}\cdot\frac{1}{T_\tau}$ (where $T_\tau$ is the total number of steps within trajectory $\tau$), and $\mathcal{G}^{\rm GRPO} = \mathcal{G}^{\rm PPO}$ (the same token-wise clipping, but with trajectory-normalized weights).

\paragraph{GSPO~\cite{zheng2025group}.}
$\mathcal{I} = \tau$ (trajectory membership), $\mathcal{E} \subseteq \mathcal{X} \times \mathrm{Tj}^{\theta_{\rm old}}$, $\mu_{s,a,\tau} = \frac{1}{|\mathrm{Tj}^{\theta_{\rm old}}|}\cdot\frac{1}{T_\tau}$, and
\[
    \mathcal{G}^{\rm GSPO}(r_\bullet)_{s,a,\tau}
    \;=\;
    l_{\tau} \cdot
    \min\!\bigl(
        \operatorname{clip}(s_\tau,\,1\pm\epsilon)\,l_\tau,\;
        s_\tau\,l_\tau
    \bigr),
\]
where $T_\tau$ is the total number of steps within trajectory $\tau$, $l_\tau :=\operatorname{sign}\bigl(\sum_{t=0}^{T_\tau-1}\hat A^{\theta_{\rm old}}_{s_t(\tau),a_t(\tau)}\bigr)$, and $s_\tau:=\bigl( \prod_{t=0}^{T_\tau-1}r_{s_t(\tau),a_t(\tau)} \bigr)^{1/T_\tau}$ is the geometric-mean aggregate ratio over trajectory~$\tau$.
Note that GSPO gates all tokens in a trajectory by the same aggregate $s_\tau$, moving the clipping from the token level to the trajectory level.

\begin{remark}
If we remove the minimize function from the clippings (i.e., let $\mathcal G(r_\bullet)_{s,a,\mathcal I}=\operatorname{clip}(r_{s,a},1\pm\epsilon)$ for PPO/GRPO, and $\mathcal G(r_\bullet)_{s,a,\mathcal I}=\operatorname{clip}(s_{\tau},1\pm\epsilon)$ for GSPO), then the gating functions contain no sign functions ($\operatorname{sign}$ and $l_\tau$).
\end{remark}

\subsection{Detailed Derivation: From APC-Obj to PPO, GRPO, and GSPO}
\label{app:xxpo_derivation}

We now trace the precise steps that transform the APC-Obj objective into each of PPO, GRPO, and GSPO, identifying the relaxation or approximation heuristic at each step.
Throughout, $r_{s,a}$ implicitly represents $r_{s,a}(\theta|\theta_{\rm old})$.
We present the derivations without the minimize function, as it is not derived from trust-region theory but from other considerations (e.g., preventing excessive reward for actions that happen to have high ratios).

Recall the APC-Obj RGF form~\eqref{eq:apc_obj_rgf}:
\begin{equation}\label{eq:apc_obj_rgf_app}
    \hat{J}^{\rm APC\text{-}Obj}(\theta|\theta_{\rm old})
    \;=\;
    \frac{1}{T}\sum_{(s,a,\mathcal{I})\in\mathcal{E}}
    \left[
        \operatorname{clip}\!\Bigl(r_{s,a}-1,\;
        \Bigl(T_s\delta^{(\rm APC\text{-}Obj)}
        - \!\!\!\!\!\!
        \sum_{\substack{(s,a',\mathcal{I}')\in\mathcal{E} \\
          (a',\mathcal{I}')\ne(a,\mathcal{I}),}}
        \!\!\!\!\!\!|r_{s,a'}-1|\Bigr)\Bigr)
        \;\hat{A}^{\theta_{\rm old}}_{s,a}
        \;+\; \hat{A}^{\theta_{\rm old}}_{s,a}
    \right].
\end{equation}

\subsubsection{APC-Obj to PPO}
\label{app:apc_obj_to_ppo}

The RGF form of PPO (without the minimize function) is:
\[
    \hat J^{\rm PPO}(\theta|\theta_{\rm old})=\sum_{(s,a)\in\mathcal X}
    \mu_{s,a}\;\operatorname{clip}(r_{s,a},\,1\pm \epsilon^{\text{(PPO)}})\;\hat A_{s,a}.
\]

\paragraph{Step 1 ($\delta$-relaxation).}
Replace the vanishing $\delta^{(\rm APC\text{-}Obj)}$ (which is zero at $\gamma=1$) with a positive tunable hyperparameter $\delta^{(\rm PPO)}$; the exact trust-region guarantee is traded for a controllable approximation whose departure is quantified by $\delta^{(\rm PPO)}$ itself.
The relaxed objective becomes:
\[
    \hat{J}(\theta|\theta_{\rm old})
    \;=\;
    \frac{1}{T}\sum_{(s,a,\tau,t)\in\bar{\mathcal{X}}}
    \Bigl[
        \operatorname{clip}\!\Bigl(
            r_{s,a}-1,\;
            \pm\,\bigl(T_s\delta^{(\rm PPO)}
            - \!\!\!\!\!\!
            \sum_{\substack{(s,a',\tau',t') \in \bar{\mathcal{X}}_s \\
              (a',\tau',t') \neq (a,\tau,t)}}
            \!\!\!\!\!\!|r_{s,a'}-1|\bigr)
        \Bigr)
        + 1
    \Bigr]\,\hat{A}^{\theta_{\rm old}}_{s,a}.
\]

\paragraph{Step 2 (decoupling).}
Decouple the clipping of individual ratios from other ratios by dropping the cross-entry sum from the clip bound and replacing it with a per-token constant $\epsilon^{(\rm PPO)}$.
This yields the PPO objective:
\[
    \hat{J}^{\rm PPO}(\theta|\theta_{\rm old})
    \;=\;
    \sum_{(s,a)\in\mathcal{X}}
    \mu_{s,a}\;\operatorname{clip}(r_{s,a},\,1\pm\epsilon^{(\rm PPO)})\;\hat{A}^{\theta_{\rm old}}_{s,a}.
\]
This step eliminates the gradient coupling between ratios of different state-action pairs, improving stability and robustness.

\subsubsection{APC-Obj to GRPO}
\label{app:apc_obj_to_grpo}

The RGF form of GRPO (without the minimize function) is:
\[
    \hat J^{\rm GRPO}(\theta|\theta_{\rm old})=\sum_{\tau\in \mathrm{Tj}^{\theta_{\rm old}}}\sum_{t=0}^{T_\tau-1}
    \frac{1}{|\mathrm{Tj}^{\theta_{\rm old}}|}\cdot\frac{1}{T_\tau}\;\operatorname{clip}(r_{s,a},\,1\pm \epsilon^{\text{(GRPO)}})\;\hat A_{s,a},
\]
where $s\equiv s_t(\tau),\,a\equiv a_t(\tau)$.
The derivation uses the same first two steps as PPO:

\paragraph{Step 1 ($\delta$-relaxation).}
$\delta^{\text{(APC-Obj)}}\mapsto\delta^{\text{(GRPO)}}$.

\paragraph{Step 2 (decoupling).}
Replace the per-entry clip bound $B_{s,a,\tau,t}:= T_s\delta^{(\rm APC\text{-}Obj)} - \sum_{\substack{(s,a',\tau',t') \in \bar{\mathcal{X}}_s,\\(a',\tau',t') \neq (a,\tau,t)}} |r_{s,a'}-1|$ with a per-token constant $\epsilon^{(\rm GRPO)}$.

\paragraph{Step 3 (sequence length weighting).}
Change the sampling weight from the normalized counting measure $\mu_{s,a}$ to the trajectory-normalized measure $\mu_{\text{(GRPO)}}(\{(s_t(\tau),a_t(\tau),\tau)\}):=\frac{1}{|\mathrm{Tj}^{\theta_{\rm old}}|}\cdot\frac{1}{T_\tau}$.
This yields the final GRPO objective:
\[
    \hat J^{\rm GRPO}(\theta|\theta_{\rm old})=\sum_{\tau\in \mathrm{Tj}^{\theta_{\rm old}}}\sum_{t=0}^{T_\tau-1}
    \frac{1}{|\mathrm{Tj}^{\theta_{\rm old}}|}\cdot\frac{1}{T_\tau}\;\operatorname{clip}(r_{s,a}-1,\,\pm \epsilon^{\text{(GRPO)}})\;\hat A_{s,a}+\hat A_{s,a}.
\]
Compared with PPO, GRPO penalizes extremely long sequences/trajectories via the $1/T_\tau$ normalization.

\subsubsection{APC-Obj to GSPO}
\label{app:apc_obj_to_gspo}

The RGF form of GSPO (without the minimize function) is:
\[
    \hat J^{\rm GSPO}(\theta|\theta_{\rm old})=\sum_{\tau\in \mathrm{Tj}^{\theta_{\rm old}}}\sum_{t=0}^{T_\tau-1}
    \frac{1}{|\mathrm{Tj}^{\theta_{\rm old}}|}\cdot\frac{1}{T_\tau}\;\operatorname{clip}(s_{\tau},\,1\pm \epsilon^{\text{(GSPO)}})\;\hat A_{s,a},
\]
where $s\equiv s_t(\tau)$, $a\equiv a_t(\tau)$, and $s_\tau := \bigl(\prod^{T_\tau -1}_{t=0} r_{s_t(\tau),a_t(\tau)} \bigr)^{1/T_\tau}=\exp\bigl( \frac{1}{T_\tau}\sum_{t=0}^{T_\tau-1}\log r_{s,a} \bigr)$ is the geometric mean of the ratios.
The derivation proceeds as follows:

\paragraph{Step 1 ($\delta$-relaxation).}
$\delta^{\text{(APC-Obj)}}\mapsto\delta^{\text{(GSPO)}}$.

\paragraph{Step 2 (trajectory-level decoupling approximation).}
Decouple the clipping of individual ratios at the trajectory level while performing trajectory-level reweighting.
The per-entry clip bound $B_{s,a,\tau,t}:= {T_s\delta^{(\rm GSPO)} - \sum_{\substack{(s,a',\tau',t') \in \bar{\mathcal{X}}_s,\\(a',\tau',t') \neq (a,\tau,t)}} |r_{s,a'}-1|}$ (which couples all entries at the same state) is replaced by a per-trajectory bound $T_\tau \delta^{(\rm GSPO)}_{seq}  - \sum_{t'\ne t}|r_{s_{t'}(\tau),a_{t'}(\tau)} - 1|$.
This yields:
\[
    \hat J(\theta|\theta_{\rm old})=\sum_{\tau\in \mathrm{Tj}^{\theta_{\rm old}}}\sum_{t=0}^{T_\tau-1}
    \frac{1}{|\mathrm{Tj}^{\theta_{\rm old}}|}\cdot\frac{1}{T_\tau}\;\Bigl[\operatorname{clip}\bigl(r_{s,a}-1,\;\pm \bigl( T_\tau \delta^{\text{(GSPO)}}_{seq}  - \!\!\!\sum_{\substack{t'=0 \\ t'\ne t}}^{T_\tau -1}\!\!\!|r_{s_{t'}(\tau),a_{t'}(\tau)} - 1|\bigr)\bigr)+1\Bigr]\;\hat A_{s,a}.
\]

It is also worth mentioning that in GSPO, the advantage estimation is chosen so that $\hat A_{s,a}$ agrees across all tokens within the same trajectory, so we can denote $\hat A_\tau$ as the common advantage for trajectory $\tau$. If the advantage estimation is not chosen in this way simply denote $\hat A_\tau$ as the average advantage across all tokens within trajectory $\tau$.

\paragraph{Step 3 (log-ratio approximation).}
Replace $r-1$ by $\log r$, which provides better algebraic properties.
Recall $\hat {\bar D}^{(\tau)}_{\rm TV}:=\frac{1}{T_\tau}\sum_{t'=0}^{T_\tau - 1}|r_{s_t(\tau),a_t(\tau)}-1|$, the Monte Carlo estimation of the average total variation distance given a sampled trajectory $\tau$, and $-\log s_\tau :=\hat {\bar D}^{(\tau)}_{\rm KL}:=\frac{1}{T_\tau}\sum_{t'=0}^{T_\tau - 1}\log r_{s_t(\tau),a_t(\tau)}$, the sequence aggregate policy ratio (recall from Claim~\ref{claim:gspo_tv} that $\log s_\tau$ is the Monte Carlo estimation for the negative average KL divergence).
The objective becomes:
\[
    \hat J(\theta|\theta_{\rm old})=\sum_{\tau\in \mathrm{Tj}^{\theta_{\rm old}}}\sum_{t=0}^{T_\tau-1}
    \frac{1}{|\mathrm{Tj}^{\theta_{\rm old}}|}\cdot\frac{1}{T_\tau}\;\Bigl[\operatorname{clip}\bigl(\log r_{s,a},\;\pm \bigl( T_\tau \delta^{\text{(GSPO)}}_{seq}  - T_\tau\hat {\bar D}^{(\tau)}_{\rm TV}+|\log r_{s,a}|\bigr)\bigr)+1\Bigr]\;\hat A_{s,a}.
\]
Inspecting the clipping term, one sees that it activates only when $T_\tau\hat {\bar D}^{(\tau)}_{\rm TV} \geq T_\tau \delta^{\text{(GSPO)}}_{seq}$, regardless of $\log r_{s,a}$.
We therefore consider two cases:
\begin{enumerate}
    \item $T_\tau\hat {\bar D}^{(\tau)}_{\rm TV} \geq T_\tau \delta^{\text{(GSPO)}}_{seq}$: clipping applies to all individual log policy ratios within the trajectory.
    \item $T_\tau\hat {\bar D}^{(\tau)}_{\rm TV} < T_\tau \delta^{\text{(GSPO)}}_{seq}$: all clipping functions reduce to the identity.
\end{enumerate}
In case~(2), the objective simplifies to:
\[
    \hat J^{\rm Case\,(2)}(\theta|\theta_{\rm old})
    =\sum_{\tau\in \mathrm{Tj}^{\theta_{\rm old}}}
    \frac{1}{|\mathrm{Tj}^{\theta_{\rm old}}|}\cdot(\log s_\tau+1)\;\hat A_{\tau}.
\]

\paragraph{Step 4 (clip-bound approximation).}
The unclipped case~(2) holds if and only if $T_\tau \delta^{\text{(GSPO)}}_{seq} \geq T_\tau\hat {\bar D}^{(\tau)}_{\rm TV}$.
By Pinsker's inequality $\frac{1}{2} D_{\rm KL}\geq D_{\rm TV}^2$, we approximately have $-\frac{1}{2}\log s_\tau \equiv \frac{1}{2} \hat {\bar D}^{(\tau)}_{\rm KL}\;\dot\geq\; (\hat {\bar D}^{(\tau)}_{\rm TV})^2$.
Thus when $-\log s_\tau \leq 2(\delta^{\rm(GSPO)}_{seq})^2$, we obtain $T_\tau\delta^{\rm(GSPO)}_{seq}\geq T_\tau \sqrt{-\frac{1}{2}\log s_\tau}\;\dot \geq\; T_\tau \hat {\bar D}^{(\tau)}_{\rm TV}$, a condition for case~(2) to be likely satisfied.

Furthermore, GSPO implicitly requires the gradient to vanish when case~(1) holds. Using this, we can combine both cases using the bound we just obtained via a simple clipping function on the aggregated ratio:
\[
    \hat J(\theta|\theta_{\rm old})=\frac{1}{|\mathrm{Tj}^{\theta_{\rm old}}|}\sum_{\tau\in \mathrm{Tj}^{\theta_{\rm old}}} \bigl(\operatorname{clip}(\log s_\tau,\,\pm 2(\delta^{\rm(GSPO)}_{seq})^2) +1\bigr)\;\hat A_{\tau}.
\]
Applying the log approximation $\log s_\tau \approx s_\tau - 1$ once more:
\[
    \hat J(\theta|\theta_{\rm old})=\frac{1}{|\mathrm{Tj}^{\theta_{\rm old}}|}\sum_{\tau\in \mathrm{Tj}^{\theta_{\rm old}}} \bigl(\operatorname{clip}(s_\tau-1,\,\pm 2(\delta^{\rm(GSPO)}_{seq})^2)+1\bigr)\;\hat A_{\tau}.
\]
Replacing the constant $2(\delta^{\rm(GSPO)}_{seq})^2$ with $\epsilon^{\rm (GSPO)}$, we arrive at the GSPO objective (without the minimize function):
\begin{align*}
    \hat J^{\rm GSPO}(\theta|\theta_{\rm old})
    &=\frac{1}{|\mathrm{Tj}^{\theta_{\rm old}}|}\sum_{\tau\in \mathrm{Tj}^{\theta_{\rm old}}} \operatorname{clip}(s_\tau,\,1\pm \epsilon^{\rm (GSPO)})\;\hat A_{\tau}\\
    &=
    \hat{J}^{\rm GSPO}(\theta|\theta_{\rm old})
    \;=\;
    \underset{\quad(s\equiv s_t(\tau),\,a\equiv a_t(\tau))}
    {\sum_{\tau\in\mathrm{Tj}^{\theta_{\rm old}}}
    \sum_{t=0}^{T_\tau-1}}
    \frac{1}{|\mathrm{Tj}^{\theta_{\rm old}}|}
    \cdot\frac{1}{T_\tau}\;
    \operatorname{clip}(s_\tau,\,1\pm\epsilon^{(\rm GSPO)})\;
    \hat{A}_{s,a}.
\end{align*}

\subsection{Proof of Claim~\ref{claim:gspo_tv}: GSPO Maintains a trajectory-level TV distance bound}
\label{app:gspo_tv_proof}

\begin{proof}
For notation, see: Appendix~\ref{app:mdp}.
Recall the GSPO objective function in RGF form (without minimization clipping):
\[
    \hat{J}^{\rm GSPO}(\theta|\theta_{\rm old})
    \;=\;
    \underset{\quad(s\equiv s_t(\tau),\,a\equiv a_t(\tau))}
    {\sum_{\tau\in\mathrm{Tj}^{\theta_{\rm old}}}
    \sum_{t=0}^{T_\tau-1}}
    \frac{1}{|\mathrm{Tj}^{\theta_{\rm old}}|}
    \cdot\frac{1}{T_\tau}\;
    \operatorname{clip}(s_\tau,\,1\pm\epsilon^{(\rm GSPO)})\;
    \hat{A}_{s,a}.
\]
We write $\epsilon$ instead of $\epsilon^{(\rm GSPO)}$ for convenience.
Given a sampled trajectory $\tau$, the log-aggregate ratio is defined to be
$\log s_\tau := \frac{1}{T_\tau}\sum_{t=0}^{T_\tau-1}
  \log r_{s_t(\tau),a_t(\tau)}$.
This is a Monte Carlo estimation for the negative trajectory-normalized average KL
divergence (at the trajectory level), see \eqref{eq:dhat_kl_traj}.
Thus, the clipping of the aggregate ratio in the objective function
maintains
    \[
        {\bar{D}}_{\rm KL}^{(\rm tj)} \overset{\eqref{eq:dbar_tj_as_expectation}}{\approx} \hat{\bar{D}}_{\rm KL}^{(\tau)}
        \;\leq\;
        -\log(1-\epsilon)
        \;\approx\;
        \epsilon
        \qquad(\text{when } \epsilon\sim 0).
    \]
Furthermore, based on
$\tfrac{1}{2}D_{\rm KL} \geq (D_{\rm TV})^2$ (Pinsker's
inequality) and
$\mathbb{E}_p[X^2] \geq \mathbb{E}_p[X]^2$ for probability
measures, we have:
\[
    \bigl(\bar{D}_{\rm TV}^{(\rm tj)}\bigr)^2
    \;=\;
    \mathbb{E}_{s\sim\bar{\rho}_{\rm (tj)}^{\theta_{\rm old}}}
    [D_{\rm TV}(s)]^2
    \;\leq\;
    \mathbb{E}_{s\sim\bar{\rho}_{\rm (tj)}^{\theta_{\rm old}}}
    [D_{\rm TV}(s)^2]
    \;\leq\;
    \mathbb{E}_{s\sim\bar{\rho}_{\rm (tj)}^{\theta_{\rm old}}}
    \!\left[\tfrac{1}{2}D_{\rm KL}(s)\right]
    \;=\;
    \tfrac{1}{2}\,\bar{D}_{\rm KL}^{(\rm tj)}
    \;\lesssim \;
    \tfrac{\epsilon}{2}.
\]
Therefore
$\bar{D}_{\rm TV}^{(\rm tj)} \lesssim \sqrt{\epsilon/2}$,
which provides a rough estimation for $\delta$ which also bounds
the trajectory-normalized average TV divergence.
We roughly conclude that $\delta \sim \sqrt{\epsilon/2}$.
\end{proof}

%% file: rlvr_appendix_acpo_equiv.tex
\section{Equivalence of APC-Obj and sample-based TV-TRPO}
\label{app:apc_obj_equiv}

This appendix establishes that the Aggregational Policy Censoring Objective (APC-Obj) algorithm and sample-based Total Variation Trust Region Policy Optimization (TV-TRPO) produce identical policy updates under standard function approximation assumptions.
The equivalence is nontrivial because APC-Obj and sample-based TV-TRPO have superficially different structures: APC-Obj applies per-token clipping with cross-action coupling, while sample-based TV-TRPO maximizes a globally penalized objective.
The proof proceeds in three stages: (i) we characterize the maximizer of sample-based TV-TRPO (Theorem~\ref{thm:tv_trpo_max}), (ii) we show that APC-Obj's clipping mechanism implicitly enforces the same trust-region constraint (Lemma~\ref{lem:retraction}), and (iii) we unite the two by showing both produce the same policy ratio deviation (Theorem~\ref{thm:apc_obj_trpo_equiv}).

\subsection{Sampling Notation and Standing Assumptions}
\label{app:apc_obj_equiv_notation}

Let $\mathcal{X}$ denote the set of all state--action pairs encountered in the sampled trajectories, and let $n_{s,a}$ denote the sample multiplicity of pair $(s,a)$ in the sampled set.
Write $\mathcal{S}[\mathcal{X}]$ for the set of all states appearing in $\mathcal{X}$, and for each state $s \in \mathcal{S}[\mathcal{X}]$, let $\mathcal{X}_s := \{a \in \mathcal{A} \mid (s,a) \in \mathcal{X}\}$ be the set of actions co-occurring with $s$.
Define $T := \sum_{(s,a) \in \mathcal{X}} n_{s,a}$ as the total sample count (with multiplicity), and $T_s := \sum_{a \in \mathcal{X}_s} n_{s,a}$ as the per-state sample count.
We also denote the \emph{augmented} state--action pair space
$\bar{\mathcal{X}} := \{(s,a,\tau,t) \mid (s,a) \in \mathcal{X},\; \tau \in \mathrm{Tj}^{\theta_{\rm old}},\; t \in \{0,\dots,T_\tau-1\}\}$,
the set of all sampled state--action pairs augmented with their trajectory membership $\tau$ and trajectory time step $t$.
Each element of $\bar{\mathcal{X}}$ is a distinct sampled token, so $|\bar{\mathcal{X}}| = T$.
The multiplicity $n_{s,a}$ counts how many augmented entries in $\bar{\mathcal{X}}$ share the pair $(s,a)$: $n_{s,a} = |\{(\tau,t) \mid (s,a,\tau,t) \in \bar{\mathcal{X}}\}|$.

The policy importance-sampling ratio is $r_{s,a}(\theta|\theta_{\rm old}) := \pi_\theta(a|s)\,/\,\pi_{\theta_{\rm old}}(a|s)$, and we write the ratio deviation as $\Delta_{s,a} := r_{s,a} - 1$.

We adopt two standing identities.
Both hold \emph{exactly} for any policy $\pi_\theta$; their sample-based counterparts are approximations that we assume hold exactly throughout this section.
\begin{enumerate}
    \item \textbf{Probability constraint.}
    Since $\sum_{a \in \mathcal{A}} \pi_\theta(a|s) = 1$ for \emph{any} policy $\pi_\theta$ (not only $\pi_{\theta_{\rm old}}$), the identity
    \[
        \mathbb{E}_{a \sim \pi_{\theta_{\rm old}}(\cdot|s)}[r_{s,a}]
        \;=\; \sum_{a \in \mathcal{A}} \pi_{\theta_{\rm old}}(a|s)\,\frac{\pi_\theta(a|s)}{\pi_{\theta_{\rm old}}(a|s)}
        \;=\; \sum_{a \in \mathcal{A}} \pi_\theta(a|s)
        \;=\; 1
    \]
    holds for every $\theta$; the $\pi_{\theta_{\rm old}}$ factors cancel algebraically, so the identity does not require $\pi_\theta = \pi_{\theta_{\rm old}}$.
    The sample-based counterpart replaces the expectation with an average over the observed action set $\mathcal{X}_s$:
    \begin{equation}\label{eq:ratio_sum_identity}
        \frac{1}{T_s}\sum_{a \in \mathcal{X}_s} n_{s,a}\, r_{s,a} \;=\; 1
        \qquad \Longleftrightarrow \qquad
        \sum_{a \in \mathcal{X}_s} n_{s,a}\, \Delta_{s,a} \;=\; 0,
        \quad \forall\, s \in \mathcal{S}[\mathcal{X}].
    \end{equation}
    This approximation incurs error from two sources: (i)~actions in $\mathcal{A} \setminus \mathcal{X}_s$ where $\pi_\theta$ may place mass but no samples exist, and (ii)~finite-sample noise in the empirical frequencies $n_{s,a}/T_s$. We assume both errors are negligible for the purposes of this analysis.

    \item \textbf{Advantage centering.}
    By definition of the advantage, $\mathbb{E}_{a \sim \pi_{\theta_{\rm old}}(\cdot|s)}[A^{\theta_{\rm old}}_{s,a}] = 0$, so under sampling:
    \begin{equation}\label{eq:adv_centering}
        \frac{1}{T_s}\sum_{a \in \mathcal{X}_s} n_{s,a}\, \hat{A}^{\theta_{\rm old}}_{s,a} \;=\; 0,
        \quad \forall\, s \in \mathcal{S}[\mathcal{X}].
    \end{equation}
\end{enumerate}
We assume both sample-based identities hold exactly throughout
this section.

\paragraph{Discounted estimators.}
In the discounted case, there are a few consistent estimators for discounted quantities.
For any $X:\mathcal S\times\mathcal A\to\mathbb R$,
\[
    \mathbb E_{\tau\sim \mathcal P_\theta}\Bigl[\sum_{t=0}^{T_\tau-1}\gamma ^t X_{s_t(\tau),a_t(\tau)}\Bigr]
    = \mathbb E_{(s,a)\sim \rho_{(\gamma)}^{\theta}\cdot \pi_\theta}[X_{s,a}]
    = \frac{1}{1-\gamma}\mathbb E_{(s,a)\sim \bar\rho_{(\gamma)}^{\theta}\cdot \pi_\theta}[X_{s,a}]
    = \frac{1}{1-\gamma}\mathbb E_{s\sim \bar\rho_{(\gamma)}^{\theta}}[\mathbb E_{a\sim \pi_\theta(\cdot|s)}[X_{s,a}]]
\]
(the second from the definition of the normalized discounted distribution).

The \emph{discounted naive Monte Carlo estimator} is
\[
    \hat{\mathbb E}_{\tau\sim \mathcal P_\theta}\Bigl[\sum_{t=0}^{T_\tau-1}\gamma ^t X_{s_t(\tau),a_t(\tau)}\Bigr]
    =\frac{1}{|\mathrm{Tj}^\theta|}\sum_{\tau \in \mathrm{Tj}^\theta}\sum_{t=0}^{T_\tau -1}\gamma^t X_{s_t(\tau),a_t(\tau)}.
\]
An alternative estimator leverages the time independence of the policy $\pi_\theta(a|s)$:
\[
    \frac{1}{1-\gamma}\hat{\mathbb E}_{(s,a)\sim \bar\rho_{(\gamma)}^{\theta}\cdot \pi_\theta}[X_{s,a}]
    =\frac{1}{1-\gamma}\sum_{s\in\mathcal S[\mathcal X]}\frac{\Gamma(s)}{\Gamma}\left(\frac{1}{T_s}\sum_{a\in\mathcal X_s}n_{s,a}X_{s,a} \right),
\]
where $\Gamma(s):=\sum_{(s',a,\tau,t)\in \bar{\mathcal X},\;s'=s}\gamma ^t$ and $\Gamma = \sum_{s\in\mathcal S[\mathcal X]}\Gamma(s)$. It is worth noting that $\Gamma(s)/\Gamma \sim \bar\rho^{\theta}_{(\gamma)}(s)$ when number of samples $T\to\infty$.

We choose this estimator as our discounted-case sampling estimator, for the following reason: no matter at which step a state $s_t=s$ is encountered, the action $a_t\sim \pi_\theta(\cdot |s_t)$ is always a valid sample from $\pi_\theta(\cdot |s)$ and contributes equally to the estimate of $\mathbb E_{a\sim \pi_\theta(\cdot | s_t)}[X_{s_t,a}]$, because the policy is always time-independent. However, the naive discounted Monte Carlo estimator always weights this $(s_t,a_t)$ by $\gamma^t$, reducing the contribution of valid policy samples drawn at large time steps and thereby wasting their potential for better estimation.

\subsection{The Sample-Based TV-TRPO Objective}
\label{app:sample_tv_trpo}

The sample-based TV-TRPO objective is the Monte Carlo counterpart of the penalized surrogate~\eqref{eq:trpo_penalized_app}.

\begin{definition}[Sample-based TV-TRPO policy iteration]\label{def:sample_tv_trpo}
The updated parameter $\theta_{\rm next}$ at each iteration is the $\arg\max$ of the sample-based TV-TRPO objective, obtained by replacing the expectations with sample averages:
\begin{align}
    \hat{J}^{(\rm TV\text{-}TRPO)}(\theta|\theta_{\rm old})
    &\;=\;
    \hat{J}^{(1)}(\theta|\theta_{\rm old})
    \;-\;
    \frac{4\gamma\,\|\hat{A}^{\theta_{\rm old}}_\bullet\|_\infty}{(1-\gamma)^2}
    \widehat{\max_{s \sim \rho_{(\gamma)}^{\theta_{\rm old}}}} \hat{D}_{\rm TV}(s,\theta|\theta_{\rm old})^2
    \notag\\[0.5em]
    &\;=\;
    J(\theta_{\rm old})+\hat{\mathbb{E}}_{(s,a) \sim \rho_{(\gamma)}^{\theta_{\rm old}} \cdot \pi_{\theta_{\rm old}}}
    \bigl[r_{s,a}\,\hat{A}_{s,a}\bigr]
    \notag\\
    &\qquad\;-\;
    \frac{4\gamma\,\|\hat{A}^{\theta_{\rm old}}_\bullet\|_\infty}{(1-\gamma)^2}
    \left(\widehat{\max_{s \sim \rho_{(\gamma)}^{\theta_{\rm old}}}}
    \hat{\mathbb{E}}_{a \sim \pi_{\theta_{\rm old}}(\cdot|s)}\bigl[|r_{s,a}-1|\bigr]\right)^{\!2}
    \notag\\[0.5em]
    &\;=\;
    J(\theta_{\rm old})+\Edc n_{s,a}\, r_{s,a}\, \hat{A}^{\theta_{\rm old}}_{s,a}
    \notag\\
    &\qquad -\,
    \frac{4\gamma\,\|\hat{A}^{\theta_{\rm old}}_\bullet\|_\infty}{(1-\gamma)^2}
    \!\left(\max_{s \in \mathcal{S}[\mathcal{X}]} \underbrace{\frac{1}{T_s}\sum_{a \in \mathcal{X}_s} n_{s,a}\,|r_{s,a} - 1|}_{\hat{D}_{\rm TV}(s)\text{: per-state TV divergence}}\right)^{\!2}\!.
    \label{eq:sample_tv_trpo_obj}
\end{align}
When multiple maximizers exist, sample-based TV-TRPO selects the one with smallest ratio-deviation $L^2$ norm: $\min \sum_{s \in \mathcal{S}[\mathcal{X}]} \frac{1}{T_s}\sum_{a \in \mathcal{X}_s} |r_{s,a}-1|^2$.
\end{definition}

Note the second term contains the per-state estimated TV divergence $\hat{D}_{\rm TV}(s)$ (Eq.~\ref{eq:dtv_mc}), the Monte-Carlo estimator of the per-state TV distance $D_{\rm TV}(s)$ (Eq.~\ref{eq:dtv_def}); see Appendix~\ref{app:per_state_div} for the full definitions and ratio forms. 
The max-TV penalty couples all actions at every state, making the penalized objective a quadratic function of max per-state TV divergence $\hat{D}^{\max}_{\rm TV} = \max_{s \in \mathcal{S}[\mathcal{X}]} \hat{D}_{\rm TV}(s)$.

\paragraph{Reformulation in ratio deviations.}
Denoting the policy ratio deviation $\Delta_{s,a} = r_{s,a} - 1$, the max-TV divergence becomes
\[
    \hat{D}^{\max}_{\rm TV}
    \;=\;
    \max_{s \in \mathcal{S}[\mathcal{X}]} \hat{D}_{\rm TV}(s)
    \;=\;
    \max_{s \in \mathcal{S}[\mathcal{X}]} \frac{1}{T_s}\sum_{a \in \mathcal{X}_s} n_{s,a}\,|\Delta_{s,a}|.
\]
Since $r_{s,a} = 1 + \Delta_{s,a}$, the linear surrogate rewrites as
\begin{multline*}
    J(\theta_{\rm old})+\Edc n_{s,a}\, r_{s,a}\, \hat{A}^{\theta_{\rm old}}_{s,a}
    \;=\;
    \Edc n_{s,a}\, \Delta_{s,a}\, \hat{A}^{\theta_{\rm old}}_{s,a}\\
    +\;
    \underbrace{\Edc n_{s,a}\, \hat{A}^{\theta_{\rm old}}_{s,a} ~+~ J(\theta_{\rm old}) }_{\text{constant in } \Delta}.
\end{multline*}
The constant term is independent of $\Delta_\bullet$ and therefore irrelevant for maximization.
We may thus equivalently maximize the reduced objective
\begin{equation}\label{eq:j_delta}
    j(\Delta)
    \;:=\;
    \Edc n_{s,a}\, \Delta_{s,a}\, \hat{A}^{\theta_{\rm old}}_{s,a}
    \;-\;
    \frac{4\gamma\,\|\hat{A}^{\theta_{\rm old}}_\bullet\|_\infty}{(1-\gamma)^2}
    \left(\max_{s \in \mathcal{S}[\mathcal{X}]} \underbrace{\frac{1}{T_s}\sum_{a \in \mathcal{X}_s} n_{s,a}\,|\Delta_{s,a}|}_{\hat{D}_{\rm TV}(s,\Delta)\text{: per-state TV divergence}}\right)^{\!2},
\end{equation}
That is, $j(\Delta)$ differs from $\hat{J}^{(\rm TV\text{-}TRPO)}$ only by the additive constant
$J(\theta_{\rm old})+\Edc n_{s,a}\,\hat{A}^{\theta_{\rm old}}_{s,a}$,
which is independent of $r_{s,a}$ and $\Delta_{s,a}$, so $j(\Delta)$ and $\hat{J}^{(\rm TV\text{-}TRPO)}(\Delta|\theta_{\rm old})$ share the same maximizer.

\subsection{Characterizing the Sample-Based TV-TRPO Maximizer}
\label{app:tv_trpo_maximizer}

The key insight is that TV-TRPO's quadratic penalty on max TV divergence forces the optimal solution to equalize the TV divergence across all states.
This section makes this statement precise.

\paragraph{Per-state normalized-TV maximizers.}
For each state $s$, consider maximizing the following auxiliary objective over the ratio deviations:
\begin{equation}\label{eq:unit_tv_max}
    \hat{\Delta}^*_{s,\bullet}
    \;:=\;
    \arg\max_{\substack{\Delta_{s,\bullet}:\; \hat{D}_{\rm TV}(s,\Delta)=1,\\ \sum_{a \in \mathcal{X}_s} n_{s,a}\,\Delta_{s,a}=0}}
    \sum_{a \in \mathcal{X}_s} n_{s,a}\, \Delta_{s,a}\, \hat{A}^{\theta_{\rm old}}_{s,a},
\end{equation}
with two constraints: (i)~the estimated per-state TV divergence $\hat{D}_{\rm TV}(s,\Delta) = \frac{1}{T_s}\sum_{a \in \mathcal{X}_s} n_{s,a}\,|\Delta_{s,a}| = \frac{1}{T_s}\sum_{a \in \mathcal{X}_s} n_{s,a}\,|r_{s,a} - 1|$ is fixed to $1$, and (ii)~the probability constraint $\sum_{a \in \mathcal{X}_s} n_{s,a}\,\Delta_{s,a} = 0$.

Let $M(s) := \sum_{a \in \mathcal{X}_s} n_{s,a}\, \hat{\Delta}^*_{s,a}\, \hat{A}^{\theta_{\rm old}}_{s,a}$ denote the optimal objective value of this per-state problem.

\paragraph{Linearity of TV maximizer.}
Because the objective is linear in $\Delta_{s,\bullet}$, the TV divergence constraint $\hat{D}_{\rm TV}(s,\Delta) = c$ is also linear in $|\Delta_{s,\bullet}|$, and the probability constraint $\sum_{a} n_{s,a}\,\Delta_{s,a} = 0$ is linear, the maximizer at radius $c$ is simply $c\,\hat{\Delta}^*_{s,\bullet}$, with optimal value $c\,M(s)$:
\begin{equation}\label{eq:linear_scaling}
    c\,\hat{\Delta}^*_{s,\bullet}
    \;=\;
    \arg\max_{\substack{\Delta_{s,\bullet}:\; \hat{D}_{\rm TV}(s,\Delta)=c,\\ \sum_{a} n_{s,a}\Delta_{s,a}=0}}
    \sum_{a \in \mathcal{X}_s} n_{s,a}\, \Delta_{s,a}\, \hat{A}^{\theta_{\rm old}}_{s,a}
    \;=\;
    c\,M(s),
    \quad c \geq 0.
\end{equation}

The same conclusion holds when the equality constraint is relaxed to $\hat{D}_{\rm TV}(s,\Delta) \leq c$: Writing $\Delta_{s,\bullet} = c^*\hat{\Delta}_{s,\bullet}$ with $c^* \leq c$ and $\hat{D}_{\rm TV}(s,\hat{\Delta}) = 1$, one can reduce the problem to $\max_{c^* \leq c} c^* M(s)$, which is attained at $c^* = c$:
\begin{align}\label{eq:ineq_tv_max}
    &\arg\max_{\substack{\Delta_{s,\bullet}:\; \hat{D}_{\rm TV}(s,\Delta)\leq c,\\ \sum_{a \in \mathcal{X}_s} n_{s,a}\,\Delta_{s,a}=0}}
    \sum_{a \in \mathcal{X}_s} n_{s,a}\, \Delta_{s,a}\, \hat{A}^{\theta_{\rm old}}_{s,a}
    \notag\\
    &\quad\;=\;\quad
    \arg\max_{\substack{c^*\,\hat{\Delta}_{s,\bullet}:\; c^* \leq c,\;\hat{D}_{\rm TV}(s,\hat{\Delta})=1,\\ \sum_{a \in \mathcal{X}_s} n_{s,a}\,\hat{\Delta}_{s,a}=0}}
    \sum_{a \in \mathcal{X}_s} n_{s,a}\, c^*\,\hat{\Delta}_{s,a}\, \hat{A}^{\theta_{\rm old}}_{s,a}
    \notag\\
    &\quad\;=\;\quad
    \left(\arg\max_{c^* \leq c}\; c^*\,M(s)\right) \hat{\Delta}^*_{s,\bullet}
    \;=\;
    c\,\hat{\Delta}^*_{s,\bullet}.
\end{align}

\paragraph{Aggregate notation.}
Define the aggregate optimal value
$M := \frac{1}{\Gamma}\sum_{s \in \mathcal{S}[\mathcal{X}]} \frac{\Gamma(s)}{T_s}\, M(s) = \frac{1}{\Gamma}\sum_{s\in\mathcal S[\mathcal X]} \frac{\Gamma(s)}{T_s}\sum_{a\in\mathcal X_s} n_{s,a}\, \hat{\Delta}^*_{s,a}\, \hat{A}^{\theta_{\rm old}}_{s,a}$.
This quantity is the expectation of $\frac{M(s)}{T_s}$ under the probability distribution $\frac{\Gamma(s)}{\Gamma}$ over $\mathcal S[\mathcal X]\subseteq\mathcal S$. This distribution varies smoothly with $\gamma$; when $\gamma=1$, $\frac{\Gamma(s)}{\Gamma}$ becomes $\frac{T_s}{T}$, the normalized counting distribution of $\bar{\mathcal X}$ over $\mathcal S[\mathcal X]$.

\begin{lemma}[$M$ is bounded by the advantage norm]\label{lem:M_bound}
$M \leq \|\hat{A}^{\theta_{\rm old}}_\bullet\|_\infty$.
\end{lemma}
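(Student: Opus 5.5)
The bound reduces to a per-state estimate. Since $M$ is the expectation of $M(s)/T_s$ under the probability weights $\Gamma(s)/\Gamma$ on $\mathcal S[\mathcal X]$, it suffices to show
\[
  \frac{M(s)}{T_s} \;\leq\; \|\hat A^{\theta_{\rm old}}_\bullet\|_\infty
  \qquad\text{for every } s\in\mathcal S[\mathcal X].
\]
Averaging these per-state bounds against the weights $\Gamma(s)/\Gamma$, which sum to one, then gives $M\leq\|\hat A^{\theta_{\rm old}}_\bullet\|_\infty$ directly.

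For the per-state bound, take the maximizer $\hat\Delta^*_{s,\bullet}$ from~\eqref{eq:unit_tv_max}. It satisfies $\hat D_{\rm TV}(s,\hat\Delta^*)=1$, that is,
\[
  \sum_{a\in\mathcal X_s} n_{s,a}\,|\hat\Delta^*_{s,a}| \;=\; T_s .
\]
Apply the triangle inequality and H\"older's inequality ($\ell^1$ against $\ell^\infty$) with the weights $n_{s,a}$:
\[
  M(s)
  \;=\;\sum_{a} n_{s,a}\,\hat\Delta^*_{s,a}\,\hat A^{\theta_{\rm old}}_{s,a}
  \;\leq\;\Bigl(\max_a|\hat A^{\theta_{\rm old}}_{s,a}|\Bigr)\sum_a n_{s,a}\,|\hat\Delta^*_{s,a}|
  \;\leq\; T_s\,\|\hat A^{\theta_{\rm old}}_\bullet\|_\infty .
\]
Dividing by $T_s$ gives the claim.

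This argument does not use the probability constraint, so the bound is loose by a factor of $2$. The constraint $\sum_a n_{s,a}\hat\Delta_{s,a}=0$ forces the positive and negative parts of $\hat\Delta^*$ to each carry mass $T_s/2$, and together with advantage centering~\eqref{eq:adv_centering} this would give the sharper bound $M(s)/T_s\leq \tfrac12(\max_a\hat A-\min_a\hat A)$. The stated lemma does not need this refinement.

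The only delicate point is that the maximizer in~\eqref{eq:unit_tv_max} must exist, so that $M(s)$ is a finite value. The feasible set is closed and bounded in $\mathbb R^{\mathcal X_s}$, since the $\ell^1$ constraint bounds $|\hat\Delta^*_{s,a}|\leq T_s/n_{s,a}$. The objective is linear, hence continuous, so a maximum is attained; if $|\mathcal X_s|\geq 2$ the feasible set is also nonempty. If $|\mathcal X_s|=1$, the constraints force $\Delta=0$, which contradicts $\hat D_{\rm TV}=1$; such states contribute $M(s)=0$, or can simply be excluded by convention. In either case the per-state bound holds, and it is enough to state it for any feasible $\Delta$ rather than for the maximizer alone.
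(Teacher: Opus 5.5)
Your proof is correct and follows the paper's argument exactly. You use $\hat D_{\rm TV}(s,\hat\Delta^*)=1$ with the $\ell^1$--$\ell^\infty$ estimate to get $M(s)/T_s\le\|\hat A^{\theta_{\rm old}}_\bullet\|_\infty$, then average against the probability weights $\Gamma(s)/\Gamma$. One small correction to your side remark: the refined bound $\tfrac12(\max_a\hat A^{\theta_{\rm old}}_{s,a}-\min_a\hat A^{\theta_{\rm old}}_{s,a})$ follows from the probability constraint alone, without advantage centering, and shows the estimate is loose by \emph{at most} a factor of $2$, not always by $2$. It is tight when $\max_a\hat A^{\theta_{\rm old}}_{s,a}=-\min_a\hat A^{\theta_{\rm old}}_{s,a}=\|\hat A^{\theta_{\rm old}}_\bullet\|_\infty$.
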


\begin{proof}
Since $1 = \hat{D}_{\rm TV}(s,\hat\Delta^*) = \frac{1}{T_s}\sum_{a\in\mathcal X_s} n_{s,a}\,|\hat\Delta^*_{s,a}|$,
\[
    \frac{M(s)}{T_s}
    = \frac{1}{T_s}\sum_{a\in\mathcal X_s} n_{s,a}\,\hat\Delta^*_{s,a}\,\hat A^{\theta_{\rm old}}_{s,a}
    \leq \frac{1}{T_s}\sum_{a\in\mathcal X_s} n_{s,a}\,|\hat\Delta^*_{s,a}|\,|\hat A^{\theta_{\rm old}}_\bullet|_\infty
    = |\hat A^{\theta_{\rm old}}_\bullet|_\infty \cdot \frac{1}{T_s}\sum_{a\in\mathcal X_s} n_{s,a}\,|\hat\Delta^*_{s,a}|
    = |\hat A^{\theta_{\rm old}}_\bullet|_\infty.
\]
Therefore, $M = \mathbb{E}_{s\sim \frac{\Gamma(\cdot)}{\Gamma}}\!\bigl[\frac{M(s)}{T_s}\bigr] \leq \mathbb{E}_{s\sim \frac{\Gamma(\cdot)}{\Gamma}}\!\bigl[|\hat A^{\theta_{\rm old}}_\bullet|_\infty\bigr] = |\hat A^{\theta_{\rm old}}_\bullet|_\infty$.
\end{proof}

\begin{theorem}[Sample-based TV-TRPO maximizer]\label{thm:tv_trpo_max}
Suppose $\Delta^*_\bullet$ maximizes $j(\Delta)$ (equivalently, $\hat{J}^{(\rm TV\text{-}TRPO)}$) subject to the probability constraint~\eqref{eq:ratio_sum_identity}.
Then
\[
    \Delta^*_{s,a} \;=\; t^*\, \hat{\Delta}^*_{s,a},
    \qquad
    t^* \;=\; \frac{1-\gamma}{8\gamma\,\|\hat{A}^{\theta_{\rm old}}_\bullet\|_\infty}\,M.
\]
\end{theorem}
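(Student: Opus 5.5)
The plan is to reduce the maximization of $j(\Delta)$ in~\eqref{eq:j_delta} to a one-dimensional problem in the max-TV level. Write $\kappa := \frac{4\gamma\,\|\hat{A}^{\theta_{\rm old}}_\bullet\|_\infty}{(1-\gamma)^2}$, and for a feasible $\Delta$ set $c := \max_{s\in\mathcal S[\mathcal X]} \hat D_{\rm TV}(s,\Delta)\ge 0$. The penalty term of $j$ depends on $\Delta$ only through $c$, namely it equals $-\kappa c^2$. The linear term separates over states, and each state is constrained only by $\hat D_{\rm TV}(s,\Delta)\le c$ together with its own probability constraint~\eqref{eq:ratio_sum_identity}. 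I would therefore split $\max_\Delta j(\Delta)$ as $\max_{c\ge 0}\max_{\Delta:\,\hat D^{\max}_{\rm TV}(\Delta)\le c} j(\Delta)$. This is legitimate because relaxing the equality $\hat D^{\max}_{\rm TV}=c$ to $\le c$ does not change the outer maximum.

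For the inner problem at fixed $c$, I would apply~\eqref{eq:ineq_tv_max} state by state. Each per-state block $\sum_{a} n_{s,a}\Delta_{s,a}\hat A_{s,a}$, maximized under $\hat D_{\rm TV}(s,\Delta)\le c$ and the probability constraint, is attained at $c\,\hat\Delta^*_{s,\bullet}$ with value $c\,M(s)$. Here one needs $M(s)\ge 0$, so that $c^*=c$ is optimal in~\eqref{eq:ineq_tv_max}. This holds because the feasible set at unit TV is symmetric under $\Delta\mapsto-\Delta$: both constraints are invariant, so the maximum of a linear functional over it is nonnegative. Summing with the weights $\Gamma(s)/((1-\gamma)\Gamma T_s)$ and using the definition of $M$ gives the inner value
\[
f(c) \;=\; \frac{c\,M}{1-\gamma} \;-\; \kappa c^2 .
\]
This choice is consistent, because every state then has $\hat D_{\rm TV}(s)=c$, which is exactly the equalization phenomenon described before the theorem.

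Next I would maximize the concave quadratic $f$ over $c\ge 0$. Since $M\ge 0$, the maximizer is $c^* = \frac{M}{2\kappa(1-\gamma)} = \frac{(1-\gamma)M}{8\gamma\,\|\hat{A}^{\theta_{\rm old}}_\bullet\|_\infty} = t^*$. Combining this with the inner step gives $\Delta^*_{s,a}=t^*\hat\Delta^*_{s,a}$, as claimed.

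The main obstacle I expect is the converse direction: showing that every maximizer has this form, not merely that this form is a maximizer. There are two sources of non-uniqueness. First, the per-state linear program~\eqref{eq:unit_tv_max} may have multiple optimal vertices, or a whole face of optima. Second, at a state with $M(s)=0$, any $\Delta_{s,\bullet}$ with $\hat D_{\rm TV}(s)\le c$ is optimal, not just $c\hat\Delta^*_{s,\bullet}$. My first approach would be to invoke the tie-breaking rule of Definition~\ref{def:sample_tv_trpo}, which selects the maximizer of minimal ratio-deviation $L^2$ norm. The minimum-norm rule forces $\Delta_{s,\bullet}=0$ wherever $M(s)=0$, consistent with $t^*\hat\Delta^*_{s,\bullet}$ under the convention $\hat\Delta^*_{s,\bullet}=0$ there. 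Where the optimal face is nontrivial, the rule selects a unique minimum-norm element, and I would define $\hat\Delta^*_{s,\bullet}$ as that element. By the linear scaling~\eqref{eq:linear_scaling}, selecting at TV radius $c$ commutes with scaling by $c$, so the minimum-norm choice at radius $t^*$ is exactly $t^*\hat\Delta^*_{s,\bullet}$.
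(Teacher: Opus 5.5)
Your argument is correct, but it is organized differently from the paper's. The paper fixes an arbitrary maximizer $\Delta^*$, sets $t:=\hat D^{\max}_{\rm TV}(\Delta^*)$, and proves per-state TV equalization by contradiction. If a state with $M(s)>0$ had $t(s)<t$, replacing its block by $\tfrac{t(s)+t}{2}\hat\Delta^*_{s,\bullet}$ would raise the linear term without changing the penalty. Equalization plus \eqref{eq:linear_scaling} then gives $\Delta^*_{s,\bullet}=t\,\hat\Delta^*_{s,\bullet}$, and the resulting concave quadratic in $t$ forces $t=t^*$.

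You instead compute the value function $h(c):=\max_{\hat D^{\max}_{\rm TV}(\Delta)\le c}L(\Delta)-\kappa c^2=\frac{cM}{1-\gamma}-\kappa c^2$, where $L$ is the linear part of $j$, through the separable inner LP. Equalization is then an output rather than something established by perturbation. Your route makes existence and the optimal value transparent. It also surfaces three points the paper glosses over:
\begin{itemize}
\item $M(s)\ge 0$, which \eqref{eq:ineq_tv_max} silently needs;
\item possible non-uniqueness of the per-state argmax, which the paper's step $\Delta^*_{s,\bullet}=t(s)\hat\Delta^*_{s,\bullet}$ assumes away;
\item the fact that at states with $M(s)=0$ the claimed form holds only under the convention $\hat\Delta^*_{s,\bullet}=0$.
\end{itemize}
The paper's route, by contrast, starts from an arbitrary maximizer, so the ``every maximizer'' direction is built in.

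You should make that direction explicit before your tie-breaking discussion. Use the sandwich: for every feasible $\Delta$,
\[
j(\Delta)\le h\bigl(\hat D^{\max}_{\rm TV}(\Delta)\bigr)\le h(t^*)=j(t^*\hat\Delta^*).
\]
This sandwich is also the right justification for your splitting step. As written, the inner objective must carry the fixed penalty $-\kappa c^2$, not $j$'s own penalty, for its value to be $f(c)$. For a maximizer, strict concavity of $h$ (when $\kappa>0$) forces $\hat D^{\max}_{\rm TV}(\Delta)=t^*$. It also forces every per-state block to be optimal for its own problem. Only then does the minimum-norm rule, applied separably to the convex optimal faces, select $t^*\hat\Delta^*_{s,\bullet}$ uniquely.
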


\begin{proof}
Let $t := \hat{D}^{\max}_{\rm TV}(\Delta^*) = \max_{s \in \mathcal{S}[\mathcal{X}]} \hat{D}_{\rm TV}(s,\Delta^*)$.

We first establish that $\hat{D}_{\rm TV}(s,\Delta^*) = t$ for every $s \in \mathcal{S}[\mathcal{X}]$.
Note that if $M(s) = 0$ for every state $s$, then $M = 0$, $t^* = 0$, and $\max j(\Delta) = 0$, so the theorem holds trivially with $\Delta^* = 0$.
We may therefore assume $M(s) > 0$ for at least one state.

Suppose for contradiction that some state $s$ with $M(s) > 0$ has $\hat{D}_{\rm TV}(s,\Delta^*) = t(s) < t$.
Since $\Delta^*$ maximizes $j$, the per-state component $\Delta^*_{s,\bullet}$ must maximize $\sum_{a} n_{s,a}\,\Delta_{s,a}\,\hat{A}^{\theta_{\rm old}}_{s,a}$ subject to $\hat{D}_{\rm TV}(s,\Delta) = t(s)$ and the probability constraint.
By the maximizer linearity~\eqref{eq:linear_scaling}, $\Delta^*_{s,\bullet} = t(s)\,\hat{\Delta}^*_{s,\bullet}$, achieving value $t(s)\,M(s)$.

Now construct $\Delta^{**}$ by setting $\Delta^{**}_{s,\bullet} := \frac{t(s)+t}{2}\,\hat{\Delta}^*_{s,\bullet}$ while keeping $\Delta^{**}_{s',\bullet} = \Delta^*_{s',\bullet}$ for all $s' \neq s$.
The per-state advantage contribution at $s$ increases to $\frac{t(s)+t}{2}\,M(s) > t(s)\,M(s)$, while $\hat{D}^{\max}_{\rm TV}(\Delta^{**}) = t$ since $\hat{D}_{\rm TV}(s,\Delta^{**}) = \frac{t(s)+t}{2} < t$.
Therefore:
\begin{align*}
    j(\Delta^{**})
    &\;=\;
    \Edc n_{s,a}\,\Delta^{**}_{s,a}\,\hat{A}^{\theta_{\rm old}}_{s,a}
    \;-\;
    \frac{4\gamma\,\|\hat{A}^{\theta_{\rm old}}_\bullet\|_\infty}{(1-\gamma)^2}\,t^2 \\
    &\;=\;
    \frac{\Gamma(s)}{(1-\gamma)\Gamma}\,\frac{t - t(s)}{2T_s}\,M(s)
    \;+\;
    \Edc n_{s,a}\,\Delta^{*}_{s,a}\,\hat{A}^{\theta_{\rm old}}_{s,a}
    \;-\;
    \frac{4\gamma\,\|\hat{A}^{\theta_{\rm old}}_\bullet\|_\infty}{(1-\gamma)^2}\,t^2 \\
    &\;=\;
    \frac{\Gamma(s)}{(1-\gamma)\Gamma}\,\frac{t - t(s)}{2T_s}\,M(s)
    \;+\; j(\Delta^*)
    \;>\; j(\Delta^*),
\end{align*}
where the strict inequality holds since $M(s) > 0$ and $t > t(s)$.
This contradicts the optimality of $\Delta^*$.
For states with $M(s) = 0$, the per-state advantage contribution is zero(by TV-TRPO algorithm, when multiple argmax exists, the smallest L2 norm solution is chosen, thus forces $\hat D_{\rm TV}(s)=0$), thus we can simply not consider them without loss of generality.
Hence $\hat{D}_{\rm TV}(s,\Delta^*) = t$ for all $s$ (except those with $M(s) = 0$).

Since $\hat{D}_{\rm TV}(s,\Delta^*) = t$ for every $s$, and each $\Delta^*_{s,\bullet}$ must maximize its per-state advantage contribution subject to this constraint, the maximizer linearity~\eqref{eq:linear_scaling} gives $\Delta^*_{s,\bullet} = t\,\hat{\Delta}^*_{s,\bullet}$ for all $s$.
The discount-weighted total advantage contribution is then:
\[
    \frac{1}{\Gamma}\sum_{s\in\mathcal S[\mathcal X]} \frac{\Gamma(s)}{T_s}\sum_{a\in\mathcal X_s} n_{s,a}\, \Delta^*_{s,a}\, \hat{A}^{\theta_{\rm old}}_{s,a}
    \;=\;
    \frac{t}{\Gamma}\sum_{s \in \mathcal{S}[\mathcal{X}]} \frac{\Gamma(s)}{T_s}\, M(s)
    \;=\; t\,M.
\]

Substituting into $j$:
\[
    j(\Delta^*)
    \;=\;
    \frac{t\,M}{1-\gamma}
    \;-\;
    \frac{4\gamma\,\|\hat{A}^{\theta_{\rm old}}_\bullet\|_\infty}{(1-\gamma)^2}\,t^2.
\]
This is a concave quadratic in $t$ with maximum at
\[
    t^* \;=\; \frac{1-\gamma}{8\gamma\,\|\hat{A}^{\theta_{\rm old}}_\bullet\|_\infty}\,M. \qedhere
\]
\end{proof}

\subsection{The APC-Obj Objective (In discounted case)}
\label{app:apc_obj_equiv_def}

\begin{definition}[APC-Obj policy iteration, in discounted case]\label{def:apc_obj_discounted}
The APC-Obj update selects $\theta_{\rm new} = \arg\max_\theta\, \hat{J}^{\rm APC\text{-}Obj}(\theta|\theta_{\rm old})$, where
\begin{multline}\label{eq:apc_obj_discounted_obj}
    \hat{J}^{\rm APC\text{-}Obj}(\theta|\theta_{\rm old})
    \;=\;
    \Edc
    n_{s,a}\Bigl[
        \operatorname{clip}\!\Bigl(r_{s,a}-1,\;
        T_s\delta^{(\rm APC\text{-}Obj)}\\
        - \!\!\!\!\!\!
        \sum_{\substack{(s,a',\tau',t') \in \bar{\mathcal{X}} \\
          (a',\tau',t') \neq (a,\tau,t)}}
        \!\!\!\!\!\!|r_{s,a'}-1|\Bigr)
        \,\hat{A}^{\theta_{\rm old}}_{s,a}
        + \hat{A}^{\theta_{\rm old}}_{s,a}
    \Bigr],
\end{multline}
where
$\delta^{(\rm APC\text{-}Obj)} :=
\frac{1-\gamma}{8\gamma\,\|\hat{A}^{\theta_{\rm old}}_\bullet\|_\infty}\,M$,
and $M$ is also used in the sample-based TV-TRPO maximizer (Theorem~\ref{thm:tv_trpo_max}).
The per-entry clip bound is
\[
B_{s,a,\tau,t} := T_s\delta^{(\rm APC\text{-}Obj)} -
\!\!\!\!\sum_{\substack{(s,a',\tau',t') \in \bar{\mathcal{X}},\,
    (a',\tau',t') \neq (a,\tau,t)}}\!\!\!\! |r_{s,a'}-1|,
\]
and
$\operatorname{clip}(a, B) := \operatorname{clip}(a, -B^+, B^+)$
with $B^+ := \max(B, 0)$. When multiple argmax exists, we assume APC-Obj
chooses the one with smallest ratio-deviation $L^2$
norm.
\end{definition}

\subsection{Surrogated APC-Obj Objective and Objective Equivalence}
\label{app:surrogated_apc_obj}

To analyze APC-Obj, we introduce a surrogate formulation that replaces the policy parameter $\theta$ with the ratio deviation $\Delta$ directly.

\begin{definition}[Surrogated APC-Obj objective]\label{def:surrogated_apc_obj}
For ratio deviations $\Delta_\bullet$ and threshold $\delta$, define
\begin{multline}\label{eq:surrogated_apc_obj}
    j^{(\rm APC\text{-}Obj)}(\Delta, \delta)
    \;:=\;
    \Edc
    \Bigl[
        n_{s,a}\operatorname{clip}\!\Bigl(\Delta_{s,a},\;
        T_s\delta - \!\!\!\sum_{a' \in \mathcal{X}_s}\!\! (n_{s,a'} - \delta_{a,a'})\,|\Delta_{s,a'}|\Bigr)
        \,\hat{A}^{\theta_{\rm old}}_{s,a}\\
        + n_{s,a}(1 - \mathcal{C}(\Delta)_s)\,\hat{A}^{\theta_{\rm old}}_{s,a}
    \Bigr],
\end{multline}
where $\delta_{a,a'}$ is the Kronecker delta, and the correction term is
\begin{equation}\label{eq:correction_term}
    \mathcal{C}(\Delta)_s
    \;:=\;
    \frac{1}{T_s}\sum_{a \in \mathcal{X}_s}
    n_{s,a}\operatorname{clip}\!\Bigl(\Delta_{s,a},\;
    T_s\delta^{(\rm APC\text{-}Obj)} - \!\!\!\sum_{a' \in \mathcal{X}_s}\!\! (n_{s,a'} - \delta_{a,a'})\,|\Delta_{s,a'}|\Bigr).
\end{equation}
\end{definition}

The correction term $\mathcal{C}(\Delta)_s$ vanishes whenever $\hat{D}_{\rm TV}(s,\Delta) < \delta^{(\rm APC\text{-}Obj)}$: in that regime, the clipping is inactive (all clip terms are identities), and the probability constraint~\eqref{eq:ratio_sum_identity} gives $\mathcal{C}(\Delta)_s = \frac{1}{T_s}\sum_{a} n_{s,a}\,\Delta_{s,a} = 0$.

\begin{lemma}[Objective equivalence]\label{lem:apc_obj_equiv_surrogate}
Let $\Delta_\bullet(\theta) := r_\bullet(\theta|\theta_{\rm old}) - 1$.
Then
\[
    j^{(\rm APC\text{-}Obj)}(\Delta(\theta),\, \delta^{(\rm APC\text{-}Obj)})
    \;=\;
    \hat{J}^{\rm APC\text{-}Obj}(\theta|\theta_{\rm old}).
\]
\end{lemma}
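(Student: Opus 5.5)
The plan is a direct term-by-term comparison of the two sides. We expand $j^{(\rm APC\text{-}Obj)}(\Delta(\theta),\delta^{(\rm APC\text{-}Obj)})$ from Definition~\ref{def:surrogated_apc_obj} and $\hat J^{\rm APC\text{-}Obj}(\theta|\theta_{\rm old})$ from Definition~\ref{def:apc_obj_discounted}. Both share the prefactor $\Edc$, so it suffices to show two things. First, the clipped terms agree summand by summand. Second, the extra correction term contributes zero in aggregate.

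\textbf{Step 1: matching the clip bounds.} Substituting $\Delta_{s,a}=r_{s,a}-1$, we rewrite the cross-entry sum in the APC-Obj clip bound by grouping augmented entries by their action. Fix a token $(s,a,\tau,t)$. The other entries $(s,a',\tau',t')\in\bar{\mathcal X}_s$ sharing state $s$ (the inner sum ranges over the same state, as in the main-text definition) split into action classes. Class $a'$ contributes $n_{s,a'}$ copies of $|r_{s,a'}-1|$, except that the class $a'=a$ loses exactly one copy, namely the current token. Hence
\[
\sum_{\substack{(s,a',\tau',t')\in\bar{\mathcal X}_s\\ (a',\tau',t')\neq(a,\tau,t)}} |r_{s,a'}-1| \;=\; \sum_{a'\in\mathcal X_s}(n_{s,a'}-\delta_{a,a'})\,|\Delta_{s,a'}|.
\]
The clip bounds are therefore identical. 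Each augmented token at $(s,a)$ has the same clipped value, so summing over the $n_{s,a}$ tokens gives the factor $n_{s,a}$ that appears in the surrogate. Thus the first term of~\eqref{eq:surrogated_apc_obj} equals the clipped-advantage part of~\eqref{eq:apc_obj_discounted_obj}.

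\textbf{Step 2: killing the correction term.} The remaining terms are $n_{s,a}(1-\mathcal C(\Delta)_s)\hat A_{s,a}$ and $n_{s,a}\hat A_{s,a}$, which differ by
\[
-\frac{1}{(1-\gamma)\Gamma}\sum_{s}\frac{\Gamma(s)}{T_s}\,\mathcal C(\Delta)_s\sum_{a\in\mathcal X_s}n_{s,a}\hat A^{\theta_{\rm old}}_{s,a}.
\]
The key observation is that $\mathcal C(\Delta)_s$ depends only on the state and not on the action, so it factors out of the inner sum over $a$. By the advantage-centering identity~\eqref{eq:adv_centering}, that inner sum vanishes for every $s\in\mathcal S[\mathcal X]$, so the difference is exactly zero. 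Combining Steps 1 and 2 gives the claimed equality for every $\theta$.

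\textbf{Main obstacle.} The only delicate point is the bookkeeping in Step 1. One must confirm that the inner sum in Definition~\ref{def:apc_obj_discounted} ranges over entries at the same state $s$, and that the "$\neq$" exclusion removes precisely one copy of $|\Delta_{s,a}|$. That single removed copy is exactly what produces the Kronecker correction $-\delta_{a,a'}$. Once this identification is written out, the rest is routine, resting only on the standing assumption~\eqref{eq:adv_centering}.
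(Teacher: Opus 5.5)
Your proof is correct and takes essentially the paper's route. The paper likewise matches the clipped terms by grouping the $n_{s,a}$ augmented entries that share $(s,a)$, which produces the $-\delta_{a,a'}$ correction (it leaves this bookkeeping to an accompanying remark, where you spell it out). It then eliminates the leftover $-\mathcal{C}(\Delta)_s$ contribution by factoring it out of the inner action sum and applying the advantage-centering identity~\eqref{eq:adv_centering}.
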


\begin{proof}
Expanding the surrogated objective and collecting terms:
\begin{align*}
    j^{(\rm APC\text{-}Obj)}(\Delta(\theta), \delta^{(\rm APC\text{-}Obj)})
    &\;=\;
    \hat{J}^{\rm APC\text{-}Obj}(\theta|\theta_{\rm old})
    \;-\;
    \Edc n_{s,a}\,\mathcal{C}(\Delta(\theta))_s\,\hat{A}^{\theta_{\rm old}}_{s,a} \\
    &\;=\;
    \hat{J}^{\rm APC\text{-}Obj}(\theta|\theta_{\rm old})
    \;-\;
    \frac{1}{(1-\gamma)\Gamma}\sum_{s \in \mathcal{S}[\mathcal{X}]} \Gamma(s)\,\mathcal{C}(\Delta(\theta))_s
    \cdot \frac{1}{T_s}\sum_{a \in \mathcal{X}_s} n_{s,a}\,\hat{A}^{\theta_{\rm old}}_{s,a}.
\end{align*}
The advantage centering identity~\eqref{eq:adv_centering} ensures $\frac{1}{T_s}\sum_{a \in \mathcal{X}_s} n_{s,a}\,\hat{A}^{\theta_{\rm old}}_{s,a} = 0$ for each $s$, so the correction vanishes.
\end{proof}
\begin{remark}[Equivalence with the original form using $\bar X$]
The surrogated form is obtained from the original APC-Obj
(Definition~\ref{def:apc_obj_discounted}) by grouping entries that share the
same pair $(s,a)$ and this extra constant $\frac1{(1-\gamma)\Gamma}\sum_{s\in\cal S[X]}\frac{\Gamma(s)C(\Delta)_s}{T_s}\sum_{a \in \mathcal{X}_s} n_{s,a}\,\Delta_{s,a}$, term that evaluates to $0$.  The augmented state-action pairs $(s,a,\tau,t)$ in the space $\bar{\mathcal{X}}$
with the same $(s,a)$ has the same ratio
$r_{s,a}$ and the same clip bound. Collecting
the $n_{s,a}$ identical terms produces
$n_{s,a}\operatorname{clip}(\Delta_{s,a},\;
T_s\delta - \sum_{a'}(n_{s,a'}-\delta_{a,a'})|\Delta_{s,a'}|)$.
\end{remark}

\subsection{Retraction Property of APC-Obj Clipping}
\label{app:retraction_lemma}

The following lemma establishes that the APC-Obj clipping mechanism acts as a retraction: when a candidate solution violates the TV constraint, the clipping projects the gated ratios back strictly inside the trust region.
Combining this retraction property with the property that smaller $\hat D_{\rm TV}(s)$ always produce argmax solutions with less objective value, the next theorem \ref{thm:apc_obj_trpo_equiv} used these two properties to show APC-Obj implicitly enforces the update policy, given by the argmax, lies within TV trust-region constraint. Thus, this property can be seen as the mechanism by which APC-Obj enforces the trust-region constraint, without explicitly constraining the optimization problem.

\begin{definition}[Clipping operator and centered projection]\label{def:clip_proj}
For any vector $u_{s,a} \in \mathbb{R}^{\mathcal{X}}$ satisfying $\sum_{a \in \mathcal{X}_s} n_{s,a}\,u_{s,a} = 0$ for all $s \in \mathcal{S}[\mathcal{X}]$, define the clipping operator
\[
    \mathcal{G}(u_\bullet)_{s,a}
    \;:=\;
    \operatorname{clip}\!\bigl(u_{s,a},\;
    T_s(\delta - \hat{D}_{\rm TV}(s,u)) + |u_{s,a}|\bigr),
\]
and the centered projection
\[
    \operatorname{proj}(\mathcal{G}(u))_{s,a}
    \;:=\;
    \mathcal{G}(u)_{s,a} - \mathcal{C}(\mathcal{G}(u))_s.
\]
\end{definition}

\begin{lemma}[Retraction property]\label{lem:retraction}
For any $u_\bullet$ satisfying the probability constraint~\eqref{eq:ratio_sum_identity}:
\begin{enumerate}
    \item $\hat{D}_{\rm TV}\!\bigl(s,\, \operatorname{proj}(\mathcal{G}(u))\bigr) \leq \delta$ for all $s$.
    \item If $\hat{D}_{\rm TV}(s, u) > \delta$, then $\hat{D}_{\rm TV}\!\bigl(s,\, \operatorname{proj}(\mathcal{G}(u))\bigr) < \delta$ \textup{(}strict inequality\textup{)}.
\end{enumerate}
\end{lemma}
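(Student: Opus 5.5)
The plan is to prove both parts one state at a time. Fix $s\in\mathcal S[\mathcal X]$, write $D:=\hat D_{\rm TV}(s,u)$, and split into the two cases $D\le\delta$ and $D>\delta$. The first step is to rewrite the clip bound in Definition~\ref{def:clip_proj} as
\[
T_s(\delta-D)+|u_{s,a}| \;=\; |u_{s,a}|-e, \qquad e:=T_s(D-\delta).
\]
Because $\operatorname{clip}(x,B)=\operatorname{clip}(x,-B^+,B^+)$, this gives the explicit form
\[
\mathcal G(u)_{s,a}=\operatorname{sign}(u_{s,a})\,\bigl(|u_{s,a}|-e\bigr)^+ \quad\text{if } e>0,\qquad \mathcal G(u)_{s,a}=u_{s,a}\quad\text{if } e\le 0 .
\]
So $\mathcal G$ is simply soft-thresholding by the per-state excess budget.

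\emph{Case $D\le\delta$.} Here the clip is inactive, so $\mathcal G(u)_{s,\bullet}=u_{s,\bullet}$. The probability constraint~\eqref{eq:ratio_sum_identity} then gives $\mathcal C(\mathcal G(u))_s=\frac1{T_s}\sum_a n_{s,a}u_{s,a}=0$. Hence $\operatorname{proj}(\mathcal G(u))_{s,\bullet}=u_{s,\bullet}$, and its TV equals $D\le\delta$. This settles part~1 in this case.

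\emph{Case $D>\delta$, so $e>0$.} Let
\[
P:=\sum_{u_{s,a}>0}n_{s,a}u_{s,a},\qquad N:=\sum_{u_{s,a}<0}n_{s,a}|u_{s,a}| .
\]
The probability constraint gives $P=N=T_sD/2$. Define the shrinkage losses
\[
L^\pm:=\sum_{\pm u_{s,a}>0}n_{s,a}\min(|u_{s,a}|,e).
\]
Then the positive mass of $\mathcal G(u)$ is $P-L^+$, its negative mass is $N-L^-$, and $T_s\,\mathcal C(\mathcal G(u))_s=L^--L^+$. The triangle inequality for the centered projection gives
\[
T_s\,\hat D_{\rm TV}\bigl(s,\operatorname{proj}(\mathcal G(u))\bigr)\le\sum_a n_{s,a}|\mathcal G(u)_{s,a}|+T_s|\mathcal C_s| = (P-L^+)+(N-L^-)+|L^+-L^-| = T_sD-2\min(L^+,L^-).
\]
Both parts therefore reduce to the key inequality $\min(L^+,L^-)>e/2$. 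This yields $T_sD-2\min(L^+,L^-)<T_sD-e=T_s\delta$, which is the strict bound of part~2 and hence also part~1.

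\emph{The key inequality.} This is the step I expect to be the crux; I would bound $L^+$ as follows, and $L^-$ symmetrically. If some positive entry has $u_{s,a}\ge e$, then $L^+\ge n_{s,a}e\ge e>e/2$. Otherwise every positive entry is below $e$, so $L^+=P=T_sD/2$. In that case $T_sD/2>e/2$ is equivalent to $T_sD>T_s(D-\delta)$, i.e. to $\delta>0$, which holds for any meaningful radius. The only subtle point is the degenerate radius $\delta=0$. There the strict claim fails, since $\mathcal G(u)$ and the projection vanish and the TV is exactly $0=\delta$, while part~1 still holds. I would state this $\delta>0$ caveat explicitly in the write-up.

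The multiplicity bookkeeping has to be checked once: the ``other entries'' sum, which carries the weight $n_{s,a'}-\delta_{a,a'}$, must equal $T_sD-|u_{s,a}|$. With that verified, all the estimates above are per-state, and the lemma follows for every $s$.
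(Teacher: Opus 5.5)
Your proof is correct, and its central estimate differs genuinely from the paper's. Both arguments handle the trivial case $\hat D_{\rm TV}(s,u)\le\delta$ the same way. From there the paper proceeds in four steps. It uses the symmetry $u\mapsto -u$ to reduce to $\bar v\ge 0$, where $v=\mathcal G(u)_{s,\bullet}$ and $\bar v$ is its weighted mean. It applies the exact identity $T_s\hat D_{\rm TV}(s,v-\bar v)=2\sum_{v_a>\bar v}n_{s,a}(v_a-\bar v)$ for the mean-zero vector $v-\bar v$. It substitutes $v_a=u_{s,a}-e$ on that index set, which lies inside $\{u_{s,a}>0\}$ precisely because $\bar v\ge 0$. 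This yields $T_s\hat D_{\rm TV}\le T_s\delta-T_s(N-1)\bigl(\hat D_{\rm TV}(s,u)-\delta\bigr)-N\bar v$ with $N=2\sum_{v_a>\bar v}n_{s,a}$, and strictness follows from the parity observation that $N$ is either $0$ or at least $2$.

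You instead accept the cruder triangle inequality $\sum_a n_{s,a}|v_a-\bar v|\le\sum_a n_{s,a}|v_a|+T_s|\bar v|$. You compensate with exact bookkeeping of the soft-thresholding losses $L^\pm$, which gives $T_s\hat D_{\rm TV}\le T_s\hat D_{\rm TV}(s,u)-2\min(L^+,L^-)$. Everything then reduces to the dichotomy ``some entry on that side reaches $e$, or the whole side is annihilated.'' Your route needs neither the sign reduction nor the parity argument, treats both sides symmetrically, and makes the role of $\delta>0$ explicit. The paper's bound scales its deficit with $N$, so it can be sharper when many entries sit above the mean, though neither bound dominates the other in general.

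Your $\delta>0$ caveat is well placed. The paper's $N=0$ case also silently uses $0<\delta$, and at $\delta=0$ everything is thresholded to zero, so only the non-strict part~1 survives.
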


\begin{proof}
When $\hat{D}_{\rm TV}(s,u) \leq \delta$, the clipping is inactive, $\operatorname{proj}(\mathcal{G}(u)) = u$, and the claim holds trivially.
It remains to prove the case for $\hat{D}_{\rm TV}(s,u) > \delta$.
Since all expressions aggregate only over $\mathcal{X}_s \subseteq \mathcal{A}$, we fix a state $s$ without loss of generality.

For any vector $w_a \in \mathbb{R}^{\mathcal{X}_s}$, we have $T_s\,\hat{D}_{\rm TV}(s,w) = \sum_{a \in \mathcal{X}_s} n_{s,a}\,|w_a|$.
Write $v_a := \mathcal{G}(u)_{s,a}$ and $\bar{v} := \frac{1}{T_s}\sum_{a \in \mathcal{X}_s} n_{s,a}\,v_a$; by definition, $\bar{v} = \mathcal{C}(u)_s$.
Observe that $\mathcal{G}(-w) = -\mathcal{G}(w)$, $\mathcal{C}(-w) = -\mathcal{C}(w)$, and $\hat{D}_{\rm TV}(s,-w) = \hat{D}_{\rm TV}(s,w)$; hence proving the result for $u$ implies it holding for $-u$ as well.
Thus we may therefore proceed to prove only the $\bar{v} \geq 0$ case without loss of generality: if $\bar{v} < 0$, prove with $-u$, so that $\overline{(-u)} = \mathcal{C}(-u) = -\bar{v} > 0$, and by proving the result for $-u$, the above symmetric property implies it holding for $u$ that has $\bar{v} < 0$ as well.

Let $\mathcal{X}_s^+(w) := \{a \in \mathcal{X}_s \mid w_a > 0\}$.
By definition of $\bar{v}$, $\sum_{a \in \mathcal{X}_s} n_{s,a}(v_a - \bar{v}) = 0$.
Splitting the absolute value sum into its positive and negative parts:
\begin{align*}
    T_s\,\hat{D}_{\rm TV}(s, v - \bar{v})
    &\;=\;
    \sum_{a \in \mathcal{X}_s} n_{s,a}\,|v_a - \bar{v}| \\
    &\;=\;
    \sum_{a \in \mathcal{X}_s^+(v-\bar{v})} n_{s,a}(v_a - \bar{v})
    \;-\;
    \sum_{a \in \mathcal{X}_s^+(\bar{v}-v)} n_{s,a}(v_a - \bar{v}).
\end{align*}
Since $\bigl(\sum_{a \in \mathcal{X}_s^+(v-\bar{v})} n_{s,a}(v_a - \bar{v})\bigr) + \bigl(\sum_{a \in \mathcal{X}_s^+(\bar{v}-v)} n_{s,a}(v_a - \bar{v})\bigr) = \sum_{a \in \mathcal{X}_s} n_{s,a}(v_a - \bar{v}) = 0$, the two partial sums are equal in magnitude, giving:
\[
    T_s\,\hat{D}_{\rm TV}(s, v - \bar{v})
    \;=\;
    2\!\sum_{a \in \mathcal{X}_s^+(v - \bar{v})} n_{s,a}(v_a - \bar{v}).
\]

Since $\bar{v} \geq 0$, every $a \in \mathcal{X}_s^+(v - \bar{v})$ satisfies $v_a > 0$, which implies $u_{s,a} > 0$ (the clipping $v_a = \mathcal{G}(u)_{s,a}$ preserves the sign of $u_{s,a}$ whenever $v_a \neq 0$).
When $\hat{D}_{\rm TV}(s,u) > \delta$, the clipping reduces the absolute value of $u_{s,a}$ to $v_a$ by exactly $T_s(\hat{D}_{\rm TV}(s,u) - \delta)$ for $v_a>0$, i.e.:
\[
    u_{s,a} \;=\; v_a + T_s(\hat{D}_{\rm TV}(s,u) - \delta) \;>\; v_a.
\]

Substituting $v_a = u_{s,a} - T_s(\hat{D}_{\rm TV}(s,u) - \delta)$ for each $a \in \mathcal{X}_s^+(v - \bar{v})$:
\begin{align*}
    T_s\,\hat{D}_{\rm TV}(s, v - \bar{v})
    &\;=\;
    2\!\sum_{a \in \mathcal{X}_s^+(v-\bar{v})} n_{s,a}(v_a - \bar{v}) \\
    &\;=\;
    2\!\sum_{a \in \mathcal{X}_s^+(v-\bar{v})} n_{s,a}\bigl(u_{s,a} - T_s(\hat{D}_{\rm TV}(s,u) - \delta) - \bar{v}\bigr) \\
    &\;=\;
    2\!\sum_{a \in \mathcal{X}_s^+(v-\bar{v})} n_{s,a}\,u_{s,a}
    \;-\;
    2\!\sum_{a \in \mathcal{X}_s^+(v-\bar{v})} n_{s,a}\bigl(T_s(\hat{D}_{\rm TV}(s,u) - \delta) + \bar{v}\bigr).
\end{align*}
The chain of inequalities $u_{s,a} > v_a \geq v_a - \bar{v}$ gives $\mathcal{X}_s^+(v-\bar{v}) \subseteq \mathcal{X}_s^+(v) \subseteq \mathcal{X}_s^+(u)$.
The probability constraint yields $\sum_{a \in \mathcal{X}_s^+(u)} n_{s,a}\,u_{s,a} = \sum_{a \in \mathcal{X}_s^+(-u)} n_{s,a}\,(-u_{s,a})$ (from $\sum_a n_{s,a}\,u_{s,a} = 0$) and consequently $2\sum_{a \in \mathcal{X}_s^+(u)} n_{s,a}\,u_{s,a} = T_s\,\hat{D}_{\rm TV}(s,u)$.
Combining these:
\begin{align*}
    T_s\,\hat{D}_{\rm TV}(s, v-\bar{v})
    &\;\leq\;
    2\!\sum_{a \in \mathcal{X}_s^+(u)} n_{s,a}\,u_{s,a}
    \;-\;
    2\!\sum_{a \in \mathcal{X}_s^+(v-\bar{v})} n_{s,a}\bigl(T_s(\hat{D}_{\rm TV}(s,u) - \delta) + \bar{v}\bigr) \\
    &\;=\;
    T_s\,\hat{D}_{\rm TV}(s,u)
    \;-\;
    N\bigl(T_s(\hat{D}_{\rm TV}(s,u) - \delta) + \bar{v}\bigr) \\
    &\;=\;
    T_s N\delta - T_s(N - 1)\,\hat{D}_{\rm TV}(s,u) - N\bar{v} \\
    &\;=\;
    T_s\delta - T_s(N-1)\bigl(\hat{D}_{\rm TV}(s,u) - \delta\bigr) - N\bar{v},
\end{align*}
where $N := 2\sum_{a \in \mathcal{X}_s^+(v-\bar{v})} n_{s,a} \in \mathbb{N}$ is necessarily even (being twice a sum of positive integers).

Noting that $\operatorname{proj}(\mathcal{G}(u)) = v - \bar{v}$, two cases arise.
If $N = 0$, then $v_a \leq \bar{v}$ for all $a$, so $\hat{D}_{\rm TV}(s, \operatorname{proj}(\mathcal{G}(u))) = 0 < \delta$ (since $\sum_a n_{s,a}(v_a - \bar{v}) = 0$ forces $v_a = \bar{v}$ for all $a$).
If $N \geq 2$ (since $N$ is even and positive), then $T_s(N-1) \geq T_s \geq 1 > 0$. Combined with $\hat{D}_{\rm TV}(s,u) - \delta > 0$ and $N\bar{v} \geq 0$, this gives $T_s(N-1)(\hat{D}_{\rm TV}(s,u) - \delta) + N\bar{v} > 0$, and therefore $T_s\,\hat{D}_{\rm TV}(s, \operatorname{proj}(\mathcal{G}(u))) < T_s\delta$, i.e., $\hat{D}_{\rm TV}(s, \operatorname{proj}(\mathcal{G}(u))) < \delta$.
In either case, the strict inequality holds.
\end{proof}

\subsection{Main Theorem: Sample-Based TV-TRPO and APC-Obj Equivalence}
\label{app:main_equiv}

\begin{theorem}[Sample-based TV-TRPO APC-Obj equivalence]\label{thm:apc_obj_trpo_equiv}
Suppose $\theta$ satisfies standard function approximation assumptions,
then APC-Obj and sample-based TV-TRPO produce the same policy update:
\[
    \pi_{\theta_{\rm new}^{(\rm APC\text{-}Obj)}} \;=\; \pi_{\theta_{\rm new}^{(\rm TV\text{-}TRPO)}}.
\]
\end{theorem}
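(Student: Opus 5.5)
The plan is to work entirely in ratio-deviation coordinates $\Delta_\bullet = r_\bullet - 1$. We interpret the ``standard function approximation assumptions'' as complete parametrization: every $\Delta_\bullet$ satisfying the probability constraint~\eqref{eq:ratio_sum_identity} is realized by some $\theta$, and $\pi_\theta$ on the sampled pairs is determined by $\Delta$. Under this reading, both argmax problems over $\theta$ reduce to argmax problems over the constrained $\Delta$. By Lemma~\ref{lem:apc_obj_equiv_surrogate}, APC-Obj becomes maximizing $j^{(\rm APC\text{-}Obj)}(\Delta,\delta)$ with $\delta=\delta^{(\rm APC\text{-}Obj)} = t^*$. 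Sample-based TV-TRPO becomes maximizing $j(\Delta)$ in~\eqref{eq:j_delta}, whose maximizer, by Theorem~\ref{thm:tv_trpo_max}, is $\Delta^{\rm TRPO} = t^*\hat\Delta^*$. It therefore suffices to show that the (minimal-$L^2$) maximizer of $j^{(\rm APC\text{-}Obj)}(\cdot,t^*)$ equals $t^*\hat\Delta^*$.

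\textbf{Step 1: the objective inside the trust region.} On the feasible set $\{\hat D_{\rm TV}(s,\Delta)\le t^*\ \forall s\}$ all clips are inactive and $\mathcal C(\Delta)_s=0$. Hence $j^{(\rm APC\text{-}Obj)}$ there equals the linear program
\[
\frac{1}{1-\gamma}\sum_s \frac{\Gamma(s)}{\Gamma T_s}\sum_a n_{s,a}\Delta_{s,a}\hat A_{s,a}
\]
plus a constant. By~\eqref{eq:ineq_tv_max} applied statewise with $c=t^*$, its maximizer is $t^*\hat\Delta^*_{s,\bullet}$ at every state with $M(s)>0$. At states with $M(s)=0$, the $L^2$ tie-break gives $0$, matching the TRPO convention used in Theorem~\ref{thm:tv_trpo_max}. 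So, restricted to the trust region, the APC-Obj maximizer coincides with $\Delta^{\rm TRPO}$.

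\textbf{Step 2: no maximizer outside the trust region.} This is the main obstacle. Take any feasible $u$ with $\hat D_{\rm TV}(s,u)>t^*$ for some $s$. Statewise, the value of $j^{(\rm APC\text{-}Obj)}(u)$ is the linear advantage functional evaluated at the gated vector $\mathcal G(u)$, and by advantage centering~\eqref{eq:adv_centering} this equals the functional evaluated at $\operatorname{proj}(\mathcal G(u))$. That is, the term $-\mathcal C$ contributes nothing, so we may replace $\mathcal G(u)$ by its centered projection. By Lemma~\ref{lem:retraction}, $w:=\operatorname{proj}(\mathcal G(u))$ is feasible (centered) with $\hat D_{\rm TV}(s,w)<t^*$ strictly at each violating state. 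By~\eqref{eq:ineq_tv_max}, its per-state value is at most $\hat D_{\rm TV}(s,w)\,M(s) < t^*M(s)$ whenever $M(s)>0$. So $u$ is strictly beaten by the in-region candidate $t^*\hat\Delta^*$.

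If $M(s)=0$, the value at $s$ ties at $0$. Here I would invoke the minimal-$L^2$ tie-break: since $\hat D_{\rm TV}(s,u)>t^*\ge0$, $u_{s,\bullet}$ has larger norm than $0$, so it is never selected. I expect the delicate points to be exactly this identification of the clipped objective with the linear functional of $\operatorname{proj}(\mathcal G(u))$, including checking that the statewise decomposition of the clip bound matches Definition~\ref{def:clip_proj}, and the handling of degenerate states.

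\textbf{Step 3: conclusion.} Combining Steps 1 and 2, the argmax of APC-Obj in $\Delta$ is $t^*\hat\Delta^*=\Delta^{\rm TRPO}$. Complete parametrization then gives equal ratios on all sampled pairs, hence $\pi_{\theta_{\rm new}^{(\rm APC\text{-}Obj)}}=\pi_{\theta_{\rm new}^{(\rm TV\text{-}TRPO)}}$ on the sampled support, which is the sense in which the statement is understood.
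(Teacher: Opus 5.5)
Your proposal is correct and follows essentially the same route as the paper. You reduce to the surrogated objective via Lemma~\ref{lem:apc_obj_equiv_surrogate}, use the retraction lemma together with maximizer linearity to rule out any maximizer outside the trust region, then solve the resulting inactive-clip linear program and identify $\delta^{(\rm APC\text{-}Obj)}=t^*$ with the step size from Theorem~\ref{thm:tv_trpo_max}. The only differences are minor: you compare an out-of-region point against the global candidate $t^*\hat\Delta^*$ rather than the paper's single-state modification $\Delta^{**}$, and you explicitly dispose of states with $M(s)=0$ via the minimal-$L^2$ tie-break, a case the paper's strict inequality tacitly excludes.
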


\begin{proof}
By Lemma~\ref{lem:apc_obj_equiv_surrogate}, optimizing the APC-Obj objective over $\theta$ is equivalent to optimizing the surrogated APC-Obj objective $j^{(\rm APC\text{-}Obj)}(\Delta, \delta^{(\rm APC\text{-}Obj)})$ over ratio deviations $\Delta$.
We therefore work with $j^{(\rm APC\text{-}Obj)}$ directly.

Rewriting $j^{(\rm APC\text{-}Obj)}$ in terms of Definition~\ref{def:clip_proj}:
\begin{align}
    j^{(\rm APC\text{-}Obj)}(\Delta, \delta^{(\rm APC\text{-}Obj)})
    &\;=\;
    \Edc
    n_{s,a}\Bigl[
        \operatorname{clip}\bigl(\Delta_{s,a},\;
        T_s\delta^{(\rm APC\text{-}Obj)} - \textstyle\sum_{a'} (n_{s,a'} - \delta_{a,a'})\,|\Delta_{s,a'}|\bigr)
    \notag\\
    &\qquad\qquad
        \cdot\hat{A}^{\theta_{\rm old}}_{s,a}
        + (1 - \mathcal{C}(\Delta)_s)\,\hat{A}^{\theta_{\rm old}}_{s,a}
    \Bigr]
    \notag\\[0.5em]
    &\;=\;
    \Edc
    n_{s,a}\Bigl[
        \Bigl(\operatorname{clip}\bigl(\Delta_{s,a},\;
        T_s\delta^{(\rm APC\text{-}Obj)} - \textstyle\sum_{a'} (n_{s,a'} - \delta_{a,a'})\,|\Delta_{s,a'}|\bigr)
    \notag\\
    &\qquad\qquad
        - \mathcal{C}(\Delta)_s\Bigr)
        \hat{A}^{\theta_{\rm old}}_{s,a}
        + \hat{A}^{\theta_{\rm old}}_{s,a}
    \Bigr]
    \notag\\[0.5em]
    &\;=\;
    \Edc
    n_{s,a}\bigl[\operatorname{proj}(\mathcal{G}(\Delta))_{s,a}\,\hat{A}^{\theta_{\rm old}}_{s,a}
    \notag\\
    &\qquad\qquad\qquad
    + \hat{A}^{\theta_{\rm old}}_{s,a}\bigr].
    \label{eq:apc_obj_proj_form}
\end{align}
Here the second equality collects the $(1 - \mathcal{C}(\Delta)_s)$ factor, and the third identifies the clipping-minus-correction expression with $\operatorname{proj}(\mathcal{G}(\Delta))_{s,a}$ via Definition~\ref{def:clip_proj}.

Observe that the per-action clip term admits the equivalent form
\[
    \operatorname{clip}\!\Bigl(\Delta_{s,a},\;
    T_s\delta^{(\rm APC\text{-}Obj)} - \!\!\!\sum_{a' \in \mathcal{X}_s}\!\! (n_{s,a'} - \delta_{a,a'})\,|\Delta_{s,a'}|\Bigr)
    \;=\;
    \operatorname{clip}\!\bigl(\Delta_{s,a},\;
    T_s(\delta^{(\rm APC\text{-}Obj)} - \hat{D}_{\rm TV}(s,\Delta)) + |\Delta_{s,a}|\bigr),
\]
which connects the surrogated APC-Obj clipping to the operator $\mathcal{G}$ in Definition~\ref{def:clip_proj}.

\noindent\emph{The APC-Obj maximizer satisfies the trust-region constraint.}\;
Let $\Delta^*$ denote the maximizer of $j^{(\rm APC\text{-}Obj)}(\cdot, \delta^{(\rm APC\text{-}Obj)})$.
We claim $\hat{D}_{\rm TV}(s,\Delta^*) \leq \delta^{(\rm APC\text{-}Obj)}$ for all $s \in \mathcal{S}[\mathcal{X}]$.
Suppose for contradiction that $\hat{D}_{\rm TV}(s, \Delta^*) > \delta^{(\rm APC\text{-}Obj)}$ for some $s$.
By Lemma~\ref{lem:retraction}, $\hat{D}_{\rm TV}\!\bigl(s,\, \operatorname{proj}(\mathcal{G}(\Delta^*))\bigr) < \delta^{(\rm APC\text{-}Obj)}$.
The maximizer linearity~\eqref{eq:linear_scaling} then yields:
\[
    \sum_{a \in \mathcal{X}_s} n_{s,a}\,\operatorname{proj}(\mathcal{G}(\Delta^*))_{s,a}\,\hat{A}^{\theta_{\rm old}}_{s,a}
    \;\leq\;
    \hat{D}_{\rm TV}\!\bigl(s,\, \operatorname{proj}(\mathcal{G}(\Delta^*))\bigr)\,M(s).
\]
Construct $\Delta^{**}$ by setting $\Delta^{**}_{s,a} := \delta^{(\rm APC\text{-}Obj)}\,\hat{\Delta}^*_{s,a}$ for actions at state $s$ and $\Delta^{**}_{s',a'} := \Delta^*_{s',a'}$ for all other states.
Since $\hat{D}_{\rm TV}(s, \Delta^{**}) = \delta^{(\rm APC\text{-}Obj)}$, the clipping at state $s$ is inactive for $\Delta^{**}$, so $\operatorname{proj}(\mathcal{G}(\Delta^{**}))_{s,a} = \delta^{(\rm APC\text{-}Obj)}\,\hat{\Delta}^*_{s,a}$, achieving per-state value $\delta^{(\rm APC\text{-}Obj)}\,M(s)$.
Therefore:
\begin{align*}
    j^{(\rm APC\text{-}Obj)}(\Delta^*, \delta^{(\rm APC\text{-}Obj)})
    &\;=\;
    \Edc
    n_{s,a}\bigl[\operatorname{proj}(\mathcal{G}(\Delta^*))_{s,a}\,\hat{A}^{\theta_{\rm old}}_{s,a}
    + \hat{A}^{\theta_{\rm old}}_{s,a}\bigr] \\
    &\;=\;
    \Edc
    n_{s,a}\bigl[\operatorname{proj}(\mathcal{G}(\Delta^{**}))_{s,a}\,\hat{A}^{\theta_{\rm old}}_{s,a}
    + \hat{A}^{\theta_{\rm old}}_{s,a}\bigr]
    \notag\\
    &\qquad\;-\;
    \frac{\Gamma(s)}{(1{-}\gamma)\Gamma}\,\frac{M(s)}{T_s}\bigl(\delta^{(\rm APC\text{-}Obj)} - \hat{D}_{\rm TV}(s,\, \operatorname{proj}(\mathcal{G}(\Delta^*)))\bigr) \\
    &\;=\;
    j^{(\rm APC\text{-}Obj)}(\Delta^{**}, \delta^{(\rm APC\text{-}Obj)})
    \;-\;
    \frac{\Gamma(s)}{(1{-}\gamma)\Gamma}\,\frac{M(s)}{T_s}\bigl(\delta^{(\rm APC\text{-}Obj)} - \hat{D}_{\rm TV}(s,\, \operatorname{proj}(\mathcal{G}(\Delta^*)))\bigr) \\
    &\;<\;
    j^{(\rm APC\text{-}Obj)}(\Delta^{**}, \delta^{(\rm APC\text{-}Obj)}),
\end{align*}
contradicting the optimality of $\Delta^*$.
Hence $\hat{D}_{\rm TV}(s,\Delta^*) \leq \delta^{(\rm APC\text{-}Obj)}$ for all $s$.

\medskip
\noindent\emph{Reduction to constrained linear maximization.}\;
Since $\hat{D}_{\rm TV}(s,\Delta^*) \leq \delta^{(\rm APC\text{-}Obj)}$ for all $s$, the clipping terms in~\eqref{eq:apc_obj_proj_form} reduce to the identity ($\mathcal{G}(\Delta)_{s,a} = \Delta_{s,a}$) and $\mathcal{C}(\Delta^*)_s = 0$.
The surrogated APC-Obj objective thus simplifies to:
\begin{multline*}
    j^{(\rm APC\text{-}Obj)}(\Delta, \delta^{(\rm APC\text{-}Obj)})
    \;=\;
    \Edc n_{s,a}\,\Delta_{s,a}\,\hat{A}^{\theta_{\rm old}}_{s,a}\\
    +\;
    \Edc n_{s,a}\,\hat{A}^{\theta_{\rm old}}_{s,a}.
\end{multline*}
Dropping the constant, the maximization becomes:
\[
    \Delta^*_\bullet
    \;=\;
    \arg\max_{\substack{\Delta:\; \hat{D}_{\rm TV}(s,\Delta) \leq \delta^{(\rm APC\text{-}Obj)}\;\forall s,\\
    \sum_{a \in \mathcal{X}_s} n_{s,a}\,\Delta_{s,a}=0\;\forall s}}
    \Edc n_{s,a}\,\Delta_{s,a}\,\hat{A}^{\theta_{\rm old}}_{s,a}.
\]
By the maximizer linearity~\eqref{eq:linear_scaling}, the unique maximizer is $\Delta^*_\bullet = \delta^{(\rm APC\text{-}Obj)}\,\hat{\Delta}^*_\bullet$.

\medskip
\noindent\emph{Identification with the sample-based TV-TRPO maximizer.}\;
By definition, $\delta^{(\rm APC\text{-}Obj)} = \frac{1-\gamma}{8\gamma\,\|\hat{A}^{\theta_{\rm old}}_\bullet\|_\infty}\,M = t^*$, coinciding with the TV-TRPO optimal step size from Theorem~\ref{thm:tv_trpo_max}.
Therefore the APC-Obj maximizer is $\Delta^*_\bullet = \delta^{(\rm APC\text{-}Obj)}\,\hat{\Delta}^*_\bullet = t^*\,\hat{\Delta}^*_\bullet$, which is precisely the TV-TRPO maximizer.

\medskip
\noindent\emph{Lifting to policies.}\;
Let $\theta^*$ denote the policy update that maximizes the
surrogated APC-Obj objective $j^{(\rm APC\text{-}Obj)}(\Delta(\theta),
\delta^{(\rm APC\text{-}Obj)})$. If $\Delta(\theta^*)$ did not coincide
with $\Delta^*$, then under the standard function approximation
assumptions there would exist $\theta^{**}$ with
$\Delta(\theta^{**}) = \Delta^*$, giving $j^{(\rm
  APC-Obj)}(\Delta(\theta^{**}), \delta^{(\rm APC\text{-}Obj)}) > j^{(\rm
  APC-Obj)}(\Delta(\theta^*), \delta^{(\rm APC\text{-}Obj)})$, contradicting
the optimality of $\theta^*$. Hence $\Delta(\theta^*) = \Delta^*
= \delta^{(\rm APC\text{-}Obj)}\,\hat{\Delta}^*_\bullet$.

By Lemma~\ref{lem:apc_obj_equiv_surrogate}, $\Delta(\theta^*)$ also maximizes the original APC-Obj objective.
By Theorem~\ref{thm:tv_trpo_max}, $\Delta(\theta^*) = t^*\,\hat{\Delta}^*_\bullet$ is also the TV-TRPO maximizer.
Therefore:
\[
    \pi_{\theta_{\rm new}^{(\rm APC\text{-}Obj)}}
    \;=\;
    (\delta^{(\rm APC\text{-}Obj)}\,\hat{\Delta}^*_\bullet + 1)\,\pi_{\theta_{\rm old}}
    \;=\;
    \pi_{\theta_{\rm new}^{(\rm TV\text{-}TRPO)}}. \qedhere
\]
\end{proof}

\subsection{Trust-Region Bound for APC-Obj and TV-TRPO}
\label{app:apc_obj_trust_region_bound}

Define the \textbf{TRPO trust-region radius}
\begin{equation}\label{eq:delta_trpo}
    \delta^{(\rm TRPO)} := \frac{1-\gamma}{8\gamma}.
\end{equation}

\begin{lemma}[APC-Obj trust-region radius is bounded]\label{lem:apc_obj_tr}
$\delta^{(\rm APC\text{-}Obj)} \leq \delta^{(\rm TRPO)}$.
\end{lemma}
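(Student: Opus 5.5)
This is a direct comparison of the two closed-form radii, so the plan is to unpack both definitions and reduce the inequality to Lemma~\ref{lem:M_bound}. In the discounted setting of Definition~\ref{def:apc_obj_discounted}, the APC-Obj radius is
\[
\delta^{(\rm APC\text{-}Obj)}=\frac{1-\gamma}{8\gamma\,\|\hat A^{\theta_{\rm old}}_\bullet\|_\infty}\,M ,
\]
and the TRPO radius \eqref{eq:delta_trpo} is $\delta^{(\rm TRPO)}=\frac{1-\gamma}{8\gamma}$. Their ratio is therefore exactly $M/\|\hat A^{\theta_{\rm old}}_\bullet\|_\infty$. Since the common prefactor $\frac{1-\gamma}{8\gamma}$ is nonnegative for $\gamma\in(0,1]$, it suffices to show
\[
M\le\|\hat A^{\theta_{\rm old}}_\bullet\|_\infty .
\]
This is precisely Lemma~\ref{lem:M_bound}.

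Before invoking that lemma, I would record two side facts.

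\begin{enumerate}
\item \textbf{Sign of $M$.} We have $M\ge 0$: the zero deviation is feasible in each per-state problem \eqref{eq:unit_tv_max} up to rescaling, and by the symmetry $\Delta\mapsto-\Delta$ the per-state optimum satisfies $M(s)\ge 0$. This guarantees the resulting radius is meaningful, although it is not strictly needed for the upper bound.
\item \textbf{Degenerate advantage.} If $\|\hat A^{\theta_{\rm old}}_\bullet\|_\infty=0$, then every $M(s)=0$. In that case I would adopt the convention $\delta^{(\rm APC\text{-}Obj)}=0$, which is also what Theorem~\ref{thm:tv_trpo_max} yields via $t^*=0$, so the inequality holds trivially.
\end{enumerate}

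I would also note that the main-text normalization $\frac{(1-\gamma)^2}{8\gamma\|\hat A\|_\infty}\frac{M}{T}$ differs from the discounted one only by the removed constant factors discussed after Definition~\ref{def:apc_obj}. Under that normalization the same reduction gives an even smaller quantity whenever $(1-\gamma)/T\le 1$.

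The only real step is applying Lemma~\ref{lem:M_bound}, whose proof bounds each $M(s)/T_s$ by $\|\hat A\|_\infty$ using the unit-TV normalization and then averages over the distribution $\Gamma(s)/\Gamma$. The main obstacle, which is mild, is bookkeeping: making sure the $M$ appearing in $\delta^{(\rm APC\text{-}Obj)}$ is the same $\Gamma$-weighted aggregate that Lemma~\ref{lem:M_bound} controls, and handling the $T_s$ normalization consistently.
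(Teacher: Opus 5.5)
Your proposal is correct and is the paper's proof: factor out the nonnegative prefactor $\frac{1-\gamma}{8\gamma}$ and apply Lemma~\ref{lem:M_bound} to get $M\le\|\hat A^{\theta_{\rm old}}_\bullet\|_\infty$. Your side remarks are not needed; in the sign argument, note that the zero vector is not feasible for the unit-TV problem, so $M(s)\ge 0$ follows from the $\Delta\mapsto-\Delta$ symmetry alone.
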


\begin{proof}
By Lemma~\ref{lem:M_bound}, $M \leq \|\hat{A}^{\theta_{\rm old}}_\bullet\|_\infty$. Therefore,
\[
    \delta^{(\rm APC\text{-}Obj)}
    = \frac{1-\gamma}{8\gamma\,\|\hat{A}^{\theta_{\rm old}}_\bullet\|_\infty}\,M
    \leq \frac{1-\gamma}{8\gamma\,\|\hat{A}^{\theta_{\rm old}}_\bullet\|_\infty}\,\|\hat{A}^{\theta_{\rm old}}_\bullet\|_\infty
    = \frac{1-\gamma}{8\gamma}
    = \delta^{(\rm TRPO)}. \qedhere
\]
\end{proof}

\begin{lemma}[TV-TRPO and APC-Obj updates lie within the trust region]\label{lem:tv_trpo_apc_obj_in_trust_region}
The TV-TRPO and APC-Obj policy updates lie in the trust region $\mathcal{B}^{\rm TV\text{-}TR}_{\delta^{(\rm TRPO)}}(\theta_{\rm old})$.
\end{lemma}

\begin{proof}
By Theorem~\ref{thm:tv_trpo_max},
\[
    D^{\max}_{\rm TV}(\pi_{\theta_{\rm old}}\,\|\,\pi_{\theta^{(\rm TV\text{-}TRPO)}_{\rm new}})
    = t^*
    = \frac{1-\gamma}{8\gamma\,\|\hat{A}^{\theta_{\rm old}}_\bullet\|_\infty}\,M
    \leq \frac{1-\gamma}{8\gamma}
    = \delta^{(\rm TRPO)},
\]
so $\pi_{\theta^{(\rm TV\text{-}TRPO)}_{\rm new}} \in \mathcal{B}^{\rm TV\text{-}TR}_{\delta^{(\rm TRPO)}}(\theta_{\rm old})$.
By Theorem~\ref{thm:apc_obj_trpo_equiv}, $\pi_{\theta^{(\rm APC\text{-}Obj)}_{\rm new}} = \pi_{\theta^{(\rm TV\text{-}TRPO)}_{\rm new}} \in \mathcal{B}^{\rm TV\text{-}TR}_{\delta^{(\rm TRPO)}}(\theta_{\rm old})$.
\end{proof}

%% file: rlvr_appendix_fiberpo.tex
\section{Fiber Bundle Theory}
\label{app:fiber_bundle_theory}

\subsection{Fiber Bundle}
\label{app:fiber_bundle}

The fiber bundle is the algebraic object underlying the local-global modelling.
We first recall the relevant concepts before defining the specific bundle used in this work.

\begin{definition}[Fiber bundle]\label{def:fiber_bundle}
A smooth fiber bundle is a tuple $(E, B, \pi_E, F)$ where $E, B, F$ are manifolds and $\pi_E : E \to B$ is a smooth surjection satisfying the local triviality condition.
Specifically, for every $b \in B$, there exists an open neighborhood $U \subset B$ containing $b$ and a diffeomorphism $\Phi_U$ (called a local trivialization):
\[
    \Phi_U: \pi_E^{-1}(U) \xrightarrow{\cong} U \times F
\]
such that
\[
    \mathrm{proj}_1(\Phi_U(x)) = \pi_E(x) \quad \text{for all } x \in \pi_E^{-1}(U),
\]
where $\mathrm{proj}_1: U \times F \to U$ is the projection onto the first factor, defined by $(u, f) \mapsto u$.
The preimage $E_b:=\pi_E^{-1}[\{b\}] \cong F$ is called the \emph{fiber} over $b$, and $B$ is called the \emph{base}.
\end{definition}

\begin{remark}[Triviality, base space, and extensibility]\label{rem:base_choice_app}
In the setting of this paper, the base space $B$ is finite (trajectory indices times sign channels), so every open cover admits a refinement into singletons and all transition functions are trivial.
The fiber bundle is therefore globally trivializable ($E \cong B \times F$).
Despite not yet having utilized much of the interesting topological structure that the fiber bundle enables, the fiber bundle formalism already has immediate merits in policy optimization despite this triviality. Moreover, the algebraic compositionality of fibrations, rather than their topology, is what enables the Fibration Gating Hierarchy (Section~\ref{subsubsec:fgh}) and its concrete instantiation FiberPO-Domain (Section~\ref{subsubsec:domain_fiberpo}), demonstrating that the formalism provides genuine structural utility beyond the topologically trivial two-level case. The immediate merits are as follows:\\[0.5em]

(i): It always formalizes general dependency, local-global structure, and for that it provides grounds to define well-behaved algebraic operations (in our paper, it is manifested with canonical decomposition structure of densities that contains policy ratio's information into base and fiber components so that the FBG gating operator (Definition~\ref{def:fbg}) can exploit upon). It enables treatment for \emph{continuous state-action spaces} without modification, which is also one of the central merit of fiber bundle gating(FBG) and fibration gating hierarchy(FGH) formulations.\\[0.5em]

(ii): It enables the synthetic treatment of heterogeneous
architectures at LLM settings. More specifically, the choice of
base space $B$ determines what constitutes ``global information''
in the fiber bundle model, and the choice $B$ is the primary key
for extending FBG to richer, more heterogeneous settings. With $B
= \mathrm{Tj}^{\theta_{\rm old}} \times \{-1,+1\}$ as in
FiberPO-Trajectory, the base gate already provides refined
control over trajectory-level drift while preserving update
signals from well-behaved individual tokens, manifested in
training stability and policy entropy behaviors.

The same framework accommodates finer-grained structure without
modifying the gating machinery. In
Section~\ref{subsec:fibration_hierarchy}, we generalize FBG to
the Fibration Gating Hierarchy (FGH) by composing fibrations into
a chain, and derive FiberPO-Domain, a four-level instantiation
that enriches the base with domain and prompt group indices to
produce per-domain and per-group aggregates, so the base gate
regulates each domain's drift independently. Even more, the
natural fibration of permitted actions over space of states forms
a natural and possibly highly nontrivial, geometrically rich
fiber bundle, which also fits naturally into the FBG framework,
and may be further exploited for stability control and achieving
better performance. In each case, the fiber $E_b:=
\pi_E^{-1}[\{b\}]\subseteq E$ collects exactly those tokens
belonging to class $b\in B$, the Markov kernel $K$ reflects gated
base densities back to the bundle, and the atomic gate $g_{\rm
  Base}$ operates per class, $g_{\rm Fiber}$ operates per token.
The local-global information decomposition is formalized by the
concept of densities and Markov kernels (see
Appendix~\ref{app:density} and \ref{app:fbg}) and remains
unchanged for all these cases.
\end{remark}

\subsection{Density on a Manifold}
\label{app:density}

\begin{definition}[Density on a vector space]\label{def:density_vector}
A density on an $n$-dimensional vector space $V$ is a function
\[
    \mu: \underbrace{V \times \cdots \times V}_{n} \to \mathbb{R}
\]
satisfying the following condition: if $T: V \to V$ is a linear map, then
\[
    \mu(Tv_1, \cdots , Tv_n) = |\mathrm{det}\,T|\,\mu(v_1, \cdots, v_n).
\]
Let $\mathcal D(V)$ denote the set of all densities on $V$.
\end{definition}

\begin{definition}[Density on a manifold]\label{def:density_manifold}
Let $\cal M$ be a smooth manifold.
The set $\mathcal D{\cal M} = \bigsqcup_{p \in \cal M} \mathcal D(\mathrm T_p \cal M)$ is called the density bundle of $\cal M$.
A section of $\mathcal D \cal M$ is called a density on $\cal M$.

We denote the space of all densities on the manifold $\mathcal M$ as $\mathscr D^1\mathcal M$ or $\mathbf M$.
We denote the space of all global sections of the density bundle of $\cal M$ as $\Gamma(\mathcal M, \mathcal D^1\mathcal M)$.
Thus,
\[
    \Gamma(\mathcal M, \mathcal D^1\mathcal M)=\mathscr D^1\mathcal M\equiv\mathbf M.
\]
The fiber over a point $x\in\mathcal M$ is denoted $\mathcal D^1_x\mathcal M$.
It can be understood as the space of densities localized at this particular point.
\end{definition}

\begin{remark}
Any density $\mu$ defines, at least locally, a smooth signed or complex measure $\nu$ on $\cal M$, so the integral $\int_K \mu = \nu(K)$ is well defined for any compact $K \subset \cal M$, as is $\int_K f \mu = \int_K f \,\mathrm d\nu$ for any $f \in C_c(\cal M)$.
\end{remark}

\subsection{Fiber Bundle Gating: Detailed Exposition}
\label{app:fbg}

We provide a detailed exposition of the Fiber Bundle Gating (FBG) framework introduced in Definition~\ref{def:fbg}.
The FBG instance is specified by the tuple $(E,B,\pi_E,K,g_{\rm Base},g_{\rm Fiber,-};\,\mathcal F,\mathcal R)$, with $\pi_E:E\to B$ being a fiber bundle.
Recall $\mathbf E,\mathbf B$ are the spaces of densities on the bundle space and base space $E, B$, respectively.
The associated FBG gating function $G_{(\pi_E,K,g_{\rm Base},g_{\rm Fiber,-})}:\mathbf E\to\mathbf E$ is defined as:
\[
    G(\sigma):=K\!\left(\Bigl(\bigoplus_{b \in B} g_{{\rm Base},b}\Bigr)(\pi_{E*}\sigma)\right)+\left(\Bigl(\bigoplus_{i \in E} g_{{\rm Fiber},i,\pi_{E*}(\sigma)}\Bigr)\bigl(\sigma - K(\pi_{E*}\sigma)\bigr) \right).
\]
We now discuss each constituent object in detail.

\paragraph{Object 1: Fiber bundle.}
$\pi_E : E\to B$ is a fiber bundle.
The local and global information of interest are associated with the points inside the bundle space and base space, respectively.
This association is formalized by the densities on the total space and base space.

The choice of $(E, B, \pi_E)$ is a design decision in FBG: it determines the granularity at which the model operates, while the bundle projection $\pi_E$ determines the structural decomposition into classes.
Different base spaces yield qualitatively different gating behaviors from the same FBG machinery.
With $B = \mathrm{Tj}^{\theta_{\rm old}} \times \{-1,+1\}$, the base gate controls trajectory-level drift.
The remaining FBG objects ($K$, $g_{\rm Base}$, $g_{\rm Fiber}$, $\mathcal F$, $\mathcal R$) then specialize to the chosen granularity.

\paragraph{Object 2: Fibration decomposition map and recovery map.}
Recall the definition of $\bar{\mathcal X}$, the augmented state-action space, and $\mathbb R_{>0}^{\bar{\mathcal X}}$, the (augmented) policy ratio space (see Appendix~\ref{app:ratio_space}).

To incorporate the policy ratios $r_\bullet$ (the real-world information available during training) into the fiber bundle (which is so far a purely mathematical construct), we need to define a way to associate local information with points in $E$ and global information with points in $B$.
Because ``local'' and ``global information'' are intuitive but imprecise notions, we formalize them as \emph{densities on a manifold}: the fibration decomposition map $\mathcal{F}$ converts policy ratios into densities on the total space $E$, these densities intuitively represent the weight, importance or relevance the corresponding local information has in the global quantities; algebraically, they are unique, canonical forms for the many possible expression of essentially the same global aggregation quantities. The pushforward $\pi_{E*}$ aggregates these to densities on the base $B$ to produce interested global quantities, and the gating operators act on these densities.
This density-based formulation makes the interaction between local and global information precise and compositional.
In summary, we introduce:
\begin{enumerate}
    \item The density spaces $\mathbf E$ and $\mathbf B$ (see Appendix~\ref{app:density}).
    \item The fibration decomposition map $\mathcal F:\mathbb R_{>0}^{\bar{\mathcal X}}\to \mathbf E$, which converts policy ratios $r_\bullet$ to densities on the total space.
    \item The policy ratio recovery map $\mathcal R:\mathbf E\to \mathbb R_{>0}^{\bar{\mathcal X}}$, which converts densities back to policy ratios. $\mathcal R$ is a left-inverse of $\mathcal F$.
\end{enumerate}

\begin{definition}[Local information in FBG]\label{def:local_qty}
Given any point $i\in E$ in the bundle space, the local information $q_i$ associated to $i$ is then represented as $q_i=\mathcal F(r_\bullet)_{i}$, the density $\mathcal F(r_\bullet)$ localized at the point $i$.
\end{definition}

\begin{definition}[Global information in FBG]\label{def:global_qty}
Given any point $b \in B$, the global information $q_b$ associated to $b$ is then represented as $q_b = (\pi_{E*}\mathcal F(r_\bullet))_b$.
Here $\pi_{E*}\mathcal F(r_\bullet)$ is the pushforward of the density $\mathcal F(r_\bullet)\in\mathbf E$ (which consists of local information) by the bundle projection map $\pi_E$.
\end{definition}

\noindent
Finally, the recovery map $\mathcal R:\mathbf E\to \mathbb R_{>0}^{\bar{\mathcal X}}$ retrieves the gated information from the density space back to policy ratios, enabling their use in the surrogate objective.

\paragraph{Object 3: Reflecting Markov kernel.}
$K:\mathbf{B}\to \mathbf{E}$ is a  Markov kernel that reflects densities on the base space to densities on the bundle space, satisfying $\pi_{E*}\circ K=\mathrm{id}_{\mathbf{B}}$.

The reflecting condition has a direct geometric interpretation.
For a density $\sigma \in \mathbf E$ (local information), the pushforward $\pi_{E*}(\sigma) \in \mathbf B$ yields the global information.
The subtraction $\sigma - K(\pi_{E*}\sigma)$ decouples the influence of the global information $\pi_{E*}\sigma$ from $\sigma$ through the kernel $K$.
After this decoupling, the local residual should no longer "contain" global information, i.e., no longer contribute to any global information, equivalent to requiring $\pi_{E*}(\sigma - K(\pi_{E*}\sigma))=0$.
This condition is equivalent to the reflecting property $\pi_{E*}\circ K=\mathrm{id}_{\mathbf{B}}$.

\paragraph{Intuition for the FBG gating function.}
The two terms in $G(\sigma)$ admit the following interpretation:
\begin{enumerate}
    \item The term $K\!\left(\bigl(\bigoplus_{b \in B} g_{{\rm Base},b}\bigr)(\pi_{E*}\sigma)\right)$: given the density (local information) $\sigma \in \mathbf E$, the atomic gating function gates its global portion $\pi_{E*}\sigma$, and the Markov kernel $K$ brings the gated global information back to the total space.
    \item The term $\bigl(\bigoplus_{i \in E} g_{{\rm Fiber},i,\pi_{E*}(\sigma)}\bigr)\bigl(\sigma - K(\pi_{E*}\sigma)\bigr)$: the residual $\sigma - K(\pi_{E*}\sigma)$ captures within-context variation after removing the global contribution. The fiber gating functions $g_{\rm Fiber}$ gate this residual independently at each point.
\end{enumerate}

\paragraph{Object 4: Atomic gating functions.}
Recall $\mathcal D^1_x\mathcal M$ is the space of densities on manifold $\mathcal M$ localized at point $x \in \mathcal M$ (Definition~\ref{def:density_manifold}).
The atomic gating functions are:
\[
    g_{{\rm Base},b}:\mathcal D^1_bB\to\mathcal D^1_bB,\quad g_{{\rm Fiber},i,p_B}:\mathcal D^1_iE\to\mathcal D^1_i E,
\]
defined pointwise, where $b \in B$, $i \in E$, and $p_B \in \mathbf B$.

Note that the fiber gating function $g_{{\rm Fiber}, i,p_B}$ may depend on an extra parameter\footnote{In the current FiberPO formulation, this dependence is not used. However, in general it is natural to introduce the global quantity/information influence on the gating of local quantities. We will be exploring this dependency in the future.}, the base density $p_B \in \mathbf B$.
This reflects the fact that local gating may be conditioned on global context.

\paragraph{FBG gating form.}
Recalling that an FBG instance induces a RGF surrogate objective (Definition~\ref{def:fbg_rgf}):
\[
    \hat J_{(G;\mathcal F,\mathcal R)}(\theta|\theta_{\rm old}):=\sum_{\tau\in \mathrm{Tj}^{\pi_{\theta_{\rm old}}}}\left[ \sum_{t=0}^{T_\tau-1}  {\frac{1}{T_\tau}\, \mathcal R\circ G\circ\mathcal F(r_{\bullet})_{s_t,a_t}\; \hat A_{s_t,a_t} }  \right].
\]

 Using the maps $\mathcal{F}, G, \mathcal{R}$, the final policy ratio transformation is $r_{\bullet}\mapsto \mathcal R\circ G\circ \mathcal F(r_\bullet) \in \mathbb R_{>0}^{\bar{\mathcal X}}$.
For all $r_\bullet \in \mathbb R_{>0}^{\bar{\mathcal X}}$, this transformation leverages the fiber bundle gating framework to treat local and global information in a structured way. 

\paragraph{Extensibility.}
All four objects above are defined relative to the chosen base space $B$.
Extending $B$ to include additional classifying factors (domain, expert index, temporal grouping) naturally extends the FBG instance: the pushforward $\pi_{E*}$ produces finer aggregates, the Markov kernel $K$ reflects finer base densities back to the bundle, and the atomic gates $g_{\rm Base}$, $g_{\rm Fiber}$ operate per class.
Theorem~\ref{thm:fbg_first_order} guarantees first-order agreement with the true RL objective for any such extension, provided the atomic gates satisfy the identity conditions at the on-policy point.
The FiberPO-Trajectory objective derived in Section~\ref{subsec:fiberpo} is the simplest nontrivial instantiation of this framework. The Fibration Gating Hierarchy (Section~\ref{subsubsec:fgh}) generalizes this construction to arbitrary hierarchical depth, and FiberPO-Domain (Section~\ref{subsubsec:domain_fiberpo}) demonstrates the concrete four-level case incorporating domain and prompt group structure.

\paragraph{Global-to-local information exchange of FBG.}
\label{app:fbg_transmit}
A crucial structural advantage of FBG, beyond first-order agreement near on-policy, is that it provides a principled mechanism for \emph{exchanging} global gating effects with individual token ratios.
We explain why this is non-trivial and why naive alternatives might fail.

In the RGF framework (Definition~\ref{def:rgf}), the ratio gating map $\mathcal{G}: \mathbb{R}_{>0}^{\mathcal{E}} \to \mathbb{R}_{>0}^{\mathcal{E}}$ takes the full ratio tuple $r_\bullet$ as input and produces a gated ratio for each entry.
Global quantities---such as the trajectory-level total variation distance $\bar{D}_{\rm TV}^{\rm (tj)}$, or the sequence aggregate ratio $s_\tau^\pm$---are computed from many individual ratios.
While these global quantities are well defined as \emph{functions} of $r_\bullet$, they have no natural counterpart at the level of a single token ratio $r_{s,a}$.
That is, there is no canonical way to ``assign'' or ``attribute'' a global quantity to any individual entry in ratio space.

This creates a fundamental difficulty for any approach that attempts to incorporate global gating directly in ratio space.
Suppose one tries to define a gating map $\mathcal{G}$ that simultaneously enforces a global budget (e.g., a trajectory-level TV bound) and local per-token clipping, without the structured decomposition that FBG provides.
Several problems arise:

\begin{enumerate}
    \item \textbf{Loss of first-order agreement.}
    The global quantity is a nonlinear function of many ratios. Naively incorporating it into per-token clipping changes the Jacobian $\partial \mathcal{G}(r_\bullet)_i / \partial r_j$ at the on-policy point $r_\bullet = \mathbf{1}$.
    Unless the coupling is carefully designed, the surrogate will no longer agree with the true RL objective to first order near on-policy.

    \item \textbf{Uncontrolled gradient directions.}
    When global and local gating are entangled in ratio space, the gradient $\nabla_\theta \hat{J}$ at points $r_\bullet \neq \mathbf{1}$ may point in directions that are undesirable: for instance, the gradient may encourage individual token ratios to move in a direction that \emph{increases} the trajectory-level divergence $\bar{D}_{\rm TV}^{\rm (tj)}$, rather than respecting the trust region.
    This is because the global constraint, when projected onto individual token ratios without a structured decomposition, can create competing objectives between the global budget and local per-token updates.

    \item \textbf{Loss of fine-grained control.}
    Without the orthogonal decomposition that the reflecting condition $\pi_{E*} \circ K = \mathrm{id}_{\mathbf{B}}$ provides, there is no way to gate the \emph{pure} local variation---the part of each token's density that remains after the global contribution has been cleanly removed.
    When global and local information are entangled, the local gate inevitably re-gates global information that has already been gated, or conversely, the global gate inadvertently constrains genuinely local variation.
    This double-counting makes it difficult for the two gates to operate at their intended granularity, potentially compromising the precision of the trajectory-level budget and the locality of per-token control.
\end{enumerate}

On top of having the first-order agreement, FBG avoids the  problems 2 and 3 through the density-based decoupling.
The pushforward $\pi_{E*}$ extracts global information from local densities in a canonical way.
The Markov kernel $K$ reflects the gated global signal back to the total space, and the reflecting condition ensures that the residual $\sigma - K(\pi_{E*}\sigma)$ carries \emph{no} base-level information.
As a result:
\begin{itemize}
    \item The base gate $g_{\rm Base}$ controls the global budget in isolation, operating on the trajectory-level base density $\pi_{E*}\sigma$ without interference from local variation.
    \item The fiber gate $g_{\rm Fiber}$ operates on the \emph{purest} form of local variation: the residual $\sigma - K(\pi_{E*}\sigma)$, from which all global influence has been subtracted.
    This means $g_{\rm Fiber}$ gates each token's deviation from the trajectory aggregate, not its absolute value---the most refined level of local control possible.
    \item Because global and local components are orthogonal ($\pi_{E*}(\sigma - K(\pi_{E*}\sigma)) = 0$), the two gates compose without cross-contamination: 
    Their effects are independent and additive in the density space, guaranteeing that the combined gating inherits the desired properties of each component.
\end{itemize}
In summary, the structured decomposition $\sigma \mapsto K(\pi_{E*}\sigma) + (\sigma - K(\pi_{E*}\sigma))$ is not merely an organizational convenience; it enables non-interfering gating at both the global and local scales, while faithfully exchanging global gating effects with individual token ratios.

\subsection{Proof of First-Order Agreement (Theorem~\ref{thm:fbg_first_order})}
\label{app:fbg_proof}

We prove that the FBG surrogate objective agrees with the true RL objective to first order at $\theta = \theta_{\rm old}$.

\begin{proof}
\textbf{Value agreement.}
At $\theta = \theta_{\rm old}$, all importance ratios equal $\mathbf{1}$, so $\mathcal F(r_\bullet) = \mathcal F(\mathbf{1})$.
The identity conditions on the atomic gating functions state:
\[
    g_{{\rm Fiber},i,p_B}(\mathcal F(\mathbf{1})_i  -  K(\pi_{E*}\mathcal F(\mathbf{1}))_i)=\mathcal F(\mathbf{1})_i - K(\pi_{E*}\mathcal F(\mathbf{1}))_i, \qquad
    g_{{\rm Base},b}(\pi_{E*}\mathcal F(\mathbf{1}))=\pi_{E*}\mathcal F(\mathbf{1}).
\]
We verify that $G(\mathcal F(\mathbf{1})) = \mathcal F(\mathbf{1})$.
The base component gives:
\[
    K\!\left(\Bigl(\bigoplus_{b \in B} g_{{\rm Base},b}\Bigr)(\pi_{E*}\mathcal F(\mathbf{1}))\right)
    = K(\pi_{E*}\mathcal F(\mathbf{1})).
\]
The fiber component gives:
\[
    \Bigl(\bigoplus_{i \in E} g_{{\rm Fiber},i,\pi_{E*}\mathcal F(\mathbf{1})}\Bigr)\bigl(\mathcal F(\mathbf{1}) - K(\pi_{E*}\mathcal F(\mathbf{1}))\bigr)
    = \mathcal F(\mathbf{1}) - K(\pi_{E*}\mathcal F(\mathbf{1})),
\]
where the last equality uses the identity condition on $g_{\rm Fiber}$ applied to each fiber component.
Summing gives $G(\mathcal F(\mathbf{1})) = K(\pi_{E*}\mathcal F(\mathbf{1})) + \mathcal F(\mathbf{1}) - K(\pi_{E*}\mathcal F(\mathbf{1})) = \mathcal F(\mathbf{1})$.
Since $\mathcal R$ is a left-inverse of $\mathcal F$, we obtain $\mathcal R\circ G\circ \mathcal F(\mathbf{1}) = \mathbf{1}$, and the surrogate reduces to the linear surrogate value $J(\theta_{\rm old})$.

\medskip
\noindent\textbf{Gradient agreement.}
Taking the gradient with respect to $\theta$ at $\theta = \theta_{\rm old}$, we need to show that $\nabla_\theta(\mathcal R \circ G \circ \mathcal F)|_{\theta_{\rm old}} = \nabla_\theta(\mathcal R \circ \mathcal F)|_{\theta_{\rm old}}$.
By the chain rule, it suffices to show that the Jacobian of $G$ at the on-policy density $\mathcal F(\mathbf{1})$ is the identity.

The derivative conditions $g'_{{\rm Fiber},i,p_B}(\mathcal F(\mathbf{1})_i-  K(\pi_{E*}\mathcal F(\mathbf{1}))_i)=1$ and $g'_{{\rm Base},b}(\pi_{E*}\mathcal F(\mathbf{1}))=1$ ensure that each atomic gating function has unit derivative at the on-policy point.
Since $G$ is composed of the base and fiber gating applied to the orthogonal decomposition $\sigma = K(\pi_{E*}\sigma) + (\sigma - K(\pi_{E*}\sigma))$, the Jacobian of $G$ at $\mathcal F(\mathbf{1})$ is the identity on both the base and fiber components.
Therefore:
\[
    \nabla_\theta(\mathcal R \circ G \circ \mathcal F)\big|_{\theta_{\rm old}} = \nabla_\theta(\mathcal R \circ \mathcal F)\big|_{\theta_{\rm old}},
\]
which recovers the true policy gradient $\nabla_\theta J(\theta)\big|_{\theta = \theta_{\rm old}}$.
\end{proof}

\subsection{Derivation of FiberPO}
\label{app:fiberpo_derivation}

Recall the APC-Obj objective (in RGF form):
\[
    {\hat J^{{\text{APC-Obj}}}}(\theta |\theta_{\rm old}) =
     \frac{1}{T}\sum_{(s,a,\mathcal I)\in\mathcal E}   \Bigl[\operatorname{clip}\bigl(r_{s,a} - 1,\; \pm \bigl(T_s\delta^{\text{(APC-Obj)}}  - \!\!\!\!\sum_{\substack{(s,a',\mathcal I')\in\mathcal E \\ (s,a',\mathcal I')\ne (s,a,\mathcal I),}}\!\!\!\!|r_{s,a'} - 1|\bigr)\Bigr)+1\Bigr]\hat A_{s,a}.
\], 
where $\mathcal{E} := \{(s,a, \mathcal{I})\} = \bar{\mathcal{X}} := \{(s,a,\tau,t)\}$ is the space of all sampled state--action pairs augmented with trajectory membership $\tau$ and time step $t$; each element is a distinct sampled token, so $|\bar{\cal X}| = T$ and $|\bar{\cal X}_s| = T_s$ (Definition~\ref{def:apc_obj_main}).

The derivation begins from the APC-Obj objective (Section~\ref{subsubsec:apc_obj}), which enforces a per-state TV budget $T_s\delta^{\text{(APC-Obj)}}$ through cross-token interaction terms $\bigl(\sum_{\substack{(s,a',\mathcal I')\in\mathcal E \\ (s,a',\mathcal I')\ne (s,a,\mathcal I),}}\!\!\!\!|r_{s,a'} - 1|\bigr)$.
We transform it into a form in which two scales of information are made explicit: the aggregate drift of a trajectory, measured by the positive and negative aggregate ratios $s_\tau^+, s_\tau^-$ (which reflect the trajectory-level TV distance, as established in Section~\ref{subsec:rgf_apc_obj}), and the per-token ratio $r_{s,a}$ (which captures how much each individual token has moved from the reference policy).
Our goal is to use the APC-Obj objective as a reliable starting point, make reasonable relaxations, and arrive at the FBG form (Definition~\ref{def:fbg}). The resulting objective can then inherit APC-Obj's stability properties and the structural guarantees of the FBG form (first-order agreement, global-local decoupling, extensibility). Figure~\ref{fig:apc_obj_to_fbg} illustrates this derivation roadmap.

\begin{figure}[htbp]
\centering
\includegraphics[width=0.75\textwidth]{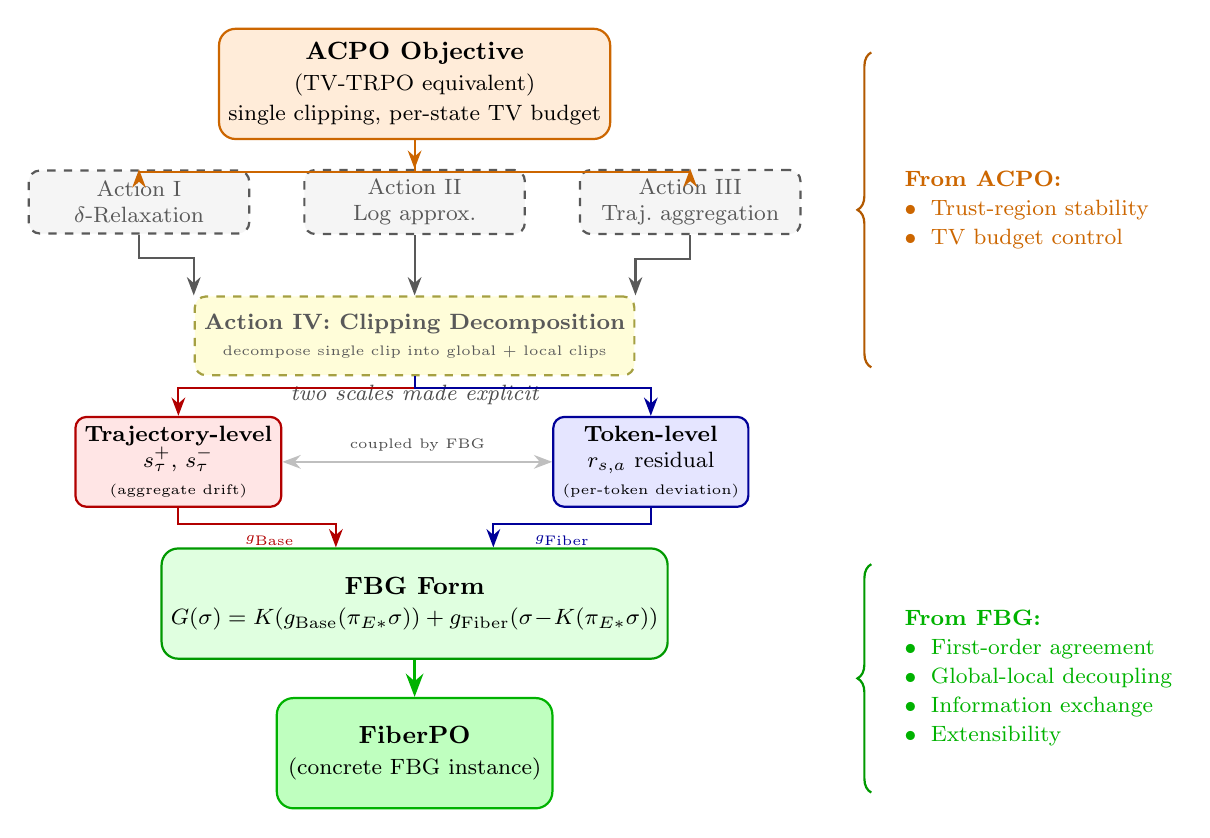}
\caption{Derivation roadmap from APC-Obj to FiberPO.
The APC-Obj objective uses a single clipping with cross-token interaction to enforce a per-state TV budget; the two scales of stability information (trajectory-level aggregate ratios $s_\tau^\pm$ and per-token ratios $r_{s,a}$) are implicitly entangled within this clipping.
Through four successive relaxations, the key step (Derivation~IV) decomposes the single clipping to make the two scales explicit, yielding separate global and local gates that fit naturally into the FBG form.
The resulting FiberPO inherits trust-region stability from APC-Obj and structural guarantees (first-order agreement, global-local decoupling, information exchange, extensibility) from the FBG framework.}
\label{fig:apc_obj_to_fbg}
\end{figure}

\subsubsection{Derivation I: $\delta$-Relaxation}
\medskip\noindent

\label{app:fiberpo_derivation1}

Replace the vanishing $\delta^{\text{(APC-Obj)}}$ with a positive tunable hyperparameter $\delta$.
Since $\delta^{(\rm APC\text{-}Obj)} = \delta^{(\rm TRPO)} = 0$ at $\gamma = 1$ (Theorem~\ref{thm:trpo_vanishing}), this relaxation trades the exact TRPO trust-region guarantee for a controllable approximation: the resulting objective no longer enforces the classical bound, but $\delta$ directly quantifies the departure, and the TV-based stability structure is retained as a design guide (Remark~\ref{rem:trust_region_design}).

\subsubsection{Derivation II: Logarithmic Approximation}
\label{app:fiberpo_derivation2}

Apply the logarithmic approximation $\log r_i\approx r_i-1$ and $1+\operatorname{clip}(\cdots)\approx \exp(\operatorname{clip}(\cdots))$.
This provides two benefits:
\begin{enumerate}
    \item \textbf{Positivity by construction.} Policy ratios must satisfy $r > 0$. The map $\log r \in \mathbb R \mapsto \exp(\cdot) > 0$ guarantees this automatically: all subsequent gating operations---clipping, piecewise-linear gates $g^{\rm agg}$, additions---can be designed as unconstrained operations on $\mathbb R$ and exponentiated back, yielding valid positive ratios without explicit positivity constraints. This opens opportunity to many gating function design, not just a map from positive real line to positive real line.
    \item \textbf{Better algebraic properties} for subsequent simplifications, in particular the additive decomposition of the clipping bound in Derivation~IV.
    \item \textbf{Natural asymmetry.} The common asymmetry of clipping parameters on ratios (where $\epsilon_-<\epsilon_+$) is naturally captured: replacing $\operatorname{clip}(r;\,1-\epsilon_{-},\,1+\epsilon_+)$ by $\exp(\operatorname{clip}(\log r,\,\pm\epsilon))$, the symmetric ``log clipping'' naturally exhibits this asymmetry.
\end{enumerate}
Under this approximation, the cross-token TV budget terms $|r_{s,a'}-1|$ in the APC-Obj objective~\eqref{eq:apc_obj_main_obj} become $|\log r_{s,a'}|$, so the per-state estimated TV divergence $\hat{D}_{\rm TV}(s)$ is approximated by the mean absolute log-ratio, which we call the \emph{absolute log divergence}.
This identification is used in Derivation~IV to interpret the clipping bound as a TV-budget constraint expressed in log space.

\noindent The objective becomes:
\[
    {\hat J}(\theta |\theta_{\rm old}) =
     \frac{1}{T}\sum_{(s,a,\mathcal I)\in\mathcal E} \exp\!\left(  \operatorname{clip}\bigl(\log r_{s,a},\; \pm \bigl(T_s\delta  - \!\!\!\!\sum_{\substack{(s,a',\mathcal I')\in\mathcal E \\ (s,a',\mathcal I')\ne (s,a,\mathcal I)}}\!\!\!\!|\log r_{s,a'}|\bigr)\bigr) \right)  \hat A_{s,a}.
\]

\subsubsection{Derivation III: Sequence-Level Aggregation}
\label{app:fiberpo_derivation3}

Two changes restrict the cross-token coupling to within each trajectory.
First, the summation inside the clip bound changes from all sampled tokens $\sum_{(s,a',\mathcal I')\in\mathcal E,\,(a',\mathcal I')\ne(a,\mathcal I)}$ to only those sharing the same trajectory $\sum_{(s',a',\mathcal I')\in\mathcal E,\,(a',\mathcal I')\ne(a,\mathcal I),\,\tau(\mathcal I')=\tau(\mathcal I)}$.
Second, the per-state TV budget $T_s\delta$ is replaced by a per-trajectory budget $T_{\tau(\mathcal I)}\delta$, distributing the total budget proportionally to trajectory length.
The objective becomes:
\begin{multline*}
    {\hat J}(\theta |\theta_{\rm old}) =
     \frac{1}{|\mathrm{Tj}^{\theta_{\rm old}}|}\sum_{(s,a,\mathcal I)\in\mathcal E} \frac{1}{T_{\tau(\mathcal I)}} \\
     \times\exp\!\left(  \operatorname{clip}\bigl(\log r_{s,a},\; \pm \bigl(T_{\tau(\mathcal I)}\delta  - \!\!\!\!\sum_{\substack{(s',a',\mathcal I')\in\mathcal E \\ (s',a',\mathcal I')\ne (s,a,\mathcal I),\; \tau(\mathcal I')=\tau(\mathcal I)}}\!\!\!\!|\log r_{s',a'}|\bigr)\bigr) \right)  \hat A_{s,a},
\end{multline*}
where $\tau(\mathcal I)$ is the trajectory given by the extra information $\mathcal I$ (see Definition~\ref{def:rgf}) associated with the pair $(s,a,\mathcal I)\in\mathcal E$, and $T_{\tau(\mathcal I)}$ is its length.
As with GRPO, the $1/T_{\tau(\mathcal I)}$ weighting also rewards shorter responses.

\subsubsection{Derivation IV: Clipping Decomposition into Fiber Bundle Gating Form}
\label{app:fiberpo_derivation4}

This is the most important and distinctive derivation step compared to the derivation of PPO, GRPO, and GSPO.
The goal is to decompose the current clipping, which enforces only a trajectory-level TV budget ($\sum_{(s',a',\mathcal I')\in\mathcal E,\;\tau(\mathcal I')=\tau(\mathcal I)}|\log r_{s',a'}|\leq T_{\tau(\mathcal I)}\delta$, i.e.\ the sum of token log-ratios within the same trajectory is bounded), into explicit local and global clipping.
We focus on the clipping term inside the objective:
\[
    \operatorname{clip}\bigl(\log r_{s,a},\; \pm \bigl(T_{\tau(\mathcal I)}\delta  - \!\!\!\!\sum_{\substack{(s',a',\mathcal I')\ne (s,a,\mathcal I) \\ \tau(\mathcal I')=\tau(\mathcal I)}}\!\!\!\!|\log r_{s',a'}|\bigr)\bigr).
\]

\paragraph{Introducing aggregate ratios.}
Define the positive and negative (trajectory) aggregate ratios:
\[
    \log (s^+_\tau) :=\frac{1}{T_\tau}\sum_{t=0}^{T_\tau-1} \max(\log(r_{s_t(\tau),a_t(\tau)}),0),
    \qquad
    \log (s^-_\tau) :=\frac{1}{T_\tau}\sum_{t=0}^{T_\tau-1} \max(\log(r^{-1}_{s_t(\tau),a_t(\tau)}),0),
\]
and denote $l_{s,a}:=\operatorname{sign}(\log r_{s,a})$.
The clipping term can be re-expressed as:
\[
    \operatorname{clip}\bigl(\log r_{s,a},\; \pm (T_{\tau}\delta  - T_\tau\log s^+_\tau-T_\tau\log s^-_\tau+|\log r_{s,a}|)\bigr).
\]

\paragraph{Negative skew of unsplit $\hat{\bar{D}}_{\rm TV}^{(\tau)}$ gating.}
Using a single, unsplit estimate of $\hat{\bar{D}}_{\rm TV}^{(\tau)}$ (Appendix~\ref{app:divergences}) as the global gating quantity introduces a systematic negative skew.
Concretely, the absolute log divergence $\log s^+_\tau + \log s^-_\tau = (1/T_\tau)\sum_{t}|\log r_{s_t,a_t}|$ approximates the trajectory-level TV distance in log-ratio coordinates (Derivation~II, Section~\ref{app:fiberpo_derivation2}).
By definition, $\log s^+_\tau, \log s^-_\tau \geq 0$, so their sum is always nonnegative.
In the FBG framework (Definition~\ref{def:fbg}), local-global decoupling subtracts the pushed-forward aggregate $K(\pi_{E*}\sigma)$ from every fiber.
When this aggregate is the unsplit (nonneg.)\ TV estimate, the subtraction removes a nonnegative quantity from every per-token log-ratio, imparting a systematic negative bias to all fiber residuals.
The remedy is to split by sign into the positive channel $s^+_\tau$ and negative channel $s^-_\tau$: a token with $\log r_{s,a} > 0$ is then gated only by $\log s^+_\tau$, and one with $\log r_{s,a} < 0$ only by $\log s^-_\tau$, so the same-sign component is subtracted rather than the combined nonnegative total.

\paragraph{Per-channel budgets and the $C^- < C^+$ condition.}
After splitting, each channel is gated independently, eliminating the sign-mixing skew.
However, an intrinsic asymmetry between the two channels remains.
Observe that
\[
    \log s^-_\tau - \log s^+_\tau \;=\; -\frac{1}{T_\tau}\sum_{t=0}^{T_\tau-1}\log r_{s_t(\tau),a_t(\tau)},
\]
which is a trajectory-averaged Monte Carlo estimate of the reverse KL divergence $\bar{D}^{\rm (tj)}_{\rm KL}(\pi_{\theta_{\rm old}}\|\pi_\theta)$.
Since this quantity is nonnegative, we have $\log s^-_\tau \geq \log s^+_\tau$ in expectation, so the negative channel generically carries more aggregate drift.
Imposing a symmetric budget $\delta/2$ on each channel therefore insufficiently constrains the faster-drifting negative channel: because $\log s^-$ grows faster, it reaches the shared threshold $\delta/2$ sooner and dominates the base weight $w^{\rm base}_\tau$ (Eq.~\ref{eq:fiberpo_gating_decomp}), skewing the effective trust region toward the negative direction.
To compensate, we decompose $\delta$ into per-channel budgets $C^+$ and $C^-$ satisfying $C^+ + C^- = \delta$, $C^+, C^- > 0$, with the recommendation $C^- < C^+$: the tighter negative budget ensures that the faster-drifting channel reaches rollback earlier, preventing the overall gated ratio $\mathcal G(r_\bullet)_i = w^{\rm base}_\tau \cdot \tilde{r}_i^{\rm fiber}$ (Eq.~\ref{eq:fiberpo_gating_decomp}) from being biased toward policy-ratio-decreasing updates.
The symmetric case $C^+ = C^- = \delta/2$ is recovered as a special case.

We decompose the above into a stricter clipping by clipping positive and negative log ratios separately:
\begin{align*}
    \text{For }\log r_{s,a}> 0\;(l_{s,a}=1):\quad &
    \operatorname{clip}\bigl(\log r_{s,a},\; \pm (T_{\tau}C^+  - T_\tau\log s^+_\tau +\log r_{s,a})\bigr), \\[0.5em]
    \text{For }\log r_{s,a}< 0\;(l_{s,a}=-1):\quad &
    \operatorname{clip}\bigl(\log r_{s,a},\; \pm (T_{\tau}C^-  - T_\tau\log s^-_\tau -\log r_{s,a})\bigr).
\end{align*}
Or equivalently:
\[
    \operatorname{clip}\bigl(\log r_{s,a},\; \pm (T_{\tau}C^{(l_{s,a})}  - T_\tau\log s^{(l_{s,a})}_\tau +l_{s,a}\log r_{s,a})\bigr).
\]

\paragraph{Decomposing aggregate and individual clipping.}
The channel-split clipping above still acts jointly on the aggregate and per-token ratios.
To introduce \emph{local} clipping behavior that constrains each token individually and to further structure the objective into the fiber and base components required by the FBG framework, we decompose the clipping of the aggregate ratio explicitly out of the clipping of individual policy ratios, yielding a stricter clipping (absolute value strictly less than the above), where $l_{s,a}:= \operatorname{sign}(\log r_{s,a})$:
\[
    l_{s,a}\operatorname{clip}(l_{s,a}\log r_{s,a}-\log s_\tau^{(l_{s,a})},\,\pm \epsilon)
    \;+\;
    \operatorname{clip}\bigl(l_{s,a}\log s^{(l_{s,a})}_\tau,\;\pm (T_\tau C^{(l_{s,a})}-T_\tau\log s^{(l_{s,a})}_\tau -( \epsilon - l_{s,a}\log r_{s,a}) )\bigr).
\]
The decomposition naturally introduces an extra constant $\epsilon$, representing the clipping bound for individual policy ratios. 
Now we are ready to define an intermediate object, object $g^{\rm agg}$, which will be used to define $g_{{\rm Base}}$ in FBG, the above is equivalent to:
\begin{equation}\label{eq:fiberpo_decomposed}
    l_{s,a}\,\operatorname{clip}(l_{s,a}\log r_{s,a}-\log s_\tau^{(l_{s,a})},\,\pm \epsilon)
    \;+\;
    l_{s,a}\,g^{\rm agg}\!\left( \log s^{(l_{s,a})}_\tau ,\; \frac{T_\tau C^{(l_{s,a})}-(\epsilon-l_{s,a}\log r_{s,a})}{T_\tau +1},\; T_\tau\right),
\end{equation}
where
\begin{equation}\label{eq:gagg}
    g^{\rm agg}(x,C,k):= \left\{\begin{array}{ll}   x&\text{if }|x|\leq C\\[0.5em]  \operatorname{sign}(x)(k+1)C-kx & \text{if }C<|x|<(1+k^{-1})C  \\[0.5em]  0 &\text{otherwise}   \end{array}\right.
\end{equation}

Removing the nonnegative factor $l_{s,a}\log r_{s,a}$ from $g^{\rm agg}$, and noting that $T_\tau$ is typically large in the LLM context so that $\frac{\epsilon}{T_\tau+1}$ quickly vanishes and $\frac{T_\tau C^{(l_{s,a})}}{T_\tau + 1}\approx C^{(l_{s,a})}$, we obtain the simplified clipping term:
\begin{equation}\label{eq:fiberpo_L3}
    l_{s,a}\,\operatorname{clip}(l_{s,a}\log r_{s,a}-\log s_\tau^{(l_{s,a})},\,\pm \epsilon)
    \;+\;
    l_{s,a}\,g^{\rm agg}\!\left( \log s^{(l_{s,a})}_\tau ,\; C^{(l_{s,a})},\; T_\tau\right).
\end{equation}

\begin{remark}[Three-regime structure of $g^{\rm agg}$]\label{rem:gagg_regimes}
The function $g^{\rm agg}$ that emerges from this decomposition has three regimes: pass-through when $|\log s_\tau^{(l)}| \leq C^{(l)}$, linear rollback when $C^{(l)} < |\log s_\tau^{(l)}| < (1+T_\tau^{-1})C^{(l)}$, and zeroing beyond that threshold.
See Figure~\ref{fig:fiber_weight} for a visualization.
\end{remark}

\paragraph{Fiber Bundle Gating objects.}
The two terms in~\eqref{eq:fiberpo_L3} correspond to fiber and base gating components respectively; we now identify the FBG objects that realize this decomposition.
By defining $E=\bar{\mathcal X}\times\{-1,+1\}$, $(s,a,\tau,t;\,l)\in E$,
\[
    \mathcal F(r_{\bullet})_{(s,a,\tau,t;\,l)}:= \frac{1}{T_\tau}\max(l\log r_{s,a},0)=\mu_{\bar{\mathcal X}}^{(\tau)}\max(l\log r_{s,a},0)\in \mathcal D^1_{(s,a,\tau,t;\,l)}E\simeq \mathbb R
\]
as local information, and $B=\mathrm{Tj}^{\theta_{\rm old}}\times\{-1,+1\}$ with the projection $\pi_E:E\to B$, $\pi_E(s,a,\tau,t;\,l):=(\tau,l)$, one finds that
\[
    \log s^{(l)}_\tau = \pi_{E*}\mathcal F(r_\bullet)\big|_{(\tau,l)\in B}.
\]
The reflecting Markov kernel is defined as $K(\cdot|(\tau,l)):=\mu_{\bar{\mathcal X}}^{(\tau)}$ (see Appendix~\ref{app:estimation}), so that $K((s,a,\tau,t;\,l)|(\tau,l)):=\mu_{\bar{\mathcal X}}^{(\tau)}(\{(s,a,\tau,t)\})=\frac{1}{T_\tau}$.
One can verify that $\pi_{E*} K(p_B)=p_B=\mathrm{id}_{\mathbf B}(p_B)$.

The fibration recovery map is defined as
\[
    \mathcal R(\sigma_\bullet)_{(s,a,\tau,t)}:= \exp\bigl(T_\tau (\sigma_{(s,a,\tau,t;\,+1)}-\sigma_{(s,a,\tau,t;\,-1)})\bigr).
\]
The atomic gating functions are:
\[
    g_{{\rm Fiber},i,p_B}(\sigma_i):= \operatorname{clip}\!\left(\sigma_i,\,\pm \frac{\epsilon}{T_{\tau(i)}}\right),\quad i\in E,
    \qquad
    g_{{\rm Base},(\tau,l)}(\sigma_{(\tau,l)}):= g^{\rm agg}\!\left(\sigma_{(\tau,l)},\,C^{(l)},\,T_{\tau}\right),\quad (\tau,l)\in B.
\]

\paragraph{Verification.}
The expression~\eqref{eq:fiberpo_L3} fits exactly into the fiber bundle gating formalism:
\begin{align*}
    &\operatorname{clip}(l_{s,a}\log r_{s,a}-\log s_\tau^{(l_{s,a})},\,\pm \epsilon) +  g^{\rm agg}\!\left( \log s^{(l_{s,a})}_\tau ,\; C^{(l_{s,a})},\; T_\tau\right)\\
    &\quad= g_{{\rm Fiber},i,\pi_{E*}\sigma}\bigl(\sigma-K(\pi_{E*}\sigma)\bigr)_{i}+K\bigl(g_{{\rm Base},(\tau,l_{s,a})}(\pi_{E*}\sigma)\bigr)_i\\
    &\quad\equiv G(\sigma)_i,
\end{align*}
where $\sigma\equiv\mathcal F(r_\bullet)\in\mathbf E$ and $i\equiv (s,a,\tau,t;\,l_{s,a})\in E$.

The clipping-decomposed objective is therefore in standard Fiber Bundle Gating form:
\[
    \hat J_{(G;\mathcal F,\mathcal R)}(\theta|\theta_{\rm old}):=\sum_{\tau\in \mathrm{Tj}^{\pi_{\theta_{\rm old}}}}\left[ \sum_{t=0}^{T_\tau-1}  {\frac{1}{T_\tau}\, \mathcal R\circ G\circ\mathcal F(r_{\bullet})_{s_t,a_t}\; \hat A_{s_t,a_t} }  \right].
\]

\subsubsection{The FiberPO Objective}
\label{app:fiberpo_objective}

By fully expanding the FBG form, we obtain the FiberPO objective.

\begin{definition}[Fiber-Aware Clipping Policy Optimization (FiberPO) objective]\label{def:fiberpo}
In standard RGF form:
\begin{multline}\label{eq:fiberpo_rgf}
    {\hat J^{{\text{FiberPO}}}}(\theta |\theta_{\rm old}) = \sum_{(s,a,\tau,t)\in\bar{\mathcal X}}\Biggl[ {\frac{1}{{|\mathrm{Tj}^{\theta_{\rm old}}|}}{\frac{1}{{T_\tau}}}  \cdot{\frac{\exp\circ \;g^{\rm agg}( \log s^{+}_\tau , C^+, T_\tau)}{\exp\circ \;g^{\rm agg}( \log s^{-}_\tau , C^-, T_\tau)}}} \\
    \times\frac{
    {\text{logclip}}\bigl({{\left( {{{(s_\tau^{(l_{s,a})})}^{ - l_{s,a}}}r_{s,a}} \right)}},{\epsilon}\bigr)
    }{
    {\text{logclip}}\bigl({{\left( {{{(s_\tau^{(-l_{s,a})})}^{  -l_{s,a}}}} \right)}},{\epsilon}\bigr)
    } \cdot \hat A_{s,a} \Biggr],
\end{multline}

where:
\begin{itemize}
    \item $l_{s,a}:= \operatorname{sign}(\log r_{s,a})$.
    \item $\log (s^+_\tau) :=\frac{1}{T_\tau}\sum_{t=0}^{T_\tau-1} \max(\log(r_{s_t(\tau),a_t(\tau)}),0)$,
    \item $\log (s^-_\tau) :=\frac{1}{T_\tau}\sum_{t=0}^{T_\tau-1} \max(\log(r^{-1}_{s_t(\tau),a_t(\tau)}),0)$,
    \item $\operatorname{logclip}(x,a):=\exp(\operatorname{clip}(\log (x),\,\pm a))$,
    \item $g^{\rm agg}(x,C,k)$ is as defined in~\eqref{eq:gagg},
    \item $\mathcal E=\bar{\mathcal X}$ is the augmented state-action pair space (see Appendix~\ref{app:estimation}),
    \item $C^+$ and $C^-$ are the per-channel trust-region budgets with $C^+ + C^- = \delta$; $\epsilon$ and $\delta$ are hyperparameters.
\end{itemize}
\end{definition}

\subsection{FiberPO Objective in LLM Notation}
\label{app:fiberpo_llm_notation}

We restate the FiberPO objective (Definition~\ref{def:fiberpo_main}) using standard LLM notation $(g,j,i)$, where $g$ indexes the query group, $j$ indexes the $j$-th trajectory within group $g$, and $i$ indexes the $i$-th token within trajectory $j$.
The translation from RL notation is $\tau,t \mapsto g,j,i := s_0(\tau),\tau,t$ (see Appendix~\ref{app:llm_rl_notation}).

The FiberPO objective in LLM notation is:
\begin{multline}\label{eq:fiberpo_main_llm}
    {\hat J^{{\text{FiberPO}}}}(\theta |\theta_{\rm old}) = {\mathbb E}_{g\sim{\mathcal D}}\Biggl[ {\frac{1}{{|\mathrm{Tj}^{\theta_{\rm old}}(g)|}}\sum_{j\in\mathrm{Tj}^{\theta_{\rm old}}(g)} {\frac{1}{{T_j^{(g)}}}}} \sum_{i = 1}^{T_j^{(g)}} {\frac{\exp\circ \;g^{\rm agg}( \log s^{(g)+}_{j} , C^+, T_j^{(g)})}{\exp\circ \;g^{\rm agg}( \log s^{(g)-}_{j} , C^-, T_j^{(g)})}} \\
    \times\frac{
    {\text{logclip}}\bigl({{\left( {{{(s_j^{(g,\,l_{j,i}^{(g)})})}^{ - l_{j,i}^{(g)}}}r_{j,i}^{(g)}} \right)}},{\epsilon}\bigr)
    }{
    {\text{logclip}}\bigl({{\left( {{{(s_j^{(g,\,-l_{j,i}^{(g)})})}^{  -l_{j,i}^{(g)}}}} \right)}},{\epsilon}\bigr)
    } \cdot \hat A_{j,i}^{(g)} \Biggr],
\end{multline}
where $l_{j,i}^{(g)}:=\operatorname{sign}(\log r_{j,i}^{(g)})$, and the constituent quantities are:
\begin{itemize}
    \item Positive and negative trajectory aggregate ratios:
    \[
        \log s^{(g)+}_{j} :=\frac{1}{T^{(g)}_j}\sum_{i=0}^{T_j^{(g)}-1} \max(\log r^{(g)}_{j,i},\,0), \qquad
        \log s^{(g)-}_{j} :=\frac{1}{T^{(g)}_j}\sum_{i=0}^{T_j^{(g)}-1} \max(-\log r^{(g)}_{j,i},\,0).
    \]
    \item Log-clipping function: $\operatorname{logclip}(x,\epsilon):=\exp(\operatorname{clip}(\log x,\,\pm \epsilon))$.
    \item Aggregate gating function: $g^{\rm agg}(x,C,k)$ as defined in~\eqref{eq:gagg_main}.
    \item $C^+$ and $C^-$ are the per-channel trust-region budgets with $C^+ + C^- = \delta$.
    \item $T_j^{(g)}$ is the sequence length of the $j$-th trajectory in group $g$; $\epsilon$ and $\delta$ are hyperparameters.
\end{itemize}

\subsection{FiberPO-Domain: FGH Objects}
\label{app:fiberpo_domain_fgh}

The FiberPO-Domain objective~\eqref{eq:domain_fiberpo} can equivalently be written
in the FGH gating form~\eqref{eq:fgh_objective}:
\begin{equation}\label{eq:domain_fiberpo_rgf}
\hat{J}^{\text{FiberPO-Domain}}(\theta|\theta_{\rm old})
= \sum_{i \in \bar{\mathcal{X}}} \frac{1}{|\mathrm{Tj}^{\theta_{\rm old}}|}
\,\frac{1}{T_\tau}\; \mathcal{R} \circ G \circ \mathcal{F}(r_\bullet)_i \;\hat{A}_i,
\end{equation}
where $\mathcal{G}(r_\bullet)_i = w_i^{\rm Base} \cdot
\tilde{r}_i^{\rm Fiber} = \mathcal{R} \circ G \circ
\mathcal{F}(r_\bullet)_i$ corresponds to the FGH case $n = 3$.
The constituent FGH objects are:

\begin{enumerate}
    \item \textbf{Fibration hierarchy ($n{=}3$).}
    The four strata are:
    \begin{alignat*}{2}
    B_0 &:= \mathrm{Domain} \times \{-1,+1\}, &\qquad
    B_1 &:= \mathrm{PromptGroup} \times \{-1,+1\}, \\
    B_2 &:= \mathrm{Tj}^{\theta_{\rm old}} \times \{-1,+1\}, &\qquad
    B_3 \equiv E &:= \bar{\mathcal{X}} \times \{-1,+1\},
    \end{alignat*}
    with fibrations $\pi_3(s,a,\tau,t;\, l) := (\tau, l)$,
    $\pi_2(\tau, l) := (g_\tau, l)$,
    $\pi_1(g, l) := (D_g, l)$.

    \item \textbf{Reflecting Markov kernels.}
    \begin{align*}
    K_0(g,l \mid D,l') &:= \frac{|\mathrm{Tj}^{\theta_{\rm old}}(g)|}
    {|\mathrm{Tj}^{\theta_{\rm old}}(D)|}\,
    \mathbb{I}_{g \in D}\;\mathbb{I}_{l=l'}, \\
    K_1(\tau,l \mid g,l') &:= \frac{1}{|\mathrm{Tj}^{\theta_{\rm old}}(g)|}\,
    \mathbb{I}_{\tau \in g}\;\mathbb{I}_{l=l'}, \\
    K_2(i,l \mid \tau,l') &:= \frac{1}{T_\tau}\,
    \mathbb{I}_{i \in \tau}\;\mathbb{I}_{l=l'}.
    \end{align*}

    \item \textbf{Atomic gating functions.}
    At the domain level ($k{=}0$):
    \[
    g_{0,(D,l)}(\sigma_0) := {|\mathrm{Tj}^{\theta_{\rm old}}(D)|}\;
    g^{\rm agg}\!\left(\frac{1}{|\mathrm{Tj}^{\theta_{\rm old}}(D)|}\,(\sigma_0)_{(D,l)},\;
    C^{(l)},\; T_D\right).
    \]
    At the prompt group level ($k{=}1$):
    \[
    g_{1,p_{<1},(g,l)}(\sigma_1) := {|\mathrm{Tj}^{\theta_{\rm old}}(g)|}\;
    g^{\rm agg}\!\left(\frac{1}{|\mathrm{Tj}^{\theta_{\rm old}}(g)|}\,(\sigma_1)_{(g,l)},\;
    C^{(l)},\; T_g\right).
    \]
    At the trajectory level ($k{=}2$):
    \[
    g_{2,p_{<2},(\tau,l)}(\sigma_2) := g^{\rm agg}\!\left((\sigma_2)_{(\tau,l)},\;
    C^{(l)},\; T_\tau\right).
    \]
    At the token level ($k{=}3$):
    \[
    g_{3,p_{<3},(i,l)}(\sigma) := \frac{1}{T_{\tau_i}}\,
    \operatorname{clip}\!\left(T_{\tau_i}\,\sigma_{(i,l)},\; \epsilon\right).
    \]

    \item \textbf{Fibration decomposition and recovery.}
    $\mathcal{F}(r_\bullet)_{i,l} := \frac{\mathbb{I}_{l_i=l}}{T_{\tau_i}}\,
    l_i \log r_i$, \quad
    $\mathcal{R}(\sigma)_i := \exp\!\left(T_{\tau_i}\,
    (\sigma_{i,+1} - \sigma_{i,-1})\right)$.
\end{enumerate}

%% file: references.bib
@article{yu2025dapo,
  title={Dapo: An open-source llm reinforcement learning system at scale},
  author={Yu, Qiying and Zhang, Zheng and Zhu, Ruofei and Yuan, Yufeng and Zuo, Xiaochen and Yue, Yu and Dai, Weinan and Fan, Tiantian and Liu, Gaohong and Liu, Lingjun and others},
  journal={arXiv preprint arXiv:2503.14476},
  year={2025}
}

@article{shao2024deepseekmath,
  title={Deepseekmath: Pushing the limits of mathematical reasoning in open language models},
  author={Shao, Zhihong and Wang, Peiyi and Zhu, Qihao and Xu, Runxin and Song, Junxiao and Bi, Xiao and Zhang, Haowei and Zhang, Mingchuan and Li, YK and Wu, Yang and others},
  journal={arXiv preprint arXiv:2402.03300},
  year={2024}
}

@article{zheng2025group,
  title={Group sequence policy optimization},
  author={Zheng, Chujie and Liu, Shixuan and Li, Mingze and Chen, Xiong-Hui and Yu, Bowen and Gao, Chang and Dang, Kai and Liu, Yuqiong and Men, Rui and Yang, An and others},
  journal={arXiv preprint arXiv:2507.18071},
  year={2025}
}

@inproceedings{schulman2015trust,
  title={Trust region policy optimization},
  author={Schulman, John and Levine, Sergey and Abbeel, Pieter and Jordan, Michael and Moritz, Philipp},
  booktitle={International Conference on Machine Learning},
  pages={1889--1897},
  year={2015},
  organization={PMLR}
}

@article{kim2022adaptive,
  title={Adaptive discount factor for deep reinforcement learning in continuing tasks with uncertainty},
  author={Kim, MyeongSeop and Kim, Jung-Su and Choi, Myoung-Su and Park, Jae-Han},
  journal={Sensors},
  volume={22},
  number={19},
  pages={7266},
  year={2022},
  publisher={MDPI}
}

@article{schulman2017proximal,
  title={Proximal policy optimization algorithms},
  author={Schulman, John and Wolski, Filip and Dhariwal, Prafulla and Radford, Alec and Klimov, Oleg},
  journal={arXiv preprint arXiv:1707.06347},
  year={2017}
}

@inproceedings{kakade2002approximately,
  title={Approximately optimal approximate reinforcement learning},
  author={Kakade, Sham and Langford, John},
  booktitle={International Conference on Machine Learning},
  pages={267--274},
  year={2002}
}

@article{ouyang2022training,
  title={Training language models to follow instructions with human feedback},
  author={Ouyang, Long and Wu, Jeffrey and Jiang, Xu and Almeida, Diogo and Wainwright, Carroll and Mishkin, Pamela and Zhang, Chong and Agarwal, Sandhini and Slama, Katarina and Ray, Alex and others},
  journal={Advances in Neural Information Processing Systems},
  volume={35},
  pages={27730--27744},
  year={2022}
}

@inproceedings{wang2020truly,
  title={Truly Proximal Policy Optimization},
  author={Wang, Yuhui and He, Hao and Tan, Xiaoyang},
  booktitle={Proceedings of the 35th Conference on Uncertainty in Artificial Intelligence},
  pages={113--122},
  year={2020},
  organization={PMLR}
}
